\theoremstyle{plain}
\newtheorem{corollary}{Corollary}[section]
\newtheorem{lemma}{Lemma}[section]
\newtheorem{theorem}{Theorem}[section]
\newtheorem{proposition}{Proposition}[section]
\newtheorem{definition}{Definition}[section]
\theoremstyle{remark}
\newtheorem{assumption}{Assumption}[section]
\newtheorem{remark}{Remark}[section]
\title{Exact Gradients for Stochastic Spiking Neural Networks Driven by Rough Signals}
\author{%
  Christian Holberg \\
  Department of Mathematics \\
  Univervsity of Copenhagen \\
  \texttt{c.holberg@math.ku.dk} \\
  \And 
  Cristopher Salvi \\
  Department of Mathematics \\
  Imperial College London \\
  \texttt{c.salvi@imperial.ac.uk}
}
\begin{document}

\maketitle


\begin{abstract}
We introduce a mathematically rigorous framework based on rough path theory to model stochastic spiking neural networks (SSNNs) as stochastic differential equations with event discontinuities (Event SDEs) and driven by càdlàg rough paths. Our formalism is general enough to allow for potential jumps to be present both in the solution trajectories as well as in the driving noise. We then identify a set of sufficient conditions ensuring the existence of pathwise gradients of solution trajectories and event times with respect to the network's parameters and show how these gradients satisfy a recursive relation. Furthermore, we introduce a general-purpose loss function defined by means of a new class of signature kernels indexed on càdlàg rough paths and use it to train SSNNs as generative models. We provide an end-to-end autodifferentiable solver for Event SDEs  and make its implementation available as part of the \texttt{diffrax} library. Our framework is, to our knowledge, the first enabling gradient-based training of SSNNs with noise affecting both the spike timing and the network's dynamics. 
\end{abstract}

\section{Introduction}

Stochastic differential equations exhibiting event discontinuities (Event SDEs) and driven by noise processes with jumps are an important modelling tool in many areas of science. One of the most notable examples of such systems is that of stochastic spiking neural networks (SSNNs). Several models for neuronal dynamics have been proposed in the computational neuroscience literature with the \emph{stochastic leaky integrate-and-fire} (SLIF) model being among the most popular choices \citep{gerstner2002spiking, wunderlich2021event}. In its simplest form, given some continuous input current \(i_t\) on \([0, T]\), the dynamics of a single SLIF neuron consist of an Ornstein-Uhlenbeck process describing the membrane potential as well as a threshold for spike triggering and a resetting mechanism \citep{lansky2008review}. In particular, between spikes, the dynamics of the membrane potential \(v_t\) is given by the following SDE
\begin{equation}\label{eqn:SLIF}
    dv_t = \mu\left(i_t - v_t\right)dt + \sigma d B_t,
\end{equation}
where \(\mu>0\) is a parameter and \(B_t\) is a standard Brownian motion. The neuron spikes whenever the membrane potential $v$ hits the threshold \(\psi > 0\) upon which $v$ is reset to $0$. Alternatively, one can model the spike times as a Poisson process with intensity \(\lambda:\mathbb{R}\rightarrow\mathbb{R}_+\) depending on the membrane potential \(v_t\). A common choice is \(\lambda(v)=\exp((v - \psi)/\beta)\) \citep{pfister2006optimal, jimenez2014stochastic, kajino2021differentiable, jang2022multisample}.


A notorious issue for calibrating Event SDEs such as SSNNs is that the implicitly defined event discontinuities, e.g., the spikes, make it difficult to define derivatives of the solution trajectories and of the event times with respect to the network's parameters using classical calculus rules. This issue is exacerbated when the dynamics are stochastic in which case the usual argument relying on the implicit function theorem, used for instance in \cite{chen2020learning, jia2019neural}, is no longer valid. 

\subsection{Contributions} \label{sec: contributions}

In this paper, we introduce a mathematically rigorous framework to model SSNNs as SDEs with event discontinuities and driven by càdlàg rough paths, without any prior knowledge of the timing of events. The mathematical formalism we adopt is that of \emph{rough path theory} \cite{lyons1998differential}, a modern branch of stochastic analysis providing a robust solution theory for stochastic dynamical systems driven by noisy, possibly discontinuous, \emph{rough signals}. Although Brownian motion is a prototypical example, these signals can be far more irregular (or rougher) than semimartingales \cite{friz2010multidimensional, friz2020course, lyons2007differential}. 

Equipped with this formalism, we proceed to identify sufficient conditions under which the solution trajectories and the event times are differentiable with respect to the network's parameters and obtain a recursive relation for the exact pathwise gradients in \Cref{thm: auto_diff}. This is a strict generalization of the results presented in \cite{chen2020learning} and \cite{jia2019neural} which only deal with ordinary differential equations (ODEs). Furthermore, we define \emph{Marcus signature kernels} as extensions of continuous signature kernels \cite{salvi2021signature} to càdlàg rough paths and show their characteristicness. We then make use of this class of kernels indexed on discontinuous trajectories to define a general-purpose loss function enabling the training of SSNNs as generative models. We provide an end-to-end autodifferentiable solver for Event SDEs (Algorithm \ref{alg: autodiff_erde}) and make its implementation available as part of the \texttt{diffrax} library \citep{kidger2021on}.

Our framework is, to our knowledge, the first allowing for gradient-based training
of a large class of SSNNs where a noise process can be present in both the spike timing and the network’s dynamics. In addition, we believe this work is the first enabling the computation of exact gradients for classical SNNs whose solutions are approximated via a numerical solver (not necessarily based on a Euler scheme). In fact, previous solutions are based either on surrogate gradients \citep{neftci2019surrogate} or follow an optimise-then-discretise approach deriving adjoint equations \citep{wunderlich2021event}, the latter yielding exact gradients only in the scenario where solutions are available in closed form and not approximated via a numerical solver. Finally, we discuss how our results lead to bioplausible learning algorithms akin to \emph{e-prop} \citep{bellec2020solution}.


\section{Related work}

\paragraph{Neural stochastic differential equations (NSDEs)} The intersection between differential equations and deep learning has become a topic of great interest in recent years. A neural ordinary differential equation (NODE) is an ODE of the form $\mathrm{d} y_t = f_\theta(y_t) dt$ started at $y_0 \in \mathbb R^e$ using a parametric Lipschitz vector field $f_\theta: \mathbb{R}^e \to \mathbb{R}^e$, usually given by a neural network \cite{chen2018neural}. Similarly, a neural stochastic differential equation (NSDE) is an SDE of the form $\mathrm{d} y_t = \mu_\theta(y_t) dt + \sigma_\theta(y_t)dB_t$ driven by a $d$-dimensional Brownian motion $B$, started at $y_0 \in \mathbb R^e$, and with parametric vector field $\mu_\theta: \mathbb{R}^e \to \mathbb{R}^e$ and $\sigma_\theta: \mathbb{R}^e \to \mathbb{R}^{e \times d}$ that are Lip$^{1}$ and Lip$^{2+\epsilon}$ continuous respectively\footnote{These are standard regularity conditions to ensure existence and uniqueness of a strong solution.}. Rough path theory offers a way of treating ODEs, SDEs, and more generally differential equations driven by signals or arbitrary (ir)regularity, under the unified framework of \emph{rough differential equations} (RDEs) \cite{morrill2021neural, hoglund2023neural}. For an account on applications of rough path theory to machine learning see \cite{cass2024lecture, fermanian2023new, salvi2021rough}. 

\paragraph{Training techniques for NSDEs} Training a NSDE amounts to minimising over model parameters an appropriate notion of statistical divergence between a distribution of continuous trajectories generated by the NSDE and an empirical distribution of observed sample paths. Several approaches have been proposed in the literature, differing mostly in the choice of discriminating divergence. SDE-GANs, introduced in \cite{kidger2021neural}, use the $1$-Wasserstein distance to train a NSDE as a Wasserstein-GAN \cite{arjovsky2017wasserstein}. 
Latent SDEs  \cite{li2020scalable} train a NSDE with respect to the KL divergence via variational inference and can be interpreted as variational autoencoders.
In \cite{issa2023non} the authors propose to train NSDEs non-adversarially using a class of maximum mean discrepancies (MMD) endowed with signature kernels \cite{kiraly2019kernels, salvi2021signature}. Signature kernels are a class of characteristic kernels indexed on continuous paths that have received increased attention in recent years thanks to their efficiency for handling path-dependent problems \citep{lemercier2021distribution, salvi2021higher, cochrane2021sk, lemercier2021siggpde, cirone2023neural, pannier2024path, manten2024signature}. For a treatment of this topic we refer the interested reader to \citep[Chapter 2]{cass2024lecture}. These kernels are not  applicable to sample trajectories of SSNNs because of the lack of continuity.

\paragraph{Backpropagation through NSDEs} Once a choice of discriminator has been made, training NSDEs amounts to perform backpropagation through the SDE solver. There are several ways to do this. The first option is simply to backpropagate through the solver's internal operations. This method is known as \emph{discretise-then-optimise}; it is generally speaking fast to evaluate and produces accurate gradients, but it is memory-inefficient, as every internal operation of the solver must be recorded. A second approach, known as \emph{optimise-then-discretise}, computes gradients by deriving a backwards-in-time differential equation, the \emph{adjoint equation}, which is then solved numerically by another call to the solver. Not storing intermediate quantities during the forward pass enables model training at a memory cost that is constant in depth. Nonetheless, this approach produces less accurate gradients and is usually slower to evaluate because it requires recalculating the forward solutions to perform the backward pass. A third way of backpropagating through NDEs is given by \emph{algebraically reversible solvers}, offering both memory and accuracy efficiency. We refer to \cite{kidger2021on} for further details.

\paragraph{Differential equations with events} Many systems are not adequately modelled by continuous differential equations because they experience jump discontinuities triggered by the internal state of the system. Examples include a bouncing ball or spiking neurons. Such systems are often referred to as \emph{(stochastic) hybrid systems} \citep{henzinger1996theory, lygeros2010stochastic}. When the differential equation is an ODE, there is a rich literature on sensitivity analysis aimed at computing derivatives using the implicit function theorem \citep{corner2019modeling, corner2020adjoint}. If, additionally, the vector fields describing the hybrid system are neural networks, \cite{chen2020learning} show that NODEs solved up until first event time can be implemented as autodifferentiable blocks and \cite{jia2019neural} derive the corresponding adjoint equations. Nonetheless, none of these works cover the more general setting of SDEs. The only work, we are familiar with, dealing with sensitivity analysis in this setting is \cite{pakniyat2016stochastic}, although focused on the problem of optimal control.

\paragraph{Training techniques for SNNs} Roughly speaking, these works can be divided into two strands. The first, usually referred to as \emph{backpropagation through time} (BPTT), starts with a Euler approximation of the SNN and does backpropagation by unrolling the computational graph over time; it then uses surrogate gradients as smooth approximations of the gradients of the non-differentiable terms. \citep{zenke2021remarkable, neftci2019surrogate, ma2023exploiting}. This approach is essentially analogous to discretise-then-optimise where the backward pass uses custom gradients for the non-differentiable terms. The second strand computes exact gradients of the spike times using the implicit function theorem. These results are equivalent to optimise-then-discretise and can be used to define adjoint equations as in \cite{wunderlich2021event} or to derive forward sensitivities \cite{lee2023exact}. However, we note that, unless solution trajectories and spike times of the SNN are computed exactly, neither method provides the actual gradients of the implemented solver. Furthermore, the BPTT surrogate gradient approach only covers the Euler approximation whereas many auto-differentiable differential equation solvers are available nowadays, e.g. in \texttt{diffrax}. Finally, there is a lot of interest in developing bioplausible learning algorithms where weights can be updated locally and in an online fashion. Notable advances in this direction include \citep{bellec2020solution, xiao2022online}. To the best of our knowledge, none of these works cover the case of stochastic SNNs where the neuronal dynamics are modeled as SDEs instead of ODEs.

\section{Stochastic spiking neural networks as Event SDEs} \label{sec: ERDE}

We shall in this paper be concerned with SDEs where solution trajectories  experience jumps triggered by implicitly defined events, dubbed \emph{Event SDEs}. The prototypical example that we come back to throughout is the SNN model composed of SLIF neurons. Here the randomness appears both in the inter-spike dynamics as well as in the firing mechanism. To motivate the general definitions and concepts we start with an informal introduction of SSNNs.

\subsection{Stochastic spiking neural networks} \label{sec: SSNN}

To achieve a more bioplausible model of neuronal behaviour, one can extend the simple deterministic LIF model by adding two types of noise: a diffusion term in the differential equation describing inter-spike behaviour \citep{lansky2008review} and stochastic firing \citep{pfister2006optimal, kajino2021differentiable}. That is, the potential is modelled by eq. (\ref{eqn:SLIF}).
Instead of firing exactly when the membrane potential hits a set threshold, we model the spike times (event times) by an inhomogenous Poisson process with intensity \(\lambda:\mathbb{R}^e\rightarrow\mathbb{R}_+\) which is assumed to be bounded by some constant \(C > 0\). This can be phrased as an Event SDE (note that this is essentially the reparameterisation trick) by introducing the additional state variable \(s_t\) satisfying
\[
ds_t = \lambda(v_{t-})dt, \quad s_0 = \log u
\]
where \(u\sim \text{Unif}(0, 1)\). The neuron spikes whenever \(s_t\) hits $0$ from below at which point the membrane potential is reset to a resting level and we sample a new initial condition for \(s_t\). We can denote this first spike time by \(\tau_1\) and repeat the procedure to generate a sequence of spike times \(\tau_1 < \tau_2 < ...\) In practice, we reinitialize \(s_t\) at \(\log u - \alpha\) for some \(\alpha > 0\). It can then be shown that
\[
\mathbb{P}\left(t < \tau_{n+1} | \mathcal{F}_{\tau_n}\right) = \min\left\{1, \exp\left(\alpha - \int_{\tau_n}^t \lambda(v_{t-})dt\right)\right\} \quad \text{for } t\in [\tau_n, \tau_{n+1}).
\]
It follows that \(\tau_{n+1}-\tau_n \ge \alpha / C\) a.s., i.e. \(\alpha\) controls the refractory period after spikes, a large value indicating a long resting period.

We can then build a SSNN by connecting such SLIF neurons in a network. In particular, apart from the membrane potential, we now also model the input current of each neuron as affected by the other neurons in the network. Let \(K \ge 1\) denote the total number of neurons. We model neuron \(k\in[K]\) be the three dimensional vector \(y^k = (v^k, i^k, s^k)\) the dynamic of which in between spikes is given by
\begin{align}
    dv_{t}^k = \mu_1\left(i_{t}^k - v_{t}^k\right) dt + \sigma_1dB_{t}^k, \quad di_{t}^k = -\mu_2 i_{t}^k dt + \sigma_2 dB_{t}^k, \quad ds_{t}^k = \lambda(v_{t}^k; \xi) dt, \label{eq: nssn}
\end{align}
where \(B^k\) is a standard two-dimensional Brownian motion, \(\sigma = (\sigma_1, \sigma_2)\in\mathbb{R}^{2\times 2}\), \(\mu = (\mu_1, \mu_2)\in\mathbb{R}^2\), and \(\lambda(\cdot; \xi):\mathbb{R}\rightarrow\mathbb{R}_+\) is an intensity function. As before, neuron \(k\) fires (or spikes) whenever \(s^k\) hits zero from below. Apart from resetting the membrane potential, this event also causes spikes to propagate through the network in a such a way that a spike in neuron \(k\) will increment the input current of neuron \(j\) by \(w_{kj}\). Here \(w\in\mathbb{R}^{K\times K}\) is a matrix of weights representing the synaptic weights in the neural network. If one is only interested in specific network architectures such as, e.g., feed-forward, this can be achieved by fixing the appropriate entries in \(w\) at 0.

As presented here, there is no way to model information coming into the network. But this would only require a minor change. Indeed, by adding a suitable control term to eq. (\ref{eq: nssn}) we can model all relevant scenarios. Since this does not change the theory in any meaningful way (the general theory in Appendix \ref{app: proofs} covers RDEs so an extra smooth control is no issue), we only discuss the more simple model given without any additional input currents.

\subsection{Model definition}

\begin{definition}[Event SDE] \label{def: esde_sol}
    Let $N \in \mathbb N$ be the number of events. Let $y_0 \in \mathbb R^e$ be an initial condition. Let \(\mu:\mathbb{R}^e\rightarrow\mathbb{R}^e\) and \(\sigma:\mathbb{R}^e\rightarrow \mathbb{R}^{e\times d}\) be the drift and diffusion vector fields. Let \(\mathcal{E}:\mathbb{R}^e\rightarrow\mathbb{R}\) and \(\mathcal{T}: \mathbb{R}^e \rightarrow \mathbb{R}^e\) be an event and transition function respectively. We say that \(\left(y, (\tau_n)_{n=1}^N\right)\) is a solution to the Event SDE parameterised by $(y_0, \mu, \sigma, \mathcal{E}, \mathcal{T}, N)$  if $y_T = y^N_T$,
    \begin{equation} 
        y_t = \sum_{n=0}^N y^n_t \mathbf{1}_{[\tau_n, \tau_{n+1})}(t),  \quad  \tau_{n} = \inf\left\{t > \tau_{n-1} : \mathcal{E}(y^{n-1}_{t}) = 0 \right\},
    \end{equation}
    with $\mathcal{E}(y^n_{\tau_n}) \neq 0$ and
    \begin{align} 
        dy_t^0 &= \mu(y^0_t)dt + \sigma(y^0_{t}) dB_t, \quad \text{started at } \ y^0_{0} = y_0, \label{eq: sol_initial} \\
        dy_t^n &= \mu(y^n_t)dt + \sigma(y^n_{t}) dB_t, \quad \text{started at } \ y^n_{\tau_n} = \mathcal{T}\left(y^{n-1}_{\tau_{n}}\right), \label{eq: sol_inter_event} 
    \end{align}
    where \(B_t\) is a \(d\)-dimensional Brownian motion and (\ref{eq: sol_initial}), (\ref{eq: sol_inter_event}) are Stratonovich SDEs.
\end{definition}

In words, we initialize the system at \(y_0\), evolve it using \eqref{eq: sol_initial} until the first time \(\tau_1\) at which an event happens \(\mathcal{E}(y^0_{\tau_1})=0\). We then transition the system according to \(y^1_{\tau_1}=\mathcal{T}\left(y^0_{\tau_1-}\right)\) and evolve it according to \eqref{eq: sol_inter_event} until the next event is triggered. We note that \Cref{def: esde_sol} can be generalised to multiple event and transition functions. Also, the transition function can be randomised by allowing it to have an extra argument \(u\sim \text{Unif}([0, 1])\). As part of the definition we require that there are only finitely many events and that an event is not immediately triggered upon transitioning. 

Existence of strong solutions to Event SDEs driven by continuous semimartingales has been studied in \citep[Theorem 5.2]{krystul2005generalised} and \citep{krystul2006stochastic}.  Under  sufficient regularity of \(\mu\) and \(\sigma\), a unique solution to \eqref{eq: sol_initial} exists. We need the following additional assumptions:

\begin{assumption} \label{ass: event_finite}
    There exists \(c>0\) such that for all \(s\in(0, T)\) and \(a\in \text{im}\; \mathcal{T}\) it holds that \(\inf\{t > s\; :\; \mathcal{E}(y_{t})=0\} > c\) where \(y_t\) is the solution to \ref{eq: sol_initial} started at \(y_s=a\)
\end{assumption}

\begin{assumption} \label{ass: event_domain_trans}
    It holds that \(\mathcal{T}(\text{ker}\;\mathcal{E})\cap\mathcal{E} = \emptyset\).
\end{assumption}

 
\begin{theorem}[Theorem 5.2, \cite{krystul2005generalised}] \label{thm: existence}
    Under Assumptions \ref{ass: event_finite}-\ref{ass: event_domain_trans} and with \(\mu\in\textnormal{Lip}^1\) and \(\sigma\in\textnormal{Lip}^\gamma\) for \(\gamma > 2\), there exists a unique solution \((y, (\tau_n)_{n=1}^N)\) to the Event SDE of Definition \ref{def: esde_sol}.
\end{theorem}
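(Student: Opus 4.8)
The plan is to prove existence and uniqueness by a pathwise inductive pasting argument over the events, exploiting the fact that between consecutive events the dynamics reduce to an ordinary Stratonovich SDE to which classical strong existence and uniqueness applies under the stated regularity. Since Brownian motion admits a canonical geometric rough path lift, interpreting \eqref{eq: sol_initial}--\eqref{eq: sol_inter_event} as RDEs driven by that lift coincides with the Stratonovich solution, so I may freely use the standard theory: for \(\mu\in\textnormal{Lip}^1\) and \(\sigma\in\textnormal{Lip}^\gamma\) with \(\gamma>2\) there is a unique strong solution started from any adapted (possibly random) initial condition. The work then reduces to checking that Assumptions \ref{ass: event_finite}--\ref{ass: event_domain_trans}, together with this regularity, supply exactly the conditions required by Theorem 5.2 of \cite{krystul2005generalised}, namely non-accumulation of events and well-posed restarts.

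For the induction itself, the base case solves \eqref{eq: sol_initial} from \(y_0\) to obtain \(y^0\) and sets \(\tau_1=\inf\{t>0:\mathcal{E}(y^0_t)=0\}\); since \(t\mapsto\mathcal{E}(y^0_t)\) is continuous and adapted, \(\tau_1\) is a stopping time and on \(\{\tau_1<\infty\}\) continuity forces \(\mathcal{E}(y^0_{\tau_1})=0\), so the infimum is attained. In the inductive step, given \(\tau_n\) and \(y^{n-1}\) with \(\mathcal{E}(y^{n-1}_{\tau_n})=0\), I set the \(\mathcal{F}_{\tau_n}\)-measurable restart value \(y^n_{\tau_n}=\mathcal{T}(y^{n-1}_{\tau_n})\); by \Cref{ass: event_domain_trans} this transitioned state lies outside the zero set of \(\mathcal{E}\), so \(\mathcal{E}(y^n_{\tau_n})\neq 0\) and no event is retriggered at the instant of transition, validating the requirement in \Cref{def: esde_sol}. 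Solving \eqref{eq: sol_inter_event} from this restart yields \(y^n\) and hence \(\tau_{n+1}\). The crucial finiteness estimate comes from \Cref{ass: event_finite}: because \(y^n_{\tau_n}\in\textnormal{im}\,\mathcal{T}\), it gives \(\tau_{n+1}-\tau_n>c\) for a fixed \(c>0\), so the events are uniformly separated and at most \(\lceil T/c\rceil\) occur in \([0,T]\), which bounds \(N\) and guarantees termination.

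Uniqueness follows by concatenation: each local solution is unique by SDE uniqueness, and the hitting times are deterministic functionals of the respective paths, so the glued trajectory \(y=\sum_n y^n\mathbf{1}_{[\tau_n,\tau_{n+1})}\) is determined pathwise. Two solutions agreeing up to \(\tau_n\) must produce the same restart value and hence the same \(y^n\) and \(\tau_{n+1}\), so induction propagates uniqueness to all of \([0,T]\).

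The main obstacle is the restart mechanism: restarting the SDE at the stopping time \(\tau_n\) with a random \(\mathcal{F}_{\tau_n}\)-measurable initial condition, and arguing that the concatenated process is again a strong solution. This rests on the strong Markov property and on the fact that the shifted increment process \((B_{\tau_n+t}-B_{\tau_n})_t\) is again a Brownian motion independent of \(\mathcal{F}_{\tau_n}\), whose rough path lift is an admissible driver for \eqref{eq: sol_inter_event}. A secondary subtlety is measurability and attainment of the hitting times, which needs \(\mathcal{E}\) at least continuous; grazing contacts where \(\mathcal{E}(y_t)\) touches \(0\) without a sign change would complicate any transversality-based analysis, but for mere existence continuity suffices to attain the infimum on the closed set \(\mathcal{E}^{-1}(0)\). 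The genuinely new content beyond \cite{krystul2005generalised} is therefore only the verification that Assumptions \ref{ass: event_finite}--\ref{ass: event_domain_trans} imply their hypotheses, together with the harmless reinterpretation of the Stratonovich equations as Brownian RDEs.
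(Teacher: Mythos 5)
Your proposal is correct, and its skeleton matches the paper's: an inductive pasting over events in which local strong existence and uniqueness handles the inter-event dynamics, \Cref{ass: event_domain_trans} rules out an event being re-triggered at the instant of transition, and \Cref{ass: event_finite} supplies the uniform gap \(\tau_{n+1}-\tau_n > c\), hence at most \(\lceil T/c\rceil\) events and a finite \(N\); uniqueness then propagates by induction exactly as you argue. The one genuine difference is how the restart at the stopping time \(\tau_n\) is justified. You stay inside classical stochastic analysis: the strong Markov property, the fact that \((B_{\tau_n+t}-B_{\tau_n})_{t\ge 0}\) is a fresh Brownian motion independent of \(\mathcal{F}_{\tau_n}\), and strong solvability from an \(\mathcal{F}_{\tau_n}\)-measurable initial condition. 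The paper instead defers the SDE statement itself to Theorem 5.2 of \cite{krystul2005generalised} and proves the more general Event RDE version (\Cref{thm: r_existence}) pathwise: once the noise is lifted to a (random) geometric rough path, the inter-event solution map is a deterministic flow (indeed a diffeomorphism, by Theorem 3.13 in \cite{chevyrev2019canonical}) of the realized driver, so restarting at an arbitrary random time requires no Markov or martingale structure whatsoever, and the identification with the Stratonovich solution is Remark \ref{rem: sde_rde_match}. Your route is more elementary and self-contained for the Brownian case, and you correctly isolate the strong Markov restart as the crux; the paper's pathwise route is what buys the extension to càdlàg rough drivers (e.g.\ Lévy-type noise, fractional Brownian motion) where your argument would be unavailable.
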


The definitions and results of this section can be extended to differential equations driven by random rough paths, and in particular, to cases where the driving noise exhibits jumps. In the latter case, it is important to note that the resulting Event SDE will exhibit two types of jumps: the ones given apriori by the driving noise and the ones that are implicitly defined through the solution (what we call \emph{events}). In fact, we develop the main theory of Event RDEs in Appendix \ref{app: rough_paths} in the more general setting of RDEs driven by càdlàg rough paths. The rough path formalism enables a unified treatment of differential equations driven by noise signals of arbitrary (ir)regularity, and makes all proofs simple and systematic. In particular, it allows us to handle cases where the diffusion term is driven by a finite activity Lévy process (e.g, a homogeneous Poisson processes highly relevant in the context of SNNs).

\subsection{Backpropagation} \label{sec: ERDE_backprop}

We are interested in optimizing a continuously differentiable loss function $L$ whose input is the solution of a parameterised Event SDE.
As for Neural ODEs, the vector fields, \(\mu, \sigma\), and the event and transition functions $\mathcal{E}, \mathcal{T}$, might depend on some learnable parameters \(\theta\). We can move the parameters $\theta$ of the Event RDE inside the initial condition $y_0$ by augmenting the dynamics with the additional state variable \(\theta_t\) satisfying \(d\theta_t = 0\) and \(\theta_0 = \theta\). Thus, as long as we can compute gradients with respect to $y_0$, these will include gradients with respect to such parameters. We then require the gradients $\partial_{y_0} L$, if they exist. For this, we need to be able to compute the Jacobians  $\partial y^n_t := \partial_{y_0} y^n_t$ of the inter-event flows associated to the dynamics of $y^n_t$ and the derivatives $\partial \tau_n := \partial_{y_0} \tau_n$. We assume that the event and transition functions $\mathcal{E}$ and $\mathcal{T}$ are continuously differentiable.

Apriori, it is not clear under what conditions such quantities exist and even less how to compute them. This shall be the focus of the present section. We will need the following running assumptions. 

\begin{assumption} \label{ass: der_commute}
    \(\sigma(\mathcal{T}(y)) - \nabla\mathcal{T}(y)\sigma(y)=0\) for all \(y\in\text{ker}\;\mathcal{E}\).
\end{assumption}

\begin{assumption} \label{ass: der_orth}
    \(\nabla \mathcal{E}(y) \sigma(y) = 0\) for all \(y\in\text{ker}\;\mathcal{E}\).
\end{assumption}

\begin{assumption} \label{ass: der_cross}
    \(\nabla \mathcal{E}(y) \mu(y) \neq 0\) for all \(y\in\text{ker}\;\mathcal{E}\).
\end{assumption}

Assumption \ref{ass: der_orth} and \ref{ass: der_cross} ensure that the event times are differentiable. Intuitively, they state that the event condition is hit only by the drift part of the solution. Assumption \ref{ass: der_orth} holds for example if the event functions depend only on a smooth part of the system. Assumption \ref{ass: der_commute} is what allows us to differentiate through the event transitions.

\begin{theorem} \label{thm: auto_diff}
    Let Assumptions \ref{ass: event_finite}-\ref{ass: der_cross} be satisfied and \((y, (\tau_n)_{n=1}^N)\) the solution to the Event SDE parameterized by $(y_0, \mu, \sigma, \mathcal{E}, \mathcal{T}, N)$. Then, almost surely, for any \(n\in [N]\), the derivatives \(\partial\tau_n\) and the Jacobians \(\partial y^n_t\) exist and admit the following recursive expressions
    \begin{equation}\label{eq: der_et}
        \partial\tau_n = -\frac{\nabla \mathcal{E}(y^{n-1}_{\tau_n}) \partial y^{n-1}_{\tau_n}}{\nabla \mathcal{E}(y^{n-1}_{\tau_n})\mu(y^{n-1}_{\tau_n})}
    \end{equation}
    \begin{equation}\label{eq: der_succ_et}
        \partial y^n_t = (\partial_{y^n_{\tau_n}} y^n_t)\left[\nabla\mathcal{T}(y^{n-1}_{\tau_n}) \partial y^{n-1}_{\tau_n} - \left(\mu(y^n_{\tau_n}) - \nabla\mathcal{T}(y^{n-1}_{\tau_n})\mu(y^{n-1}_{\tau_n})\right)\partial\tau_n\right].
    \end{equation}
    where \(\partial y_t^n\) and \(\partial \tau_n\) are the total derivatives of \(y^t_n\) and \(\tau_n\) with respect to the initial condition \(y_0\), \(\partial_{y^n_{\tau_n}}y^n_t\) denotes the partial derivative of the flow map of \cref{eq: sol_inter_event} with respect to its intial condition, and \(\nabla \mathcal{T}\in\mathbb{R}^{e\times e}\) and \(\nabla\mathcal{E}\in\mathbb{R}^{1\times e}\) are the Jacobians of \(\mathcal{T}\) and \(\mathcal{E}\).
\end{theorem}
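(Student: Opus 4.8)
The plan is to argue by induction on $n$, working pathwise: for a fixed realisation of the driving rough path the inter-event solutions $y^n$ are deterministic objects, and differentiability of the flow with respect to its initial datum is the standard rough-path result, the Jacobian $\partial_{y^n_{\tau_n}} y^n_t$ solving the linearised RDE. The base case $n=0$ is exactly this flow differentiability for \eqref{eq: sol_initial}. For the inductive step I would first establish the event-time derivative $\partial\tau_n$ and then propagate it through the reset map and the next flow to obtain $\partial y^n_t$.

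For $\partial\tau_n$ I would differentiate the defining identity $\mathcal{E}(y^{n-1}_{\tau_n})=0$ with respect to $y_0$. The total derivative of $y^{n-1}_{\tau_n}$ splits into the explicit Jacobian $\partial y^{n-1}_{\tau_n}$ (available by induction) plus a time-variation term multiplied by $\partial\tau_n$. Contracting with $\nabla\mathcal{E}$ gives $\nabla\mathcal{E}\,\partial y^{n-1}_{\tau_n} + \bigl(\nabla\mathcal{E}\,\dot y^{n-1}_{\tau_n}\bigr)\partial\tau_n = 0$. The crucial point is that the a priori ill-defined rough velocity $\dot y^{n-1}_{\tau_n}$ only ever appears contracted against $\nabla\mathcal{E}$, and Assumption \ref{ass: der_orth} forces $\nabla\mathcal{E}\,\sigma=0$ on $\ker\mathcal{E}$, so the diffusion part drops out and the effective velocity seen by $\mathcal{E}$ is the genuine drift $\nabla\mathcal{E}\,\mu$. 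Assumption \ref{ass: der_cross} guarantees this is nonzero, so solving yields \eqref{eq: der_et}; this is the rough-path analogue of the implicit-function-theorem computation of \cite{chen2020learning, jia2019neural}.

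For $\partial y^n_t$ I would write $y^n_t = \Phi_{\tau_n,t}(y^n_{\tau_n})$ with reset $y^n_{\tau_n} = \mathcal{T}(y^{n-1}_{\tau_n})$ and take the total derivative in $y_0$. The chain rule produces two contributions. The first passes through the initial datum: $\partial_{y^n_{\tau_n}} y^n_t$ applied to $\nabla\mathcal{T}\bigl(\partial y^{n-1}_{\tau_n} + \dot y^{n-1}_{\tau_n}\,\partial\tau_n\bigr)$. The second passes through the dependence of the flow on its starting time; using the cocycle property, $\partial_s\Phi_{s,t}|_{s=\tau_n} = -\,\partial_{y^n_{\tau_n}} y^n_t\cdot\dot y^n_{\tau_n}$, contributing $-\,\partial_{y^n_{\tau_n}}y^n_t\,\dot y^n_{\tau_n}\,\partial\tau_n$. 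Collecting terms, the rough velocities survive only in the combination $\dot y^n_{\tau_n} - \nabla\mathcal{T}\,\dot y^{n-1}_{\tau_n}$, whose diffusion part is $\bigl(\sigma(\mathcal{T}(y^{n-1}_{\tau_n})) - \nabla\mathcal{T}\,\sigma(y^{n-1}_{\tau_n})\bigr)\dot B_{\tau_n}$ and vanishes identically on $\ker\mathcal{E}$ by Assumption \ref{ass: der_commute}. What remains is precisely the drift difference $\mu(y^n_{\tau_n}) - \nabla\mathcal{T}\,\mu(y^{n-1}_{\tau_n})$, giving \eqref{eq: der_succ_et}.

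The main obstacle is making these two cancellations rigorous: the inter-event paths are only Hölder in time, so the velocities $\dot y^{n}_{\tau_n}$ are not classically defined and the informal steps above must be replaced by controlled pathwise estimates. The correct statement is that the assumptions are engineered so that these velocities enter only through the contractions $\nabla\mathcal{E}\,(\cdot)$ and $\sigma\circ\mathcal{T} - \nabla\mathcal{T}\,\sigma$, both of which annihilate the diffusion vector fields on the event surface. I would therefore expand $\mathcal{E}(y^{n-1}_t)$ and $\mathcal{T}(y^{n-1}_t)$ near $\tau_n$ using the Stratonovich rough-path expansion, show via Assumptions \ref{ass: der_commute}--\ref{ass: der_orth} together with continuity of the integrands that the diffusion contributions are genuinely $o(|t-\tau_n|)$ at the crossing, and conclude that the effective drift-only velocity governs both derivatives. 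Finiteness of the number of events (Assumption \ref{ass: event_finite}) and non-degeneracy at transitions (Assumption \ref{ass: event_domain_trans}) ensure the induction is well posed and terminates almost surely.
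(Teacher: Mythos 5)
Your algebraic mechanism is exactly right, and in fact your computation of the two cancellations (the diffusion increment contracted against $\nabla\mathcal{E}$ killed by Assumption \ref{ass: der_orth}, and the combination $\sigma\circ\mathcal{T}-\nabla\mathcal{T}\,\sigma$ killed by Assumption \ref{ass: der_commute}) is precisely the paper's Step 1 --- but in the paper this step is carried out \emph{only for continuous bounded-variation drivers}, where the difference quotients $\frac{x_{\tau_n}-x_{\tau^\epsilon_n}}{\tau_n-\tau^\epsilon_n}$ are bounded and the Mean Value Theorem argument closes. The genuine gap is in your treatment of the rough case: the claim that ``the diffusion contributions are genuinely $o(|t-\tau_n|)$ at the crossing'' is false for Brownian-type drivers. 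The increment $B_{\tau_n}-B_{\tau^\epsilon_n}$ scales like $|\tau_n-\tau^\epsilon_n|^{1/2}$, and the vanishing of $\nabla\mathcal{E}\sigma$ on $\ker\mathcal{E}$ (together with the vanishing of its tangential derivatives, which Assumption \ref{ass: der_orth} does give you) only buys one extra factor of the path increment; the best pathwise bound on the diffusion contribution to $\mathcal{E}(y^{n-1}_{\tau^\epsilon_n})-\mathcal{E}(y^{n-1}_{\tau_n})$ is then of order $|\Delta\tau|$ \emph{up to iterated-logarithm corrections}, not $o(|\Delta\tau|)$. After dividing by $\epsilon$ (with $\Delta\tau\asymp\epsilon$) this error need not vanish, so the implicit-function-style identity you differentiate does not close pathwise. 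Relatedly, differentiating $\mathcal{E}(y^{n-1}_{\tau_n(y_0)})=0$ in $y_0$ presupposes that $\tau_n(\cdot)$ is differentiable (indeed continuous), which is part of what must be proved and cannot be obtained from the classical implicit function theorem here, since the solution has no classical time derivative.

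The paper circumvents exactly this obstacle by a different route, which is the part missing from your proposal. It imposes two structural conditions (Assumptions \ref{ass: der_event_jump_align} and \ref{ass: der_wong_zakai}: the driver does not jump near event times and is a geometric rough path there), approximates the driver by piecewise-linear paths $x^m$ (Wong--Zakai), and applies the smooth-case result to each $x^m$. It then proves two convergence statements: Lemma \ref{lem: event_continuity} gives continuity of $(\tau_n,y^{n-1}_{\tau_n})$ in the initial condition and \emph{uniform} (over a ball $B_0$ of initial conditions) convergence of the approximate event data, and Lemma \ref{lem: approx_cont_int} plus the continuity estimates of Friz--Victoir give uniform convergence over $B_0$ of the approximate derivatives $(\partial\tau^m_n,\partial y^{m,n}_t)$ to the right-hand sides of \eqref{eq: der_et}--\eqref{eq: der_succ_et}. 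The conclusion then follows from the classical fact that pointwise convergence of functions plus \emph{uniform} convergence of their derivatives implies differentiability of the limit with the limiting derivative. Uniformity over a neighbourhood of $y_0$ is essential --- pointwise convergence of the derivative formulas (which is roughly what your local expansion would give, were it correct) is not sufficient to transfer differentiability to the rough limit. If you want to salvage a direct pathwise argument, you would need quantitative control strong enough to beat the law-of-iterated-logarithm fluctuations, which your continuity-of-integrands reasoning does not provide.
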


\begin{remark}
    If the diffusion term is absent we recover the gradients in \cite{chen2020learning}. In this case, the assumptions of the theorem are trivially satisfied. Note however, that the result, as stated here, is slightly different since we are considering repeated events.
\end{remark}

\begin{remark} \label{rem: online}
    The recursive nature of \eqref{eq: der_et} - \eqref{eq: der_succ_et} suggest a way to update gradients in an online fashion by computing the forward sensitivity along with the state of the Event SDE. In traditional machine learning applications (e.g. NDEs) forward mode automatic differentiation is usually avoided due to the fact that the output dimension tends to be orders of magnitude smaller than the number of parameters \cite{kidger2021on}. However, for (S)SNNs this issue can be partly avoided as discussed in Section \ref{sec: online}. 
    
\end{remark}

Returning now to the SSNN model introduced in Section \ref{sec: SSNN} we find that it is an Event SDE with \(K\) different event functions given by \(\mathcal{E}_k(y) = s^k\) and corresponding transition functions given by 
\[
\mathcal{T}_k(y) = \left(\mathcal{T}^1_k(y^1), \dots, \mathcal{T}^K_k(y^K)\right)
\]
where \(\mathcal{T}^j_k(y^j)=(v^j, i^j + w_{kj}, s^j)\) if \(j\neq k\) and \(\mathcal{T}^k_k(y^k) = (v^k - v_{reset}, i^k, \log u - \alpha)\) where \(v_{reset} > 0\) is a constant determining by what amount the membrane potential is reset. The addition of the constant \(\alpha > 0\) controlling the refractory period ensures that Assumption \ref{ass: event_domain_trans} and \ref{ass: event_domain_trans} are satisfied. Stochastic firing smoothes out the event triggering so that Assumption \ref{ass: der_cross} and \ref{ass: der_orth} hold. Finally, one can check that the combination of constant diffusion terms and the given transition functions satisfies Assumptions \ref{ass: der_commute}. Note that setting \(v^k_t\) exactly to 0 upon spiking would break Assumption \ref{ass: der_commute}. If one is interested in such a resetting mechanism it suffices to pick a diffusion term \(\sigma_1(y^k)\) that satisfies \(\sigma(0)=0\). To sum up, solutions (in the sense of Def. \ref{def: esde_sol}) of the SSNNs exist and are unique. In addition, the trajectories and spike times are almost surely differentiable satisfying \eqref{eq: der_et} and \eqref{eq: der_succ_et}.

\subsection{Numerical solvers}

Theorem \ref{thm: auto_diff} gives an expression for the gradients of the event times as well as the Event SDE solution. In practice, analytical expressions for gradients are often not available and one has to resort to numerical solvers. Three solutions suggest themselves:

\begin{enumerate}
    \item There are multiple autodifferentiable differential equation solvers (such as \texttt{diffrax} \citep{kidger2021on}) that provide differentiable numerical approximations of the flows \(\partial_{y^n_{\tau_n}}y^n_t\). We shall write \(\text{SDESolve}(y_0, \mu, \sigma, s, t)\) for a generic choice of such a solver. Furthermore, if \(\text{RootFind}(y_0, f)\) is a differentiable root finding algorithm (here \(f: (y, t)\mapsto \mathbb{R}\) should be differentiable in both arguments and \(\text{RootFind}(y_0, f)\) returns \(t^*\in\mathbb{R}\) such that \(f(y_0, t^*)=0\)), then we can define a differentiable map \(E:y_0\mapsto  y^*\) by 
    \[
    t^* = \text{RootFind}\left(y_0, \mathcal{E}(\text{SDESolve}\left(\cdot, \mu, \sigma, s, \cdot\right))\right), \quad y^* = \text{SDESolve}\left(y_0, \mu, \sigma, s, t^*\right).
    \]
    Consequently, \(\text{EventSDESolve}(y_0, \mu, \sigma, \mathcal{E}, \mathcal{T}, N)\) can be implemented as subsequent compositions of \(\mathcal{T}\circ E\) (see Algorithm \ref{alg: autodiff_erde}). This is a discretise-then-optimise approach \citep{kidger2021on}.
    \item Alternatively, one can use the formulas \eqref{eq: der_et} and \eqref{eq: der_succ_et} directly as a replacement of the derivatives. This is the approach taken in e.g. \cite{chen2020learning}. To be precise, one would replace all the derivatives of the flow map (terms of the sort \(\partial_{y^n_{\tau_n}}y^n_t\)) with the derivatives of the given numerical solver. This approach is a solution between discretise-then-optimise and optimise-then-discretise.
    \item Finally, one could apply the adjoint method (or optimise-then-discretise) as done for deterministic SNNs in \cite{wunderlich2021event} by deriving the adjoint equations. These adjoint equations define another SDE with jumps which is solved backwards in time. Between events the dynamics are exactly as in the continuous case so one just needs to specify the jumps of the adjoint process. This can be done by referring to \eqref{eq: der_et} and \eqref{eq: der_succ_et}. 
    
\end{enumerate}

\begin{remark}
    One thing to be careful of with the discretise-then-optimise approach is that the SDE solver will compute time derivatives in the backward pass, although the modelled process is not time differentiable. Assumptions \ref{ass: der_orth} and \ref{ass: der_commute} should in principle guarantee that these derivatives cancel out (see Appendix \ref{app: proofs}), yet this might not necessarily happen at the level of the numerical solver because of precision issues. This is essentially due to the fact that approximate solutions provided by numerical solvers are in general not \emph{flows}. Thus, when the path driving the diffusion term is very irregular, the gradients can become unstable. In practice we found this could be fixed by setting the gradient with respect to time of the driving Brownian motion to $0$ and picking a step size sufficiently small.
\end{remark}

\begin{remark}
    In the context of SNNs, Algorithm \ref{alg: autodiff_erde} is actually a version of \emph{exact backpropagation through time} (BPTT) of the unrolled numerical solution. Contrary to popular belief, this illustrates that one can compute exact gradients of numerical approximations of SNNs without the need to resort to surrogate gradient functions. Of course, this does not alleviate the so-called \emph{dead neuron problem}.  However, this ceases to be a problem when stochastic firing is introduced. In fact, surrogate gradients can be related to stochastic firing mechanisms and expected gradients \cite{gygax2024elucidating}.
\end{remark}

\begin{algorithm}   \caption{EventSDESolve}\label{alg: autodiff_erde}
    \hspace*{\algorithmicindent} \textbf{Input} \(y_0, \mu, \sigma, \mathcal{E}, \mathcal{T}, N, t_0, \Delta t, T\)
    \begin{algorithmic}[1]
    \State \(y \gets y_0\)
    \State \(n \gets 0\)
    \State \(e \gets \mathcal{E}(y)\)
    \While{\(n < N\) \textbf{and} \(t_0 < T\)}
    \While{\(e < 0\)} \Comment{We assume for simplicity that \(e\le 0\)}
        \State \(y_0 \gets y\)
        \State \(y \gets \text{SDESolveStep}(y_0, \mu, \sigma, t_0, \Delta t)\)
        \State \(t_0 \gets t_0 + \Delta t\)
        \State \(e \gets \mathcal{E}(y)\) \Comment{Update value of event function}
    \EndWhile
    \State \(t^*_{n+1} \gets \text{RootFind}\left(y_0, \mathcal{E}(\text{SDESolveStep}\left(\cdot, \mu, \sigma, t_0 - \Delta t, \cdot\right))\right)\) \Comment{Find exact event time}
    \State \(y^*_{n+1}\gets \text{SDESolveStep}(y_0, \mu, \sigma, t_0 - \Delta_t, t^*_{n+1})\) \Comment{Compute state at event time}
    \State \(y \gets \mathcal{T}(y^*_{n+1})\) \Comment{Apply transition function}
    \State \(n \gets n + 1\)
    \EndWhile
    \end{algorithmic}
    \hspace*{\algorithmicindent} \textbf{Return} \((t^*_n)_{n\le N}\), \(y\)
\end{algorithm}

\section{Training stochastic spiking neural networks} \label{sec: SSNN_training} \label{sec: training}

\subsection{A loss function based on signature kernels for càdlàg paths} \label{sec: sig_kernel_mmd}

To train SSNNs we will adopt a similar technique as in \cite{issa2023non}, where the authors propose to train NSDEs non-adversarially using a class of maximum mean discrepancies (MMD) endowed with signature kernels \cite{salvi2021signature} indexed on spaces of continuous paths as discriminators. However, as we mentioned in the introduction, classical signature kernels are not directly applicable to the setting of SSNNs as the solution trajectories not continuous. To remedy this issue, in Appendix \ref{app: kernels}, we generalise signature kernels to \emph{Marcus signature kernels} indexed on discontinuous (or càdlàg) paths. We note that our numerical experiments only concern learning from spike trains, which are càdlàg paths of bounded variation. Yet, the Marcus signature kernel defined in Appendix \ref{app: kernels} can handle more general càdlàg rough paths. 

The main idea goes as follows. If \(x\) is a càdlàg path, one can define the \emph{Marcus signature} $S(x)$ in the spirit of Marcus SDEs \citep{marcus1978modeling, marcus1981modeling} as the signature of the \emph{Marcus interpolation} of \(x\). The general construction is given in Appendix \ref{app: rough_paths}. The \emph{Marcus signature kernel} is defined as the inner product \(k(x, y) = \langle S(x), S(y)\rangle\) of Marcus signatures $S(x),S(y)$ of two càdlàg paths \( x,y \). As stated in the first part of \Cref{thm: kernel}, this kernel is characteristic on regular Borel measures supported on compact sets of càdlàg paths. In particular, this implies that the resulting \emph{maximum mean discrepancy} (MMD) 
\[
d_k(\mu, \nu)^2 = \mathbb{E}_{x,x' \sim \mu}k(x, x') - 2\mathbb{E}_{x,y \sim \mu \times \nu } k(x, x') + \mathbb{E}_{y,y' \sim \nu }k(y, y')
\]
satisfies the property $d_k(\mu, \nu)^2 = 0 \iff \mu = \nu$ for any two compactly supported measures $\mu,\nu$.

Nonetheless, characteristicness ceases to hold when one considers measures on càdlàg paths that are not compactly supported. In \cite{chevyrev2022signature} the authors address this issue for continuous paths by using the so-called \emph{robust signature}. They introduce a \emph{tensor normalization} $\Lambda$ ensuring that the range of the robust signature $\Lambda \circ S$ remains bounded. The \emph{robust signature kernel} is then defined as the inner product \(k_{\Lambda}(x, y) = \langle \Lambda \circ S(x), \Lambda \circ S(y)\rangle\). This normalization can be applied analogously to the \emph{Marcus signature} resulting in a \emph{robust Marcus signature kernel}. In the second part of Theorem \ref{thm: kernel}, we prove characteristicness of $k_{\Lambda}$ for possibly non-compactly supported Borel measures on càdlàg paths. The resulting MMD is denoted by $d_{k_\Lambda}$.

There are several ways of evaluating signature kernels. The most naive is to simply truncate the signatures at some finite level and then take their inner product. Another amounts to solve a path-dependent wave equation \cite{salvi2021signature}. Our experiments are compatible with both of these methods.

Given a collection of observed càdlàg trajectories $\{x^i\}_{i=1}^m \sim \mu^{\text{true}}$ sampled from an underlying unknown target measure $\mu^{\text{true}}$, we can train an Event SDE by matching the generated càdlàg trajectories $\{y^i\}_{i=1}^n \sim \mu^\theta$ using an unbiased empirical estimator of \(d_k\) (or $d_{k_\Lambda}$), i.e. minimising over the parameters $\theta$ of the Event SDE the following loss function
\begin{equation*}
   \mathcal{L} = \frac{1}{m(m-1)} \sum_{j\neq i} k(x^i,x^j) - \frac{2}{mn}\sum_{i,j} k(x^i,y^j)  + \frac{1}{n(n-1)}\sum_{j\neq i} k(y^i,y^j).
\end{equation*}
In the context of SSNNs, the observed and generated trajectories $x^i$'s and $y^i$'s correspond to spike trains, which are càdlàg paths of bounded variation.

\subsection{Input current estimation}

The first example is the simple problem of estimating the constant input current \(c > 0\) based on a sample of spike trains in the single SLIF neuron model,
\[
dv_t = \mu(c - v_t)dt + \sigma dB_t, \quad ds_t = \lambda(v_t)dt,
\]
where \(\lambda(v) = \exp(5(v - 1)\), \(\mu = 15\) and \(\sigma\) varies. Throughout we fix the true \(c=1.5\) and set \(v_{reset}= 1.4\) and \(\alpha=0.03\). We run stochastic gradient descent for 1500 steps for two choices of the diffusion constant \(\sigma\). The loss function is the signature kernel MMD between a simulated batch and the sample of spike trains.\footnote{For simplicity we only compute an approximation of the true MMD by truncating the signatures at depth 3 and taking the average across the batch/sample size.}. The test loss is the mean absolute error between the first three average spike times. Results are given in Fig. \ref{fig: single_neuron_results}. For additional details regarding the experiments, we refer to Appendix \ref{app: experiments}.

\begin{figure}[h]
    \centering
    \includegraphics[scale=.6]{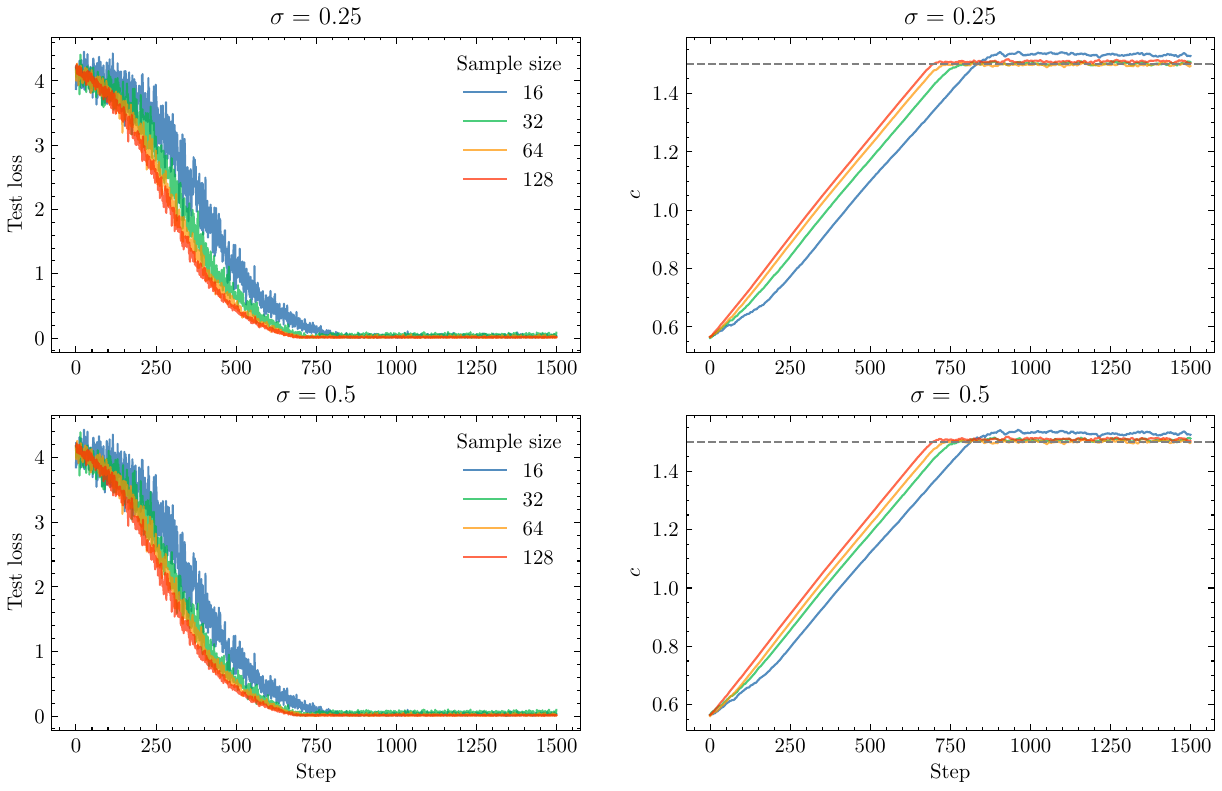}
    \caption{\small Test loss and \(c\) estimate across four sample sizes and for two levels of noise \(\sigma\). On the left: MAE for the three first average spike times on a hold out test set. On the right: estimated value of \(c\) at the current step.}
    \label{fig: single_neuron_results}
\end{figure}

In all cases backpropagation through Algorithm \ref{alg: autodiff_erde} is able to learn the underlying input current after around 600 steps up to a small estimation error. In particular, the convergence is fastest for the largest sample size and the true \(c\) is recovered for both levels of noise.

\subsection{Synaptic weight estimation}

Next we consider the problem of estimating the weight matrix in a feed-forward SSNN with input dimension 4, 1 hidden layer of dimension 16, and output dimension 2. The rest of the parameters are fixed throughout. We run stochastic gradient descent for 1500 steps with a batch size of 128 and for a sample size of \(256, 512\), and 1024 respectively. Learning rate is decreased from \(0.003\) to \(0.001\) after 1000 steps. The results are given in Fig. \ref{fig: network_neuron_results} in Appendix \ref{app: experiments}. For a sample size of \(512\) and \(1024\) we are able to reach a test loss of practically 0, that is, samples from the learned model and the underlying model are more or less indistinguishable. Also, in all cases the estimated weight matrix approaches the true weight matrix. Interestingly, for the largest sample size, the model reaches the same test loss as the model trained on a sample size of 512, but their estimated weight matrices differ significantly. 

\subsection{Online learning} \label{sec: online}

In the case of SSNNs, equations \eqref{eq: der_et}-\eqref{eq: der_succ_et} lead to a formula for the forward sensitivity where any propagation of gradients between neurons only happens at spike times and only between connected neurons (see Proposition \ref{prop: online}). Since the forward sensitivities are computed forward in time together with the solution of the SNN, gradients can be updated online as new data appears. As a result, between spikes of pre-synaptic neurons, we can update the gradient flow of the membrane potential and input current of each neuron using information exclusively from that neuron. For general network structures and loss functions, however, this implies that each neuron needs to store on the order of \(K^2\) gradient flows (one for each weight in the network). 

On the other hand, if the adjacency matrix of the weight matrix forms a directed acyclic graph (DAG), three-factor Hebbian learning rules like those in \cite{xiao2022online, bellec2020solution} are easily derived from Proposition \ref{prop: online}. For simplicity, consider the SNN consisting of deterministic LIF neurons and let \(N^k_t\) denote the spike train of neuron \(k\), i.e., \(N^k_t\) is càdlàg path equal to the number of spikes of neuron \(k\) at time \(t\). We let \(\tau^k(t)\) (or \(\tau^k\) for short) denote the last spike of neuron \(k\) before time \(t\). We shall assume that the instantaneous loss function \(L_t\) depends only on the most recent spike times \(\tau^1, \dots, \tau^K\). Then,
\[
\partial_{w_{jk}}L_t = \partial_{\tau^k}L_t \frac{a^{jk}_{\tau^k}}{\mu_1(v^k_{\tau^k} - i^k_{\tau^k})}
\]
where \(a^{jk}_t\) is the \emph{eligibility trace} and the first term can be viewed as a \emph{global modulator}, that is, a top-down learning signal propagting the error from the output neurons.\footnote{Note that in the case of stochastic SNNs this term is not necessarily well-defined since semi-martingales are in general not differentiable wrt. time.} The eligibility trace satisfies
\begin{align*}
    d a^{jk}_t = \mu_1\left(b^{jk}_t - a^{jk}_t\right)dt + \frac{v_{reset}a^{jk}_t}{\mu_1(i^k_t - v^k_t)}dN^k_t, \quad d b^{jk}_t = -\mu_2 b^{jk}_t + dN^j_t,
\end{align*}
where the \(dN\) terms are to be understood in the Riemann-Stieltjes sense. In other words, the eligibility trace can be updated exclusively from the activity of the pre and post-synaptic neurons. We note the similarity to the results derived in \cite{bellec2020solution} only our result gives the exact gradients with no need to introduce surrogate gradient functions. For general network structures one can use the eligibility traces as proxies for the the true derivatives \(\partial_{w_{ij}}\tau^k\).

\section{Conclusion}\label{sec: limitations}

We introduced a mathematical framework based on rough path theory to model SSNNs as SDEs exhibiting event discontinuities and driven by càdlàg rough paths. After identifying sufficient conditions for differentiability of solution trajectories and event times, we obtained a recursive relation for the pathwise gradients in \Cref{thm: auto_diff}, generalising the results presented in \cite{chen2020learning} and \cite{jia2019neural} which only deal with the case of ODEs. Next, we introduced Marcus signature kernels as extensions of continuous signature kernels from \cite{salvi2021signature} to càdlàg rough paths and used them to define a general-purpose loss function on the space of càdlàg rough paths to train SSNNs where noise is present in both the spike timing and the network’s dynamics. Based on these results, we also provided an end-to-end autodifferentiable solver for SDEs with event discontinuities (Algorithm \ref{alg: autodiff_erde}) and made its implementation available as part of the \texttt{diffrax} repository. Finally, we discussed how our results lead to bioplausible learning algorithms akin to \emph{e-prop} \citep{bellec2020solution} but in the context of spike time gradients.

Despite the encouraging results we obtained, we think there are still many interesting research directions left to explore. For instance, we only made use of a discretise-then-optimise approach in our numerical experiments. It would be of interest to implement the adjoint equations or to use reversible solvers and compare the results. Similarly, since our Algorithm \ref{alg: autodiff_erde} differs from the usual approach with surrogate gradients even in the deterministic setting, questions remain on how these methods compare for training SNNs. Furthermore, it would be interesting to understand to what extent the inclusion of different types of driving noises in the dynamics of SSNNs would be beneficial for learning tasks compared to deterministic SNNs. Finally, it remains to be seen whether the discussion in Section \ref{sec: online} could lead to a bio-plausible learning algorithm with comparable performance to state-of-the-art backpropagation methods and implementable on neuromorphic hardware.

\bibliographystyle{plainnat}
\bibliography{bibliography}

\appendix

\section*{Appendix}

The appendix is structured as follows. Section \ref{app: rough_paths} covers the basic concepts of càdlàg rough paths based on \citep{chevyrev2019canonical} extended with a few of our own definitions and results. It culminates with the definition of Event RDEs which can be viewed as generalizations of Event SDEs. Section \ref{app: proofs} covers the proof of the main result, Theorem \ref{thm: auto_diff}, but in the setting of Event RDEs as well as some preliminary technical lemmas needed for the proof. Section \ref{app: kernels} gives a brief overview of the main concepts in kernel learning and presents our results on Marcus signature kernels along with their proofs. Section \ref{app: sensitivity} derives the forward sensitivities of a SSNN. Finally, Section \ref{app: experiments} covers all the technical details of the simulation experiments that were not discussed in the main body of the paper.

\section{Càdlàg rough paths} \label{app: rough_paths}

\emph{Marcus integration} developed in \cite{chevyrev2019canonical} preserves the chain rule and thus serves as an analog to Stratonovich integration for semi-martingales with jump discontinuities. In particular, it allows to define a canonical lift under which càdlàg semi-martingales are a.s. geometric rough paths and many of the results from the continuous case, such as universal limit theorems and stability results, carry over under suitably defined metrics.We briefly review some of the important concepts here by following the same setup as in  \cite{chevyrev2019canonical}. 

\medskip


Let \(C([0, T], E)\) and \(D([0, T], E)\) be the space of continuous and càdlàg paths respectively on \([0, T]\) with values in a metric space $(E,d)$. For $p \geq 1$, let \(C_p([0, T], E)\) and \(D_p([0, T], E)\) be the corresponding subspaces of paths with finite \(p\)-variation. For any $N \geq 1$, Let \(G^N(\mathbb{R}^d)\) be the step-$N$ free nilpotent Lie group over $\mathbb R^d$ endowed with the Carnot-Carathéodory metric $d$. Let \(\Omega_{p}^C(\mathbb{R}^d) := C_p([0, T], G^{\lfloor p \rfloor}(\mathbb{R}^d))\) and \(\Omega_{p}^D(\mathbb{R}^d) := D_p([0, T], G^{\lfloor p \rfloor}(\mathbb{R}^d))\) be the space of weakly geometric continuous and càdlàg \(p\)-rough paths respectively with the homogeneous \(p\)-variation metric \begin{equation*}
d_p(\mathbf{x}, \mathbf{y}) = \max_{1\le k \le \lfloor p\rfloor}\sup_{\mathcal{D}\subset [0, T]}\left(\sum_{\mathcal{D}}d(\mathbf{x}_{t_i, t_{i+1}}, \mathbf{y}_{t_i, t_{i+1}})^{\frac{p}{k}}\right)^{\frac{k}{p}}.
\end{equation*}
Define the \emph{log-linear} path function 
\begin{align*}
    \phi:G^N(\mathbb{R}^d)\times G^N(\mathbb{R}^d) & \rightarrow C([0, 1], G^N(\mathbb{R}^d)) \\
    (\mathbf a, \mathbf b) &\mapsto \exp((1-\cdot)\log\mathbf{a} + \cdot\log\mathbf{b}).
\end{align*}
where $\log$ and $\exp$ are the (truncated) tensor logarithm and exponential maps on $G^N(\mathbb{R}^d)$. If  \(N=1\), then \(G^N(\mathbb{R}^d)\cong \mathbb R \oplus \mathbb{R}^d\) and \(\phi(a, b)_t = (1, (1-t)a + tb)\) is a straight line connecting \(a\) to \(b\) in unit time. For any \(\mathbf{x}\in D([0, T], G^N(\mathbb{R}^d))\) we can construct a continuous path \(\hat{\mathbf{x}}\in C([0, T], G^N(\mathbb{R}^d))\) by adding fictitious time and interpolating through the jumps using the log-linear path function according to the following definition.

\begin{definition}[Marcus interpolation]\label{def: marcus_interpolation}
    Let $N \geq 1$. For \(\mathbf{x}\in D([0, T], G^N(\mathbb{R}^d))\), let \(\tau_1, \tau_2, \dots, \tau_m\) be the jump times of \(\mathbf{x}\) ordered such that \(d(\mathbf{x}_{\tau_1-}, \mathbf{x}_{\tau_1})\ge d(\mathbf{x}_{\tau_2-}, \mathbf{x}_{\tau_2})\ge \dots \ge d(\mathbf{x}_{\tau_m-}, \mathbf{x}_{\tau_m})\), where \(0\le m\le \infty\) is the number of jumps. Let $(r_k)$ be a sequence of positive scalars $r_k>0$ such that \(r=\sum_{k=1}^m r_k < +\infty\). Define the discontinuous reparameterization \(\eta:[0, T]\rightarrow [0, T+r]\) by
    \[
    \eta(t) = t + \sum_{k=1}^m r_k \mathbf{1}_{\{\tau_k\le t\}}.
    \]
    The Marcus augmentation \(\mathbf{x}^M\in C([0, T+r], G^N(\mathbb{R}^d))\) of \(\mathbf{x}\) is the path
    \[
    \mathbf{x}^M_s = \begin{cases}
        \mathbf{x}_t, \quad &\text{if \(s=\eta(t)\) for some \(t\in[0, T]\)}, \\
        \phi(\mathbf{x}_{\tau_k-}, \mathbf{x}_{\tau_k})_{(s - \eta(\tau_k -))/r_k}, \quad &\text{if \(s\in[\eta(\tau_k-), \eta(\tau_k))\) for \(1\le k < m+1\).}
    \end{cases}
    \]
    The Marcus interpolation \(\hat{\mathbf{x}}\in C([0, T], G^N(\mathbb{R}^d))\) of \(\mathbf{x}\) is the path \(\hat{\mathbf{x}} = \mathbf{x}^M\circ \eta_r\) where \(\eta_r(t)=t(T+r)/T\) is a reparameterization from \([0, T]\) to \([0, T + r]\). We can recover \(\mathbf{x}\) from \(\hat{\mathbf{x}}\) via \(\mathbf{x}=\hat{\mathbf{x}}\circ \eta_{\mathbf{x}}\) by considering the reparameterization \(\eta_{\mathbf{x}}=\eta_r^{-1}\circ \eta\).   
\end{definition}

Once the Marcus interpolation is defined we can state what we mean by a solution to a differential equation driven by a geometric càdlàg rough path.

\begin{definition}[Marcus RDE] \label{def: jump_rde}
    Let \(\mathbf{x}\in \Omega_p^D(\mathbb{R}^d)\) and \(f=(f_1, \dots, f_d)\) be $\textnormal{Lip}^{\gamma}$ vector fields on \(\mathbb{R}^e\) with \(\gamma > p\). For an initial condition \(a\in\mathbb{R}^e\), let \(\hat{y}\in C_p([0, T], \mathbb{R}^e)\) be the solution to the classical RDE driven by the Marcus interpolation $\hat{\mathbf{x}} \in \Omega_p^C(\mathbb{R}^d)$
    \[
    d\hat{y}_t = f(\hat{y}_t)d\hat{\mathbf{x}}_t, \quad \hat{y}_0 = a.
    \]
    Define the solution \(y\in D_p([0, T], \mathbb{R}^e)\) to the Marcus RDE
    \begin{equation} \label{eq:jump_rde}
        dy_t = f(y_t)\diamond d\mathbf{x}_t, \quad y_0 = a
    \end{equation}
    to be \(y=\hat{y}\circ \eta_{\mathbf{x}}\), where $\eta_{\mathbf{x}}$ is the reparameterisation introduced in \Cref{def: marcus_interpolation}.
\end{definition}

\subsection{Metrics on the space of càdlàg rough paths} \label{app: metrics}

\citet{chevyrev2019canonical} introduce a metric $\alpha_p$ on $\Omega_p^D(\mathbb{R}^d)$ with respect to which 1) geometric càdlàg rough paths can be approximated with a sequence of continuous paths \citep[Section 3.2]{chevyrev2019canonical} and  2) the solution map $(y_0, \mathbf{x}) \mapsto (\mathbf{x},y)$ of the Marcus RDE (\ref{eq:jump_rde}) is locally Lipschitz continuous \citep[Theorem 3.13]{chevyrev2019canonical}.

We write \(\Lambda\) for the set of increasing bijections from \([0, T]\) to itself. For a \(\lambda\in\Lambda\) we let \(|\lambda|=\sup_{t\in[0, T]}|\lambda(t) - t|\). We first define the Skorokhod metric as well as a Skorokhod version of the usual \(p\)-variation metric.

\begin{definition} \label{def: metric_skor}
    For \(p\ge 1\) and \(\mathbf{x}, \mathbf{y}\in D_p([0, T], E)\), we define
    \begin{align*}
        \sigma_{\infty}(\mathbf{x}, \mathbf{y}) &= \inf_{\lambda\in\Lambda}\max\left\{|\lambda|, \sup_{t\in[0, T]} d((\mathbf{x} \circ \lambda)_t,\mathbf{y}_t)\right \}, \\
        \sigma_p(\mathbf{x}, \mathbf{y}) &= \inf_{\lambda\in\Lambda}\max\left\{|\lambda|, d_p\left(\mathbf{x}\circ \lambda, \mathbf{y}\right)\right\}.
    \end{align*}
\end{definition}

It turns out that the topology induced by \(\sigma_p\) is too strong. In particular, it is not possible to approximate paths with jump discontinuities with a sequence of continuous paths (see Section 3.2 in \cite{chevyrev2019canonical}). For \(\mathbf{x}\in \Omega_p^D(\mathbb{R}^d)\) and \(f=(f_1, \dots, f_d)\) a family of vector fields in \(\textnormal{Lip}^{\gamma - 1}(\mathbb{R}^e)\) with \(\gamma > p\), let \(\Phi_f(y, s, t; \mathbf{x})\) denote the solution to the Marcus RDE \(dy_t = f(y_t)\diamond d\mathbf{x}_t\) initialized at \(y_s = y\) and evaluated at time \(t\). We define the set
\[
J_f = \left\{((\mathbf{a}, b), (\mathbf{a}', b'))\;|\; \mathbf{a}, \mathbf{a}'\in G^{\lfloor p\rfloor}(\mathbb{R}^e), \Phi_f(b, 0, 1;\phi(\mathbf{a}, \mathbf{a}'))=b'\right\}.
\]
and, on it, the path function
\[
\phi_f((\mathbf{a}, b), (\mathbf{a}', b'))_t = \left(\phi(\mathbf{a}, \mathbf{a}'), \Phi_f(b, 0, 1; \phi(\mathbf{a}, \mathbf{a}')_t)\right).
\]
Finally, we let \(D_p^f([0, T], G^{\lfloor p \rfloor}(\mathbb{R}^d)\times \mathbb{R}^e)\) be the space of càdlàg paths \(\mathbf{z} = (\mathbf{x}, y)\) on \(G^{\lfloor p \rfloor}(\mathbb{R}^d)\times \mathbb{R}^e\) of bounded \(p\)-variation such that \((\mathbf{z}_t-, \mathbf{z}_t)\in J_f\) for all jump times \(t\) of \(\mathbf{z}\). To keep notation simple, we shall write \(D^f_p\) when this does not cause any confusion. Naturally, if \(y\) is the solution to the Marcus RDE \(dy_t = f(y_t)\diamond d\mathbf{x}_t\), we have \((\mathbf{x}, y)\in D^f_p\). For a \(\mathbf{z}=(\mathbf{x}, y)\in D^f_p\) we may define the Marcus interpolation by interpolating the jumps using \(\phi_f\). Let \(\hat{\mathbf{z}}^{\delta}\) denote this interpolation but with \(r_k\) replaced by \(\delta r_k\) for \(\delta > 0\) and similarly for \(\hat{\mathbf{x}}^\delta\) with \(\mathbf{x}\in\Omega_p^D(\mathbb{R}^d)\).

\begin{definition} \label{def: metric}
    For \(f=(f_1, \dots, f_d)\) a family of vector fields in \(\textnormal{Lip}^{\gamma - 1}(\mathbb{R}^e)\) with \(\gamma > p\), let \(\mathbf{z}, \mathbf{z}'\in D^f_p\) with \(\mathbf{z}=(\mathbf{x}, y)\) and \(\mathbf{z}'=(\mathbf{x}', y')\) and define 
    \begin{align*}
    \alpha_p(\mathbf{x}, \mathbf{x}') &= \lim_{\delta\rightarrow 0}\sigma_p(\hat{\mathbf{x}}^{\delta}, \hat{\mathbf{x}}^{'\delta}), \\
    \alpha_p(\mathbf{z}, \mathbf{z}') &= \lim_{\delta\rightarrow 0}\sigma_p(\hat{\mathbf{z}}^{\delta}, \hat{\mathbf{z}}^{'\delta}).
    \end{align*}
\end{definition}

\begin{remark}
    It is proven in \cite{chevyrev2019canonical} that in both cases the limit in \(\alpha_p\) exists, is independent of the choice of \(r_k\), and that it is indeed a metric on \(\Omega^D_p(\mathbb{R}^d)\) resp. \(D^f_p\).
\end{remark}

\begin{theorem}[Theorem 3.13 + Proposition 3.18, \cite{chevyrev2019canonical}] \label{thm: jump_rde_cont}
    Let \(f=(f_1, \dots, f_d)\) be a family of vector fields in \(\textnormal{Lip}^{\gamma - 1}(\mathbb{R}^e)\) with \(\gamma > p\). Then,
    \begin{enumerate}
        \item The solution map 
        \begin{align*}
            \mathbb{R}^e\times (\Omega^D_p(\mathbb{R}^d), \alpha_p) &\rightarrow (D^f_p, \alpha_p) \\
            (y_0, \mathbf{x}) &\mapsto \mathbf{z} = (\mathbf{x}, y)
        \end{align*}
        of the Marcus RDE \(dy_t = f(y_t)\diamond d\mathbf{x}_t\) initialized at \(y_0 \in \mathbb R^e\) is locally Lipschitz.
        \item On sets of bounded \(p\)-variation, the solution map
        \begin{align*}
            \mathbb{R}^e\times (\Omega^D_p(\mathbb{R}^d), \sigma_\infty) &\rightarrow (D_p([0, T], \mathbb{R}^e), \sigma_{\infty}) \\
            (y_0, \mathbf{x}) &\mapsto y
        \end{align*}
        of the Marcus RDE \(dy_t = f(y_t)\diamond d\mathbf{x}_t\) initialized at \(y_0 \in \mathbb R^e\) is continuous.
    \end{enumerate}
\end{theorem}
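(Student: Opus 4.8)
The plan is to reduce the càdlàg statement to the already-understood continuous theory by exploiting the very definition of the Marcus objects: by \Cref{def: jump_rde} the solution $y$ of \eqref{eq:jump_rde} is a reparameterisation of the continuous RDE solution $\hat y$ driven by the Marcus interpolation $\hat{\mathbf{x}}$, and — crucially — the joint lift $\mathbf{z}=(\mathbf{x},y)$ lies in $D^f_p$ precisely because its jumps are interpolated by the path function $\phi_f$, so that the $\delta$-interpolation factorises as $\widehat{\mathbf{z}}^{\delta}=(\widehat{\mathbf{x}}^{\delta},\widehat{y}^{\delta})$ with $\widehat y^{\delta}$ solving the continuous RDE $d\widehat y^{\delta}=f(\widehat y^{\delta})\,d\widehat{\mathbf{x}}^{\delta}$. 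Thus every object appearing in $\alpha_p(\mathbf{z},\mathbf{z}')=\lim_{\delta\to 0}\sigma_p(\widehat{\mathbf{z}}^{\delta},\widehat{\mathbf{z}}'^{\delta})$ is a continuous-RDE solution, and the problem becomes transferring the continuous universal limit theorem through the Skorokhod infimum and through the $\delta\to 0$ limit.

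First I would fix $\delta>0$ and compare $\widehat{\mathbf z}^{\delta}$ with $\widehat{\mathbf z}'^{\delta}$. For a reparameterisation $\lambda$ (an increasing bijection of the stretched interval), time-changing the driver time-changes the solution identically, since the continuous RDE solution map commutes with reparameterisations; hence $\widehat y^{\delta}\circ\lambda$ solves the RDE driven by $\widehat{\mathbf x}^{\delta}\circ\lambda$. Applying the local-Lipschitz form of the continuous universal limit theorem in the homogeneous $p$-variation metric $d_p$ gives
\[
d_p\!\left(\widehat{\mathbf z}^{\delta}\circ\lambda,\ \widehat{\mathbf z}'^{\delta}\right)\ \le\ C\left(d_p\!\left(\widehat{\mathbf x}^{\delta}\circ\lambda,\ \widehat{\mathbf x}'^{\delta}\right)+|y_0-y_0'|\right),
\]
and taking the infimum over $\lambda$ together with $|\lambda|$ yields $\sigma_p(\widehat{\mathbf z}^{\delta},\widehat{\mathbf z}'^{\delta})\le C\big(\sigma_p(\widehat{\mathbf x}^{\delta},\widehat{\mathbf x}'^{\delta})+|y_0-y_0'|\big)$. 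The point I must verify is that $C$ depends only on the $\textnormal{Lip}^{\gamma-1}$-norm of $f$ and on the $p$-variation of the drivers, but \emph{not} on $\delta$: the Marcus interpolation only inserts connecting segments whose total $p$-variation is controlled by the jump sizes of $\mathbf{x}$, so $\|\widehat{\mathbf x}^{\delta}\|_{p\textnormal{-var}}$ stays bounded uniformly as $\delta\to 0$, and the universal-limit-theorem constant is locally uniform in this norm. With $C$ independent of $\delta$, I let $\delta\to 0$ and invoke the existence of the limits defining $\alpha_p$ (the Remark following \Cref{def: metric}) to conclude
\[
\alpha_p(\mathbf z,\mathbf z')\ \le\ C\big(\alpha_p(\mathbf x,\mathbf x')+|y_0-y_0'|\big),
\]
which is exactly the local Lipschitz continuity of part 1.

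For part 2 I would not argue directly in $\sigma_\infty$ but upgrade it to a $p'$-variation statement and reuse part 1. On a set of uniformly bounded $p$-variation, a standard interpolation inequality shows that Skorokhod-uniform ($\sigma_\infty$) convergence forces $\sigma_{p'}$-convergence for every $p'>p$; choosing $p'\in(p,\gamma)$ keeps the vector fields in $\textnormal{Lip}^{\gamma-1}$ with $\gamma>p'$, so part~1 applied with exponent $p'$ gives $\alpha_{p'}$-convergence of the solutions $\mathbf z_n\to\mathbf z$. Finally, $\alpha_{p'}$-convergence of the joint lifts implies Skorokhod-uniform convergence of their traces at non-fictitious times, hence $\sigma_\infty(y_n,y)\to 0$, which is the claimed continuity. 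The main obstacle I anticipate is the $\delta$-uniformity in the second paragraph: making precise that the interpolated drivers have $p$-variation bounded independently of $\delta$, so that the universal-limit-theorem constant is uniform and the estimate survives the $\delta\to 0$ limit defining $\alpha_p$. A secondary bookkeeping difficulty is transporting the near-optimal Skorokhod reparameterisation of the drivers to the solutions consistently on the stretched time axes, which I would isolate as a short reparameterisation-invariance lemma.
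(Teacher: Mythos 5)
First, a point of reference: the paper does not prove this statement at all --- it is imported verbatim from \cite{chevyrev2019canonical} (their Theorem 3.13 and Proposition 3.18), so your attempt can only be compared against the argument given there. Your part 1 is essentially that argument and is sound: the factorisation $\widehat{\mathbf{z}}^{\delta}=(\widehat{\mathbf{x}}^{\delta},\widehat{y}^{\delta})$, with $\widehat{y}^{\delta}$ solving the continuous RDE driven by $\widehat{\mathbf{x}}^{\delta}$, is exactly what the path function $\phi_f$ and the space $D^f_p$ are designed to produce; the continuous universal limit theorem applies for each fixed $\delta$ and each reparameterisation $\lambda$; and the $\delta$-uniformity of the constant, which you correctly flag as the point to verify, holds because the paths $\widehat{\mathbf{x}}^{\delta}$ for different $\delta$ are reparameterisations of one another, so their $p$-variation norms are literally equal, not merely bounded.

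Part 2 contains a genuine gap, located exactly at the step you treat as automatic: ``$\alpha_{p'}$-convergence of the joint lifts implies Skorokhod-uniform convergence of their traces at non-fictitious times.'' This implication is false in general, and the counterexample is the defining feature of $\alpha_p$: the Marcus interpolations $\widehat{\mathbf{x}}^{1/n}$ of a path $\mathbf{x}$ with a genuine jump converge to $\mathbf{x}$ in $\alpha_{p'}$ (this is why the metric was introduced; see the discussion after \Cref{def: metric_skor}), yet they never converge in $\sigma_\infty$, since any continuous path stays at Skorokhod distance at least $J/2$ from a path with a jump of size $J$. The point is that $\alpha_{p'}$-convergence does not remember whether the jumps of $\mathbf{z}_n$ line up with those of $\mathbf{z}$ or are smeared over fictitious time: the near-optimal reparameterisations in $\sigma_{p'}(\widehat{\mathbf{z}}_n^{\delta},\widehat{\mathbf{z}}^{\delta})$ may map non-fictitious times of one path into fictitious segments of the other, so no ``trace'' statement can be extracted. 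To close the gap you must use the hypothesis you have but never invoke at this stage, namely that the \emph{drivers} converge in $\sigma_\infty$: Skorokhod convergence (together with $d(\Delta_t(\mathbf{x}_n\circ\lambda_n),\Delta_t\mathbf{x})$ being controlled by the relevant distances) forces the jumps of the drivers to be matched in time and size after reparameterisation, and since the jump of the solution at such a time is the continuous image $\Phi_f\bigl(y_{t-},0,1;\phi(\mathbf{x}_{t-},\mathbf{x}_t)\bigr)$ of the driver's jump and the pre-jump state, this matching transfers to the solutions; only then can the $p'$-variation estimates be converted back into a $\sigma_\infty$ statement. The same missing ingredient hides in your earlier, unjustified passage from $\sigma_{p'}$-convergence of the drivers to the $\alpha_{p'}$-convergence needed to invoke part 1: that comparison (the $\sigma_{p'}$-topology being finer than the $\alpha_{p'}$-topology) is true, but it is itself a jump-matching lemma requiring proof, not a formality. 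In short, part 1 stands; part 2, as written, rests on an implication that has concrete counterexamples and needs the jump-alignment argument made explicit.
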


Now, let \(C_0^1(\mathbb{R}^d)\) be the space of absolutely continuous functions on \(\mathbb{R}^d\).

\begin{definition} \label{def: geometric_rough}
    We define the space of geometric càdlàg \(p\)-rough paths \(\Omega_{0, p}^D(\mathbb{R}^d)\) as the closure of \(C_0^1(\mathbb{R}^d)\) in \(\Omega_p^D(\mathbb{R}^d)\) under the metric \(\alpha_p\).
\end{definition}

\begin{remark} \label{rem: sde_rde_match}
    A càdlàg semi-martingale \(x\in D_p([0, T], \mathbb{R}^d)\) can be canonically lifted to a geometric càdlàg \(p\)-rough path, with \(p\in[2, 3)\),  by enhancing it with its two-fold iterated  Marcus integrals, i.e.
    \[
    \mathbf{x}_{s, t} = (1, x_{s, t}, \int_{s, t} (x_s - x_u)\otimes \diamond dx_u) \in G^2(\mathbb{R}^d)
    \]
    where the integral is defined in a similar spirit to Definition \ref{def: jump_rde} (see, for example, \cite{friz2018differential} for more information). The solution to the corresponding Marcus RDE agrees a.s. with the solution to the usual càdlàg Marcus SDE which, in turn, if \(x\) has a.s. continuous sample paths, agrees a.s. with the solution to the Stratonovich SDE. See, e.g., Proposition 4.16 in \cite{chevyrev2019canonical}.
\end{remark}

\subsection{Signature} \label{app: signature}

The extended tensor algebra over \(\mathbb{R}^d\) is given by 
\[
T\left(\left(\mathbb{R}^d\right)\right) = \prod_{n=0}^{\infty}\left(\mathbb{R}^d\right)^{\otimes n}
\]
equipped with the usual addition \(+\) and tensor multiplication \(\otimes\). An element \(\mathbf{a}\in T\left(\left(\mathbb{R}^d\right)\right)\) is a formal series of tensors \(\mathbf{a}=(\mathbf{a}^0, \mathbf{a}^1, \dots)\) such that \(\mathbf{a}^n\in(\mathbb{R}^d)^{\otimes n}\). We define the projections \(\pi_n:T\left(\left(\mathbb{R}^d\right)\right)\rightarrow (\mathbb{R}^d)^{\otimes n}\) given by \(\pi_n(\mathbf{a})=\mathbf{a}^n\). Let \(\tilde{T}((\mathbb{R}^d))\) be the subset of \(T((\mathbb{R}^d))\) such that the \(\pi_0(\mathbf{a})=1\) for all \(\mathbf{a}\in\tilde{T}((\mathbb{R}^d))\). Finally, we define the set of group-like elements,
\[
G^{(*)} = \left\{\mathbf{a}\in \tilde{T}\left(\left(\mathbb{R}^d\right)\right)\; | \; \pi_n(\mathbf{a})\in G^N\left(\mathbb{R}^d\right) \text{ for all } n\ge 0\right\}
\]

\begin{definition} \label{def: signature}
    Let \(p\ge 1\) and \(\mathbf{x}\in\Omega_p^D(\mathbb{R}^d)\). The signature of \(\mathbf{x}\) is the path \(S(\mathbf{x}):[0, T]\mapsto G^{(*)}\) such that, for each \(N\ge 0\),
    \begin{equation} \label{eq: signature}
        dS(\mathbf{x})^N_t = S(\mathbf{x})^N_t \otimes \diamond d\mathbf{x}_t, \quad S(\mathbf{x})^N_0 = \mathbf{1}\in G^N\left(\mathbb{R}^d\right).
    \end{equation}
\end{definition}

\begin{remark}
    Uniqueness and existence of the signature follow from the continuous analog. Indeed, by definition, \eqref{eq: signature} is equivalent to a continuous linear RDE.
\end{remark}

\begin{remark}
    The signature, as defined here, is also known as the \emph{minimal jump extension of} \(\mathbf{x}\) and was first introduced in \cite{friz2017general}. It was further explored in \cite{cuchiero2022universal} where it was also shown that it acts as a universal feature map.
\end{remark}

In the continuous case, it is well known that the signature characterizes paths up to \emph{tree-like equivalence}. Two continuous paths \(\mathbf{x}, \mathbf{y}\) are said to be tree-like equivalent if there exists a continuous non-negative map \(h:[0, T]\rightarrow \mathbb{R}_+\) such that \(h(0)=h(T)\) and 
\[
\left\|\mathbf{x}_{s, t} - \mathbf{y}_{s, t}\right\| \le  h(s) + h(t) - 2\inf_{u\in[s, t]}h(u).
\]
This can be generalized to càdlàg paths in the following way. We say that two càdlàg paths \(\mathbf{x}, \mathbf{y}\) are tree-like equivalent, or \(\mathbf{x}\sim_t \mathbf{y}\), if their their Marcus interpolations (see Def. \ref{def: marcus_interpolation}), \(\hat{\mathbf{x}}\) and \(\hat{\mathbf{y}}\), are tree-like equivalent. It is straightforward to check that this indeed is an equivalence relation on \(\Omega_p^D(\mathbb{R}^d)\). Perhaps more interestingly, we obtain the following result. For ease of notation we shall henceforth mean \(S(\mathbf{x})_T\) when omitting the subscript from the singature.

\begin{proposition} \label{prop: sig_equi}
Let \(p\ge 1\). The map \(S(\cdot):\Omega^D_p(\mathbb{R}^d)\rightarrow G^{(*)}\) is injective up to tree-like equivalence, i.e., \(S(\mathbf{x}) = S(\mathbf{y})\) iff \(\mathbf{x}\sim_t \mathbf{y}\).
\end{proposition}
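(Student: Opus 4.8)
The plan is to reduce the statement entirely to the corresponding uniqueness result for continuous rough paths quoted immediately before the proposition. The bridge between the two settings is the observation that the Marcus signature of a càdlàg rough path is nothing but the terminal value of the ordinary signature of its continuous Marcus interpolation, so that both sides of the claimed equivalence are transported to the continuous setting along the map $\mathbf{x} \mapsto \hat{\mathbf{x}}$.

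First I would establish that $S(\mathbf{x})_T = S(\hat{\mathbf{x}})_T$ for every $\mathbf{x} \in \Omega_p^D(\mathbb{R}^d)$. By Definition \ref{def: signature}, the signature solves the linear Marcus RDE $dS(\mathbf{x})^N_t = S(\mathbf{x})^N_t \otimes \diamond\, d\mathbf{x}_t$; by Definition \ref{def: jump_rde}, the solution of any Marcus RDE driven by $\mathbf{x}$ is obtained as $\hat{y} \circ \eta_{\mathbf{x}}$, where $\hat{y}$ solves the classical continuous RDE driven by the Marcus interpolation $\hat{\mathbf{x}}$ and $\eta_{\mathbf{x}}$ is the reparameterisation of Definition \ref{def: marcus_interpolation}. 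Since $\eta_{\mathbf{x}}$ fixes the endpoints of $[0,T]$, evaluating at $t=T$ yields $S(\mathbf{x})_T = \hat{y}_T = S(\hat{\mathbf{x}})_T$; that is, the Marcus signature of $\mathbf{x}$ coincides with the signature of the continuous path $\hat{\mathbf{x}}$. Equivalently, one may appeal to reparameterisation invariance of the signature together with the fact that $\hat{\mathbf{x}}$ and the augmentation $\mathbf{x}^M$ differ only by a reparameterisation.

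With this identification in hand, the chain of equivalences is immediate. We have $S(\mathbf{x}) = S(\mathbf{y})$ iff $S(\hat{\mathbf{x}}) = S(\hat{\mathbf{y}})$ by the previous step. Since $\hat{\mathbf{x}}$ and $\hat{\mathbf{y}}$ are continuous weakly geometric $p$-rough paths, the uniqueness theorem for the continuous signature gives $S(\hat{\mathbf{x}}) = S(\hat{\mathbf{y}})$ iff $\hat{\mathbf{x}}$ and $\hat{\mathbf{y}}$ are tree-like equivalent in the continuous sense. Finally, this is exactly the definition of $\mathbf{x} \sim_t \mathbf{y}$ for càdlàg paths. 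Concatenating these three equivalences gives $S(\mathbf{x}) = S(\mathbf{y})$ iff $\mathbf{x} \sim_t \mathbf{y}$, as claimed.

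The only step carrying genuine analytic content is the continuous uniqueness theorem, which I invoke as a black box; the remaining work is bookkeeping with the reparameterisations $\eta$, $\eta_r$, and $\eta_{\mathbf{x}}$. Accordingly, I expect the main care to be required in step one: confirming that the linear Marcus RDE defining the signature delivers precisely $S(\hat{\mathbf{x}})$ at the terminal time rather than at some intermediate point, and that the definition of tree-like equivalence for càdlàg paths is phrased in terms of exactly the same interpolations $\hat{\mathbf{x}}, \hat{\mathbf{y}}$ appearing here. Both are definitional once the endpoints of the reparameterisations are tracked, so no substantive obstacle remains beyond this verification.
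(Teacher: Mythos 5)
Your proposal is correct and follows essentially the same route as the paper: the paper's proof is precisely the observation that $S(\mathbf{x})=S(\hat{\mathbf{x}})$ (and likewise for $\mathbf{y}$), after which the continuous uniqueness theorem and the definition of $\sim_t$ via Marcus interpolations finish the argument. You have simply filled in the endpoint-tracking details (that $\eta_{\mathbf{x}}$ fixes $T$, so the Marcus RDE for the signature evaluates to $S(\hat{\mathbf{x}})_T$ at the terminal time), which the paper leaves implicit.
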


\begin{proof}
    The result follows from the continuous case upon realizing that \(S(\mathbf{x})=S(\hat{\mathbf{x}})\) and analogously for \(\mathbf{y}\).
\end{proof}

\subsection{Young pairing} \label{app: young_pairing}

In many cases, given a geometric càdlàg rough path \(\mathbf{x}\in\Omega^D_{0, p}(\mathbb{R}^d)\) with \(p\in [2, 3)\) and a path \(h\in D_1([0, T], \mathbb{R}^e)\) of bounded variation one is interested in constructing a new rough path \(\mathbf{y}\in\Omega^D_{0, p}(\mathbb{R}^{d + e})\) such that the first level of \(\mathbf{y}\) is given by \(y=(x, h)\). In the continuous case this can be done by using the level two information \(\mathbf{x}^2\) and \(\int dh \otimes dh\) to fill in the corresponding terms in \(\mathbf{y}^2\) and using the well-defined Young cross-integrals to fill in the rest. The resulting level 2 rough path is called the \emph{Young pairing} of \(\mathbf{x}\) and \(h\) and we will denote it by \(\mathbf{y}=P(\mathbf{x}, h)\). The canonical example to keep in the mind is when \(h_t = t\), that is, we want to augment the rough path with an added time coordinate (see Def. \ref{def: time_aug}). In the càdlàg case one needs to be more careful in defining the appropriate Marcus lift.

\begin{definition}[Definition 3.21 \citep{chevyrev2019canonical}]
    Let \(\mathbf{x}\in \Omega^D_{0, p}(\mathbb{R}^d)\) with \(p\ge 1\) and \(h\in D_1([0, T], \mathbb{R}^e)\). Define the path \(\mathbf{z}=(\mathbf{x}, h)\) and the corresponding Marcus lift \(\hat{\mathbf{z}} = (\hat{\mathbf{x}}, \hat{h})\). The Young pairing of \(\mathbf{x}\) and \(h\) is the \(p\)-rough path \(P(\mathbf{x}, h)\in\Omega_p^D(\mathbb{R}^{d + e})\) such that
    \[
    P(\mathbf{x}, h) = P(\hat{\mathbf{x}}, \hat{h})\circ \eta_{\mathbf{z}}
    \]
    where \(P(\hat{\mathbf{x}}, \hat{h})\) is the usual Young pairing of a continuous rough path and a continuous bounded variation path (see Def. 9.27 in \cite{friz2010multidimensional}).
\end{definition}

We can then construct the time augmented rough path as the rough path obtained by the Young pairing with the simple continuous bounded variation path \(h_t = t\). It turns out that this pairing is continuous as a map from \(\Omega^D_{0, p}(\mathbb{R}^d)\) to \(\Omega^D_{0, p}(\mathbb{R}^{d + 1})\).

\begin{definition} \label{def: time_aug}
Let \(\mathbf{x}\in \Omega^D_{0, p}(\mathbb{R}^d)\). The time augmented version of \(\mathbf{x}\) is the unique rough path \(\tilde{\mathbf{x}}\in\Omega^D_{0, p}(\mathbb{R}^{d + 1})\) obtained by the Young pairing \(P(\mathbf{x}, h)\) of \(\mathbf{x}\) with the continuous bounded variation path \(h_t = t\).
\end{definition}

\begin{proposition} \label{prop: ta_cont}
Let \(p\in [1, 3)\). Then, the map \(\mathbf{x}\mapsto \tilde{\mathbf{x}}\) is continuous and injective as a map from \(\Omega^D_{0, p}(\mathbb{R}^d)\) to \(\Omega^D_{0, p}(\mathbb{R}^{d + 1})\).
\end{proposition}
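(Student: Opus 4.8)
The plan is to treat injectivity and continuity separately, in both cases reducing to the corresponding facts for \emph{continuous} rough paths via the Marcus interpolation.

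For injectivity I would exploit that time augmentation only adds a coordinate without destroying the original ones. Let $\pi:\mathbb{R}^{d+1}\to\mathbb{R}^d$ be the linear projection dropping the last (time) component. Extending $\pi$ levelwise by $\pi^{\otimes n}$ gives a graded algebra homomorphism of the truncated tensor algebra that maps $G^{\lfloor p\rfloor}(\mathbb{R}^{d+1})$ onto $G^{\lfloor p\rfloor}(\mathbb{R}^d)$ and hence acts on rough paths. By the defining property of the Young pairing (Definition 3.21 in \cite{chevyrev2019canonical}), the iterated integrals of $P(\mathbf{x},h)$ involving only the first $d$ directions are exactly those of $\mathbf{x}$, so applying $\pi$ to $\tilde{\mathbf{x}}=P(\mathbf{x},h)$ returns $\mathbf{x}$. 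Thus $\pi$ is a left inverse of $\mathbf{x}\mapsto\tilde{\mathbf{x}}$, which immediately yields injectivity.

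For continuity I would work at the level of the Marcus interpolations used to define $\alpha_p$ (Definitions \ref{def: metric_skor}--\ref{def: metric}). The first step is a commutation relation $\widehat{\tilde{\mathbf{x}}}^{\delta}=P(\hat{\mathbf{x}}^{\delta},\hat{h}^{\delta})$ between time augmentation and Marcus interpolation. Since $h_t=t$ is continuous, the jump times of $\tilde{\mathbf{x}}$ coincide with those of $\mathbf{x}$ and the time component is held constant on each fictitious interpolation interval; combined with $\tilde{\mathbf{x}}=P(\hat{\mathbf{x}},\hat{h})\circ\eta_{\mathbf{z}}$ from Definition \ref{def: time_aug}, this identifies the log-linear interpolation of $\tilde{\mathbf{x}}$ on $G^{\lfloor p\rfloor}(\mathbb{R}^{d+1})$ with the Young pairing of the interpolation of $\mathbf{x}$ and the fictitious-time-augmented path $\hat{h}^{\delta}$. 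The second step invokes continuity of the \emph{continuous} Young pairing: for $h$ of bounded variation and $p\in[1,3)$ the map $\hat{\mathbf{x}}\mapsto P(\hat{\mathbf{x}},\hat{h})$ is continuous in $p$-variation (see Definition 9.27 in \cite{friz2010multidimensional}). Given $\alpha_p(\mathbf{x}^{(n)},\mathbf{x})\to 0$, there are reparametrizations $\lambda_{n,\delta}$ with $|\lambda_{n,\delta}|$ and $d_p(\hat{\mathbf{x}}^{(n),\delta}\circ\lambda_{n,\delta},\hat{\mathbf{x}}^{\delta})$ small; using the \emph{same} $\lambda_{n,\delta}$ on the augmented paths, the time component $\hat{h}^{\delta}\circ\lambda_{n,\delta}$ stays close to $\hat{h}^{\delta}$ in $1$-variation because $|\lambda_{n,\delta}|$ is small, so joint continuity of the Young pairing gives $d_p(\widehat{\tilde{\mathbf{x}}}^{(n),\delta}\circ\lambda_{n,\delta},\widehat{\tilde{\mathbf{x}}}^{\delta})\to 0$, i.e. $\sigma_p(\widehat{\tilde{\mathbf{x}}}^{(n),\delta},\widehat{\tilde{\mathbf{x}}}^{\delta})\to 0$. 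Passing to the limit $\delta\to 0$ then gives $\alpha_p(\tilde{\mathbf{x}}^{(n)},\tilde{\mathbf{x}})\to 0$.

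The main obstacle I expect is the commutation relation together with the Skorokhod bookkeeping for the time coordinate: one must verify that the log-linear interpolation on $G^{\lfloor p\rfloor}(\mathbb{R}^{d+1})$ is compatible with the Young-pairing construction, so that interpolating the jumps of $\tilde{\mathbf{x}}$ and then projecting agrees with first interpolating $\mathbf{x}$ and then pairing with time, and that reparametrizing both components by the same near-identity $\lambda$ keeps the time component controlled so that the continuous Young-pairing estimates apply \emph{uniformly} in $\delta$ (the relevant bounds depend only on the $p$-variation of the interpolations, which is controlled uniformly in $\delta$, allowing the interchange of the limits in $n$ and $\delta$).
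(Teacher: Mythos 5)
Your proposal is correct in its essentials but takes a genuinely different route from the paper, and one technical claim in it needs repair.

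For injectivity, your projection argument is clean and self-contained: the levelwise extension of $\pi:\mathbb{R}^{d+1}\to\mathbb{R}^d$ is an algebra homomorphism, the Young pairing satisfies $\pi(P(\hat{\mathbf{x}},\hat{h}))=\hat{\mathbf{x}}$, and since $h$ is continuous the jumps of $\mathbf{z}=(\mathbf{x},h)$ are exactly those of $\mathbf{x}$, so $\eta_{\mathbf{z}}=\eta_{\mathbf{x}}$ and $\pi(\tilde{\mathbf{x}})=\hat{\mathbf{x}}\circ\eta_{\mathbf{x}}=\mathbf{x}$. The paper simply cites \cite{cuchiero2022universal} for this part, so your argument is more elementary and arguably preferable.

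For continuity, the paper exploits the definition of $\Omega^D_{0,p}(\mathbb{R}^d)$ as the $\alpha_p$-closure of $C_0^1(\mathbb{R}^d)$: it first shows $\alpha_p(P(\mathbf{x},h),P(x^n,h))\to 0$ along an absolutely continuous approximating sequence $x^n$, and then compares two general elements through their smooth approximants by a triangle inequality (using Remark 3.6 of \cite{chevyrev2019canonical} together with Theorem 9.32 of \cite{friz2010multidimensional}). The payoff of that reduction is that in every comparison one side is a genuinely continuous path, so the fictitious-time stretching occurs on one side only and the needed time-component estimate $d_1(\hat{h}^\delta,h)\le 2r\delta\to 0$ is immediate. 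Your direct two-sided comparison stumbles exactly at the point you flagged: if $f$ and $g$ denote the (increasing, sup-close) time components of the two Marcus interpolations, sup-closeness does \emph{not} imply $d_1(f,g)$ is small. For instance, $f(t)=t$ against an increasing sawtooth $g$ alternating slopes $0$ and $2$ on intervals of length $\epsilon$ gives $\|f-g\|_\infty\le\epsilon$ while the $1$-variation of $f-g$ equals $T$. So the claim that the time components stay close in $1$-variation because $|\lambda_{n,\delta}|$ is small is false as stated. The repair is standard: every time component involved is increasing from $0$ to $T$, hence has $1$-variation exactly $T$, so by the interpolation inequality $\|f-g\|_{q\text{-var}}\le\|f-g\|_{1\text{-var}}^{1/q}\,(2\|f-g\|_\infty)^{1-1/q}$ the time components converge in $q$-variation for any $q>1$; the Young pairing estimates of \cite{friz2010multidimensional} apply whenever $1/p+1/q>1$, which leaves room to choose such a $q$ since $p<3$. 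With $d_1$ replaced by $d_q$ in this way, your argument (including the commutation relation $\widehat{\tilde{\mathbf{x}}}^{\delta}=P(\hat{\mathbf{x}}^{\delta},\hat{h}^{\delta})$, which holds because the time component is frozen on fictitious intervals, killing all cross-integrals there) goes through.

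Finally, the proposition implicitly asserts that $\tilde{\mathbf{x}}$ lands in $\Omega^D_{0,p}(\mathbb{R}^{d+1})$, i.e.\ is again geometric. The paper gets this for free, since it exhibits $\tilde{\mathbf{x}}$ as an $\alpha_p$-limit of the canonical lifts $P(x^n,h)$ of absolutely continuous paths; in your setup this follows in one line by applying your continuity statement to a smooth approximating sequence, but it should be said explicitly.
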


\begin{proof}
    Let \(\mathcal{X}=\Omega_{0, p}^D(\mathbb{R}^d)\) be a metric space when equipped with \(\alpha_p\). Fix \(\mathbf{x}\in\mathcal{X}\) and let \(x^n\) be a sequence of absolutely continuous paths converging in \(\mathcal{X}\) to \(\mathbf{x}\). We shall first show that \(\tilde{x}^n\) then converges to \(\tilde{\mathbf{x}}\). Since \(x^n\) does not have any jumps and any reparameterisation of \(x^n\) is still absolutely continuous, we may assume that
    \[
    \alpha_p(\mathbf{x}, x^n) = \lim_{\delta\rightarrow 0}d_p(\hat{\mathbf{x}}^{\delta}, x^n)\rightarrow 0
    \]
    for \(n\rightarrow \infty\). Define \(\mathbf{z} = (\mathbf{x}, h)\) and \(\hat{\mathbf{z}}^d = (\hat{\mathbf{x}}^\delta, \hat{h}^\delta)\) the Marcus interpolation with \(\eta_{\mathbf{x}, \delta}\) the reparameterisation such that \(\mathbf{z} = \hat{\mathbf{z}}^\delta \circ \eta_{\mathbf{x}, \delta}\). Furthermore, let \(P(\mathbf{x}, h)\) be the Young pairing of \(\mathbf{x}\) and \(h\). By definition,
    \begin{align*}
    \alpha_p\left(P(\mathbf{x}, h), P(x^n, h)\right) 
        &= \lim_{\delta\rightarrow 0}\sigma_p\left(P(\hat{\mathbf{x}}^\delta, \hat{h}^\delta), P(x^n, h)\right) \\
        &\le \lim_{\delta\rightarrow 0} d_p\left(P(\hat{\mathbf{x}}^\delta, \hat{h}^\delta), P(x^n, h)\right) \\
        &\le \lim_{\delta\rightarrow 0}C \left(d_p(\hat{\mathbf{x}}^\delta, x^n) + d_1(\hat{h}^\delta, h)\right) \rightarrow 0
    \end{align*}
    for \(n\rightarrow \infty\) where \(C\) is just some generic constant depending only on \(p\). The last inequality follows from 9.32 in \cite{friz2010multidimensional}. Thus, if \(\mathbf{y}\in \mathcal{X}\) is such that \(\alpha_p(\mathbf{x}, \mathbf{y}) < \epsilon\), we can choose another sequence \(y^n\) of absolutely continuous paths and \(N\ge 1\) large enough so that
    \[
    \alpha_p\left(P(\mathbf{x}, h), P(\mathbf{y}, h)\right) \le 2\epsilon + \alpha_p(P(x^n, h), P(y^n, h)),
    \]
    \[
    \alpha_p(x^n, y^n) \le 2\epsilon
    \]
    for all \(n\ge N\). By Remark 3.6 in \cite{chevyrev2019canonical}, we then have that, up to choosing a large \(N\), \(d_p(x^n, y^n)\le \epsilon\) for all \(n\ge N\) and therefore, once more appealing to Theorem 9.32 in \cite{friz2010multidimensional}, 
    \[
    \alpha_p\left(P(x^n, h), P(y^n, h)\right) \le 2C\epsilon.
    \]
    In conclusion, \(\alpha_p\left(P(\mathbf{x}, h), P(\mathbf{y}, h)\right) \le 2(1 + C)\epsilon\) which proves the result.

    Injectivity follows from \cite{cuchiero2022universal}.
\end{proof}

\subsection{Event RDEs}

The results of Section \ref{sec: ERDE_backprop} hold in more generality. In fact, we can define Event RDEs similar to Definition \ref{def: esde_sol} where the inter-event dynamics are given by Marcus RDEs driven by càdlàg rough paths. Utilizing the correspondence between solutions to Marcus RDEs and Marcus SDEs, it then follows that the results in the main body of the paper are a special case of the results given below.

\begin{definition}[Event RDE] \label{def: erde_sol}
    Let $p \geq 1$ and $N \in \mathbb N$ be the number of events. Let \(\mathbf{x}\in \Omega_p^D(\mathbb{R}^d)\) and \(f=(f_1, \dots, f_d)\) be a family of $\textnormal{Lip}^{\gamma}$ on \(\mathbb{R}^e\) with \(\gamma > p\). Let \(\mathcal{E}:\mathbb{R}^e\rightarrow\mathbb{R}\) and \(\mathcal{T}: \mathbb{R}^e \rightarrow \mathbb{R}^e\) be an event and transition function respectively. We say that \(\left(y, (\tau_n)_{n=1}^N\right)\) is a solution to the Event RDE parameterised by $(y_0, \mathbf x, f, \mathcal{E}, \mathcal{T}, N)$  if $y_T = y^N_T$,
    \begin{equation} \label{eq: sol_event_time}
        y_t = \sum_{n=0}^N y^n_t \mathbf{1}_{[\tau_n, \tau_{n+1})}(t), \quad \tau_{n} = \inf\left\{t > \tau_{n-1} : \mathcal{E}(y^{n-1}_{t-}) = 0 \right\}, 
    \end{equation}
    with $\mathcal{E}(y^n_{\tau_n}) \neq 0$ and
    \begin{align} 
        dy_t^0 &= f(y^0_{t})\diamond d\mathbf{x}_t, \quad \text{started at } \ y^0_{0} = y_0, \label{eq: r_sol_initial} \\
        dy_t^n &= f(y^n_{t})\diamond d\mathbf{x}_t, \quad \text{started at } \ y^n_{\tau_n} = \mathcal{T}\left(y^{n-1}_{\tau_{n}-}\right). \label{eq: r_sol_inter_event} 
    \end{align}
\end{definition}

Existence and uniqueness of solutions to Event RDEs is proven in the same way as for Event SDEs. Indeed, under the usual assumption that the vector fields $f$ are \(\textnormal{Lip}^{\gamma}\), for $\gamma > p$, a unique solution to \eqref{eq: r_sol_initial} exists.  In fact, the solution map \(y_s \times (s,t) \mapsto y_t\) is a diffeomorphism for every fixed \(0 \leq s<t \leq T\) (see, e.g., Theorem 3.13 in \cite{chevyrev2019canonical}). It follows that we can iteratively define a unique sequence of solutions \(y^n\in D_p([t_n, T], \mathbb{R}^d)\). Finally, as mentioned in Remark \ref{rem: sde_rde_match}, if the driving rough path \(\mathbf{x}\) is the Marcus lift of a semi-martingale, the inter-event solutions agree almost surely with the solutions to the corresponding Marcus SDE.

\begin{theorem} \label{thm: r_existence}
    Under Assumptions \ref{ass: event_finite}-\ref{ass: event_domain_trans}, there exists a unique solution \((y, (\tau_n)_{n=1}^N)\) to the Event RDE of Definition \ref{def: erde_sol}. Furthermore, if \(\mathbf{x}\) is the Marcus lift of a Brownian motion, the solution coincides almost surely with the solution to the corresponding Event SDE as given in Def. \ref{def: esde_sol}.
\end{theorem}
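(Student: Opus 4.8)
The plan is to establish Part 1 by induction on the event index, constructing the inter-event flows and event times one at a time and invoking the well-posedness theory for Marcus RDEs (Theorem \ref{thm: jump_rde_cont} together with the flow property discussed immediately before the statement). Since $f$ is $\textnormal{Lip}^\gamma$ with $\gamma > p$, equation \eqref{eq: r_sol_initial} started at $y_0$ admits a unique solution $y^0 \in D_p([0,T], \mathbb{R}^e)$, and the associated solution map is a flow of diffeomorphisms. I would then set $\tau_1 = \inf\{t > 0 : \mathcal{E}(y^0_{t-}) = 0\}$ and check that this infimum is well-defined and attained: because $y^0$ is càdlàg and $\mathcal{E}$ is continuous, $t \mapsto \mathcal{E}(y^0_{t-})$ is left-continuous, so the first crossing is a genuine hitting time. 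Applying the transition $y^1_{\tau_1} = \mathcal{T}(y^0_{\tau_1 -})$ reinitialises the dynamics, and Assumption \ref{ass: event_domain_trans} guarantees that the post-transition state does not lie on $\ker \mathcal{E}$, so the requirement $\mathcal{E}(y^n_{\tau_n}) \neq 0$ in Definition \ref{def: erde_sol} holds and no event is immediately retriggered.

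Iterating this construction produces a sequence of inter-event solutions $y^n \in D_p([\tau_n, T], \mathbb{R}^e)$, each unique by the same well-posedness result, together with event times $\tau_1 < \tau_2 < \cdots$. The crucial step is ruling out accumulation: Assumption \ref{ass: event_finite}, applied with $a = y^n_{\tau_n} \in \textnormal{im}\,\mathcal{T}$, yields $\tau_{n+1} - \tau_n > c$, so at most $\lceil T/c \rceil$ events can occur on $[0,T]$; combined with the cap $N$, the procedure terminates after finitely many steps. Concatenating the pieces through the indicator decomposition in \eqref{eq: sol_event_time} yields a solution $(y, (\tau_n)_{n=1}^N)$. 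Uniqueness follows because each inter-event RDE has a unique solution and each event time is a deterministic pathwise functional of the preceding flow, so any two solutions must agree stepwise.

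For Part 2, I would fix a Brownian path and invoke Remark \ref{rem: sde_rde_match}: the canonical Marcus lift $\mathbf{x}$ of Brownian motion is a.s. a geometric càdlàg $p$-rough path for $p \in [2,3)$, and the solution of the Marcus RDE $dy = f(y)\diamond d\mathbf{x}$ agrees almost surely with the solution of the corresponding Stratonovich SDE. Applying this on a full-measure event to the initial flow \eqref{eq: r_sol_initial}, the two versions of $y^0$ coincide a.s.; since $\tau_1$ and the transition are the same deterministic functionals of $y^0$ in both formulations, $\tau_1$ and the reinitialised state $y^1_{\tau_1}$ also coincide on that event. Proceeding inductively and intersecting the finitely many full-measure sets (at most $N+1$, one per inter-event flow) on which the successive flows agree, I obtain a single full-measure event on which all event times, all transitions, and hence the whole trajectory $y$ coincide with the Event SDE solution of Definition \ref{def: esde_sol}.

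The main obstacle I anticipate is the rigorous treatment of the event times when the driving signal carries jumps. Because $y^{n-1}$ is only càdlàg, one must argue carefully that the left-limit hitting time $\inf\{t > \tau_{n-1} : \mathcal{E}(y^{n-1}_{t-}) = 0\}$ is genuinely attained and does not degenerate — in particular that the process reaches the level rather than jumping across it — and that it depends consistently on the path, so that the coincidence argument in Part 2 transfers \emph{exactly} (not merely in distribution) on the good set. This is where Assumptions \ref{ass: event_finite}–\ref{ass: event_domain_trans} and the left-continuity built into the event condition \eqref{eq: sol_event_time} do the real work; the remaining steps, namely well-posedness of each flow and the SDE–RDE identification, are direct invocations of the cited results.
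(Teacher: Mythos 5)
Your proposal is correct and follows essentially the same route as the paper: iterative construction of the inter-event flows using well-posedness of Marcus RDEs for $\textnormal{Lip}^\gamma$ vector fields (the flow-of-diffeomorphisms property from \cite{chevyrev2019canonical}), with Assumptions \ref{ass: event_finite}--\ref{ass: event_domain_trans} ruling out event accumulation and immediate retriggering, and Part 2 obtained by applying Remark \ref{rem: sde_rde_match} inductively on each inter-event interval. The paper's own argument is only a brief sketch preceding the theorem; your write-up fills in the same steps (hitting-time well-definedness, the $\lceil T/c\rceil$ bound, intersection of finitely many full-measure sets) in more detail without deviating from its strategy.
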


Hence, the Event SDEs considered in the main text are special cases of Event RDEs driven by the Marcus lift of a Brownian motion. Yet, the more general formulation of Event RDEs allows to treat, using the same mathematical machinery of rough path theory a much larger family of driving noises such as fractional Brownian motion or even smooth controls. Also, since the driving rough path is allowed to be càdlàg, the model class given by Def. \ref{def: erde_sol} includes cases where the inter-event dynamics are given by Marcus SDEs driven by general semi-martingales.

\section{Proof of Theorem \ref{thm: auto_diff} } \label{app: proofs}

The proof of Theorem \ref{thm: auto_diff} presented below covers the case where \((y, (\tau_n)^{N}_{n=1})\) is the solution to an Event RDE. Throughout we consider vector fields $\mu \in \text{Lip}^1, \sigma \in \text{Lip}^{2+\epsilon}$ and specialise to  Event RDEs where the inter-event dynamics are given by
\begin{equation} \label{eq: drift_rde}
    dy^n_t = \mu(y^n_t) dt + \sigma(y^n_t) \diamond d\mathbf{x}_t,
\end{equation}
where \(\mathbf{x}\in\Omega^D_p(\mathbb{R}^d)\). The notation above deserves some clarification. One can define the vector field \(f = (\mu, \sigma)\) and the Young pairing \(\tilde{\mathbf{x}}_t\) of \(\mathbf{x}\) and \(h_t = t\). Assuming \(\mu \in \text{Lip}^{2 + \epsilon}\) we can then view \(y^n_t\) as the unique solution to the Marcus RDE
\[
dy^n_t = f(y^n_t)\diamond \tilde{\mathbf{x}}_t.
\]
Alternatively, if one is not ready to impose the added regularity on the drift \(\mu\), one can view \ref{eq: drift_rde} as a RDE with drift as in Ch. 12 in \cite{friz2010multidimensional}. To accomodate this more general case where the path driving the diffusion term might be 1) càdlàg and 2) is not restricted to be the rough path lift of a semi-martingale, we shall need the following two additional assumptions: 

\begin{assumption} \label{ass: der_event_jump_align}
    For any \(n\in [N]\), there exists a non-empty interval \(I_n = (\tau_n - \delta_n, \tau_n + \delta_n)\) such that \(\mathbf{x}\) is continuous over $I_n$. In other words, the càdlàg rough path \(\mathbf{x}\), does not jump in small intervals around the event times \((\tau_n)\).
\end{assumption}

\begin{assumption} \label{ass: der_wong_zakai}
    For all \(0\le n\le N\) we define \(s_n = \tau_{n} - \delta_n/2\) and \(t_n = \tau_{n+1} + \delta_{n+1}/2\). It holds that \(\mathbf{x}\in \Omega^D_{0, p}([s_n, t_n], \mathbb{R}^d)\), i.e., \(\mathbf{x}\) is a geometric \(p\)-rough path on the intervals \([s_n, t_n]\).
\end{assumption}

\begin{remark}
    Note that Assumption \ref{ass: der_event_jump_align} trivially holds if \(\mathbf{x}\) is continuous. Otherwise, it is enough to assume, e.g., that \(\mathbf{x}\) is the Marcus lift of a \emph{finite activity} Lèvy process. Furthermore, by the properties of the metric $\alpha_p$, if \(\mathbf{x}\) is the canonical Marcus lift of a semi-martingale \(x\in D_p([s, t], \mathbb{R}^{d-1})\), then there exists a sequence $(x^m)$ of piece-wise linear paths \(x^m\in C_0^1([0, T], \mathbb{R}^{d-1})\) such that
    \begin{equation*}
        \alpha_{p, [s_n, t_n]}(x^m, \mathbf{x})\rightarrow 0 \quad \text{as } \  m\rightarrow \infty \quad \text{a.s.}
    \end{equation*}
    See, e.g. \citep[Example 4.21]{chevyrev2019canonical}. The setting of Section \ref{sec: ERDE_backprop} is therefore a special case of the setting considered here and Theorem \ref{thm: auto_diff} follows from the proof below.
\end{remark}

We shall need two technical lemmas for the proof of \ref{thm: auto_diff}

\begin{lemma} \label{lem: event_continuity}
    Assume that Assumptions \ref{ass: event_finite}-\ref{ass: der_cross} and \ref{ass: der_event_jump_align}-\ref{ass: der_wong_zakai} are satisfied. Then, there exists an open ball \(B_0\subset O\) such that the following holds:
    \begin{enumerate}
        \item For all \(a\in B_0\), \(|\tau(a)| = N\). \label{lem: event_continuity_const}
        \item For any \(n\in [N]\), the maps
        \[
        B_0\ni a \mapsto \left(\tau_n(a), y^{n-1}_{\tau_n(a)}(a)\right) \quad \text{are continuous.}
        \]
        \label{lem: event_continuity_cont}
        \item For the sequence \((x^m)\) as given in Assumption \ref{ass: der_wong_zakai} and \((y^m, (\tau^m_n)_{n=1}^N)\) the corresponding Event RDE solution, for all \(n\in [N]\), it holds that 
        \[
        \lim_{m\rightarrow \infty}\sup_{a\in B_0}\left(\left|\tau^m_n(a) - \tau_n(a)\right| + \left|y^{m, n-1}_{\tau^m_n(a)}(a) - y^{n-1}_{\tau_n(a)}(a)\right|\right) = 0.
        \]
        \label{lem: event_continuity_unif_apprx}
    \end{enumerate}
\end{lemma}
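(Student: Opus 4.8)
The plan is to prove all three parts simultaneously by induction on the event index $n$, reducing everything to a local analysis of a single transversal crossing of the event surface $\ker\mathcal E$ and then propagating through the continuous transition map $\mathcal T$. Fix the reference initial condition $a_0\in O$ whose solution has exactly $N$ events $\tau_1<\dots<\tau_N$. The backbone is the stability of the inter-event flow: on each interval $[s_n,t_n]$ where $\mathbf x$ is geometric (Assumption \ref{ass: der_wong_zakai}), Theorem \ref{thm: jump_rde_cont} gives that $(a,t)\mapsto y^{n}_t(a)$ is jointly continuous and that the approximating flows $y^{m,n}_t(a)$ driven by $x^m$ converge to $y^n_t(a)$ uniformly over $(a,t)$ in bounded sets, since $\alpha_p(x^m,\mathbf x)\to 0$ and the solution map is locally Lipschitz. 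Assumption \ref{ass: der_event_jump_align} makes $\mathbf x$ continuous on a neighbourhood $I_n$ of each reference event time, so there the scalar path $t\mapsto\mathcal E(y^{n-1}_t(a))$ is continuous.

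The heart of the argument is to show that this scalar path crosses $0$ transversally, uniformly in $a$ near $a_0$ and in the approximation index $m$. The naive estimate fails: differentiating $\mathcal E(y^{n-1}_t)$ in time produces a diffusion contribution $\nabla\mathcal E(y^{n-1}_t)\sigma(y^{n-1}_t)\diamond d\mathbf x_t$ whose fluctuations are a priori of larger order than the drift for small time increments. The key is that Assumption \ref{ass: der_orth} makes the diffusion tangent to $\ker\mathcal E$, while Assumption \ref{ass: der_cross} forces $\nabla\mathcal E\neq 0$ on its zero set, so $\ker\mathcal E$ is locally a smooth hypersurface on which $\mathcal E$ serves as a coordinate; Hadamard's lemma then yields, on a neighbourhood of $y^{n-1}_{\tau_n}$, a factorisation $\nabla\mathcal E(y)\sigma(y)=\mathcal E(y)\,R(y)$ with $R$ of the regularity needed to drive the rough integral. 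Writing $u_t:=\mathcal E(y^{n-1}_t)$ and invoking the Marcus chain rule gives the \emph{scalar linear} RDE
\begin{equation*}
  du_t = b_t\,dt + u_t\, R(y^{n-1}_t)\diamond d\mathbf x_t, \qquad b_t := \nabla\mathcal E(y^{n-1}_t)\,\mu(y^{n-1}_t), \qquad u_{\tau_n}=0.
\end{equation*}
Introducing the strictly positive integrating factor $Z_t=\exp\!\bigl(-\!\int_{\tau_n}^{t}R(y^{n-1}_s)\diamond d\mathbf x_s\bigr)$ removes the multiplicative noise and yields $d(Z_t u_t)=Z_t b_t\,dt$, hence $Z_t u_t=\int_{\tau_n}^{t}Z_s b_s\,ds$. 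Since $b_{\tau_n}=\nabla\mathcal E(y^{n-1}_{\tau_n})\mu(y^{n-1}_{\tau_n})\neq 0$ has a definite sign by Assumption \ref{ass: der_cross}, continuity keeps that sign on a shrunken $I_n$, so $Z_t u_t$ is strictly monotone there and $u_t$ crosses $0$ exactly once, transversally.

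With transversality in hand, Parts \ref{lem: event_continuity_cont} and \ref{lem: event_continuity_unif_apprx} follow from a uniform implicit-function argument: the coefficients $b,R,Z$ and the integral $\int Zb$ depend continuously on $a$ and converge uniformly when $\mathbf x$ is replaced by $x^m$, and the strict monotonicity comes with a uniform lower bound on the slope (from a uniform lower bound on $|b|$ near $\ker\mathcal E$ and two-sided bounds on $Z$ on the relevant compact set). Hence the unique root $\tau_n(a)$ is continuous and $\tau_n^m(a)\to\tau_n(a)$ uniformly over a small ball $B_0$, shrunk so the trajectories stay in the factorisation neighbourhood; composing with the uniformly convergent, jointly continuous flow gives the same for the state $y^{n-1}_{\tau_n(a)}(a)$. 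Applying the continuous map $\mathcal T$, using Assumption \ref{ass: event_domain_trans} to rule out immediate re-triggering and Assumption \ref{ass: event_finite} to enforce a positive gap, I restart the argument and iterate up to $n=N$. That the count is exactly $N$ on $B_0$ (Part \ref{lem: event_continuity_const}) follows because the $N$ transversal crossings persist under perturbation while, on the reference trajectory, the càdlàg path $t\mapsto\mathcal E(y^{N}_t)$ is bounded away from $0$ on $(\tau_N,T]$; uniform convergence in $a$ and $m$ preserves both facts, so no crossing is created or destroyed.

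The main obstacle is precisely the transversality step: in the rough regime one cannot differentiate $\mathcal E(y^{n-1}_t)$ in time, and the diffusion fluctuations would otherwise dominate the drift near the crossing. The Hadamard factorisation enabled by Assumptions \ref{ass: der_orth}--\ref{ass: der_cross}, together with the integrating-factor reduction to a noise-free monotone equation, is what makes the crossing clean and, crucially, uniform across $a\in B_0$ and across the Wong--Zakai approximants $x^m$; the remaining bookkeeping (joint continuity of flows, persistence of the event count) is then routine given Theorem \ref{thm: jump_rde_cont}.
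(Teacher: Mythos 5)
Your overall architecture---induction over the event index, inter-event flow stability from Theorem \ref{thm: jump_rde_cont} on the intervals provided by Assumptions \ref{ass: der_event_jump_align}--\ref{ass: der_wong_zakai}, a local transversal-crossing analysis at each event, and persistence of the event count---matches the paper's proof. The genuine problem lies in how you establish transversality. You factor $\nabla\mathcal{E}(y)\sigma(y)=\mathcal{E}(y)R(y)$ via Hadamard's lemma, write $u_t=\mathcal{E}(y^{n-1}_t)$ as a solution of the scalar linear RDE $du_t=b_t\,dt+u_t R(y^{n-1}_t)\diamond d\mathbf{x}_t$, and remove the noise with the integrating factor $Z_t=\exp\bigl(-\int_{\tau_n}^t R(y^{n-1}_s)\diamond d\mathbf{x}_s\bigr)$. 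Each of these steps requires regularity the hypotheses do not grant: the paper assumes $\mathcal{E}$ is only continuously differentiable. Concretely, (i) the rough chain rule needed to write $du_t=\nabla\mathcal{E}(y_t)\diamond dy_t$ requires $\mathcal{E}\in C^2$ at least when the driver is a $p$-rough path with $p\ge 2$ (a $C^1$ remainder is $o(|y_{s,t}|)$, not the $O(|t-s|^{2/p})$ needed); (ii) the map $y\mapsto\nabla\mathcal{E}(y)\sigma(y)$ is then merely continuous, and Hadamard's lemma does not apply to continuous functions---a function can vanish on $\ker\mathcal{E}$ like $\sqrt{|\mathcal{E}|}$, in which case no continuous factor $R$ exists; (iii) even granting a factorization, for $\int u_s R(y_s)\diamond d\mathbf{x}_s$ and $Z$ to be well-defined rough integrals, $R\circ y^{n-1}$ must be controlled by $\mathbf{x}$, forcing $R\in C^{1+}$, hence effectively $\mathcal{E}\in C^3$ and additional smoothness of $\sigma$. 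So the core of your argument is unavailable under the stated assumptions; it proves the lemma only after a substantial strengthening of the regularity of $\mathcal{E}$.

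For contrast, the paper's proof stays within $C^1$: it obtains the sign change of $\mathcal{E}$ across $\tau_n$ by applying the finite-dimensional Mean Value Theorem to $\mathcal{E}$ at the times $\tau_n\pm\eta$, invoking Assumption \ref{ass: der_orth} to discard the diffusion contribution to the increment near the crossing and Assumption \ref{ass: der_cross} for the nonvanishing drift term; uniformity in $a$ and in the approximation index $m$ is then obtained not from an implicit-function slope bound but from the reparameterizations $\lambda_m$ furnished by Theorem \ref{thm: jump_rde_cont} together with the Marcus $\delta$-interpolations, followed by the same induction you propose. Your integrating-factor idea is an attractive, quantitative way of making rigorous the intuition that the noise enters $\mathcal{E}(y_t)$ only multiplicatively near the crossing (indeed, it would sharpen the paper's rather terse treatment of the discarded diffusion term), but to use it here you must either add the missing smoothness hypotheses explicitly or replace the factorization by an estimate that survives $C^1$ regularity.
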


\begin{proof}
    Recall that \(\Phi(y, s, t; \mathbf{x})\) is the solution map or flow of the differential equation
    \[
    dy_u = f(y_u)\diamond d\tilde{\mathbf{x}}_u, \quad y_s = y
    \]
    evaluated at time \(t\). The first step will be to prove continuity at \(y_0\). In particular, let \(y_0^m\in O\) approach \(y_0\) for \(m\) going to infinity and denote the solutions to the corresponding Event RDEs by \(\left(y^m, (\tau^m_n)_{n=1}^{N_m}\right)\). We claim that \(\lim_{m\rightarrow \infty} N_m=N\) and 
    \[
        \lim_{m\rightarrow \infty}\tau^m_n = \tau_n, \quad
        \lim_{m\rightarrow \infty} y^{m, n-1}_{\tau^m_n} = y^n_{\tau_n}.
    \]
    To see this, note that, by Theorem \ref{thm: jump_rde_cont}, there exists a sequence \(\lambda_m\in\Lambda\) of continuous reparameterizations such that \(|\lambda_m|\rightarrow 0\) and
    \begin{equation} \label{eq: sol_map_jump_conv}
        \sup_{(s, t)\in\Delta_T}\left|\Phi(y_0, s, t; \mathbf{x}) - \Phi(y_0^m, \lambda^m(s), \lambda^m(t); \mathbf{x})\right| \rightarrow 0
    \end{equation}
    for \(m\rightarrow \infty\). Note, furthermore, that \(\Phi(y^m_0, s, t;\mathbf{x}\circ \lambda_m) = \Phi(y^m_0, \lambda_m(s), \lambda_m(t);\mathbf{x})\) for all \((s, t)\in\Delta_T\). We let \(\left(\tilde{y}^m, (\tilde{\tau}^m_n)_{n=1}^{N_m}\right)\) be the solution to the Event RDE where \((y_0, \mathbf{x})\) is replaced by \((y_0^m, \mathbf{x}\circ\lambda_m)\). It suffices to prove that, for all \(1\le n\le N\),
    \begin{equation} \label{eq: et_continuity_tilde}
        \lim_{m\rightarrow \infty}\tilde{\tau}^m_n = \tau_n, \quad
        \lim_{m\rightarrow \infty} \tilde{y}^{m, n-1}_{\tilde{\tau}^m_n} = y^n_{\tau_n}.
    \end{equation}
    Indeed, since \(\tilde{\tau}_n^m = \lambda^{-1}_m(\tau^m_n)\) and \(|\lambda_m|\rightarrow 0\), it then follows that \(\tau^m_n\rightarrow \tau_n\) for \(m\rightarrow \infty\). Furthermore, we have \(\tilde{y}^{m, n-1}_{\tilde{\tau}^m_n} = y^{m, n-1}_{\tau^m_n}\).

    We shall proof \eqref{eq: et_continuity_tilde} using an inductive argument. We have that
    \begin{align*}
        y^0_t &= \Phi(y_0, 0, t; \mathbf{x}), \quad \forall t\in[0, \tau_1], \\
        \tilde{y}^{m, 0}_t &= \Phi(y^m_0, 0, t; \mathbf{x}\circ \lambda_m), \quad \forall t\in[0, \tilde{\tau}^m_1].
    \end{align*}
    Now fix some \(0<\epsilon<\delta_1\) where \(\delta_1\) is given in Assumption \ref{ass: der_event_jump_align}. Note that \(|\mathcal{E}(y^0_t)| > 0\) for all \(t\in[0, \tau_1-\epsilon]\) and therefore, by \eqref{eq: sol_map_jump_conv}, it follows that there exists an \(m_0\in\mathbb{N}\) such that, for all \(m\ge m_0\),
    \[
    \inf_{t\in[0, \tau_1-\epsilon]}\left|\mathcal{E}(\Phi(y^m_0, 0, t; \mathbf{x}\circ\lambda_m))\right| > 0
    \]
    so that \(\tilde{\tau}^{m}_1\ge \tau_1 - \epsilon\). Next, for some small \(0<\eta<\epsilon\), Assumption \ref{ass: der_orth} and the Mean Value Theorem imply the existence of \(a^+_\eta = r^+_\eta y^0_{\tau_1} - (1 - r^+_\eta) y^0_{\tau_1 + \eta}\), and \(a^-_\eta = r^-_\eta y^0_{\tau_1-\eta} + (1 - r^-_\eta)y^0_{\tau_1}\) with \(r^+_\eta, r^-_\eta\in (0, 1)\) such that
    \begin{align*}
        \mathcal{E}\left(y^0_{\tau_1 + \eta}\right) &= \mathcal{E}\left(y^0_{\tau_1}\right) + \nabla \mathcal{E}(a^+_\eta)\int_{\tau_1}^{\tau_1+ \eta} \mu(y^0_{s})dy_s, \\
        \mathcal{E}\left(y^0_{\tau_1 - \eta}\right) &= \mathcal{E}\left(y^0_{\tau_1}\right) -\nabla \mathcal{E}(a^-_\eta)\int_{\tau_1-\eta}^{\tau_1} \mu(y^0_{s})dy_s,
    \end{align*}
    But then, by Assumption \ref{ass: der_cross} and the fact that \(\mathcal{E}(y^0_{\tau_1})=0\), for \(\eta\) small enough, \(\mathcal{E}(y^0_{\tau_1 + \eta})\) and \(\mathcal{E}(y^0_{\tau_1 - \eta})\) must lie on different sides of \(0\). Assumption \ref{ass: der_event_jump_align} and eq. \eqref{eq: sol_map_jump_conv} then yield the existence of a \(m_1\ge m_0\) such that \(\tilde{\tau}^m_1\le \tau_1 + \eta\le \tau_1 + \epsilon\) and \(\inf_{t\in [0, \tau_1 + \eta]}|\mathcal{E}(\tilde{y}^{m, 0}_t)|>0\) for all \(m\ge m_1\). It follows that \(\tilde{\tau}^m_1\rightarrow \tau_1\). Finally, note that
    \[
    \left|\tilde{y}^{m, 0}_{\tilde{\tau}^m_1} - y^{0}_{\tau_1}\right| \le \left|\tilde{y}^{m, 0}_{\tilde{\tau}^m_1} - y^{0}_{\tilde{\tau}^m_1}\right| + \left|y^{0}_{\tilde{\tau}^m_1} - y^0_{\tau_1}\right|.
    \]
    Another application of \eqref{eq: sol_map_jump_conv} shows that the first term on the right hand side goes to 0 for \(m\rightarrow \infty\) and second term vanishes by Assumption \ref{ass: der_event_jump_align}.

    To prove the inductive step, assume that \eqref{eq: et_continuity_tilde} holds for \(i\le n\). For all \(t\in[\tau_n, \tau_{n+1}]\) it holds that 
    \[
    \tilde{y}^{m, n}_t = \Phi\left( \mathcal{T} \left( \tilde{y}^{m, n-1}_{\tilde{\tau}^m_n} \right), \tilde{\tau}^m_n, t; \mathbf{x}\circ \lambda_m\right), \quad y^n_t = \Phi\left(\mathcal{T}\left(y^{n-1}_{\tau_n}\right), \tau_n, t; \mathbf{x}\right)
    \]
    and, since \(\tilde{y}^{m, n-1}_{\tilde{\tau}^m_n}\rightarrow y^{n-1}_{\tau_n}\), \(\tilde{\tau}^m_n\rightarrow \tau_n\), and \(\mathcal{T}\) is continuous,
    \[
        \lim_{m\rightarrow \infty}\sup_{t\in[\tau_n, T]}\left| \Phi\left( \mathcal{T} \left( \tilde{y}^{m, n-1}_{\tilde{\tau}^m_n} \right), \tilde{\tau}^m_n, t; \mathbf{x}\circ \lambda_m\right) - \Phi\left(\mathcal{T}\left(y^{n-1}_{\tau_n}\right), \tau_n, t; \mathbf{x}\right)\right| = 0
    \]
    whence the same argument as above proves that \eqref{eq: et_continuity_tilde} also holds for \(n+1\). This completes the proof of the claim.

    Now, by continuity at \(y_0\), it follows that there exists some small \(r>0\) such that for all \(a\in B_r(y_0)\) it holds that \(|\tau(a)|=N\) and \(\tau_n(a)\in (\tau_n - \delta_n/2, \tau_n + \delta_n/2)\) for all \(n\in[N]\) where \(\delta_n\) is as in Assumption \ref{ass: der_event_jump_align}. Furthermore, since Assumption \ref{ass: event_finite}-\ref{ass: der_cross} and \ref{ass: der_event_jump_align}-\ref{ass: der_wong_zakai} still hold for \(a\in B_r(y_0)\), the same argument as above can be applied to show that \(\tau_n(a)\) and \(y^{n-1}_{\tau_n(a)}(a)\) are continuous at \(a\). This proves parts \ref{lem: event_continuity_const} and \ref{lem: event_continuity_cont}.
    
    To prove part \ref{lem: event_continuity_unif_apprx} we employ a similar induction argument to the one above. First, note that, by Theorem \ref{thm: jump_rde_cont}, there exists a constant \(C>0\) not depending on \(x\) such that
    \[
    \alpha_{p, [0, t_0]}\left(y^{m, 0}(a), y^0(a)\right)\le C \alpha_{p, [0, t_0]}\left(x^m, \mathbf{x}\right).
    \]
    Since the latter term does not depend on \(y\) and goes to 0 for \(m\) going to infinity, we find that
    \begin{equation} \label{eq: alph_smooth_unif_approx}
        \lim_{m\rightarrow \infty}\sup_{a\in B_r(y_0)} \alpha_{p, [0, t_1]}\left(y^{m, 0}(a), y^0(a)\right)=0.
    \end{equation}
    Recall, \(y^{\delta, 0}(a)\) is the continuous path obtained by the Marcus interpolation with \(\delta r_k\) instead of \(r_k\) and similarly for \(y^{m, \delta, 0}(a)\). Note that \(y^{m, \delta, 0}(a)= y^{m, 0}(a)\) by continuity. Letting \(\tau^m_1(a)\) and \(\tau^{\delta}_1(a)\) denote the first event time of \(y^{m, 0}(a)\) and \(y^{\delta, 0}(a)\) respectively, we have, for all \(m\in \mathbb{N}\)
    \[
    \sup_{a\in B_r(x_0)}\left|\tau^m_1(a) - \tau_1(a)\right| \le \sup_{a\in B_r(y_0)}\lim_{\delta\rightarrow 0}\left(\left|\tau^m_1(a) - \tau^\delta_1(a)\right| + \left|\tau^\delta_1(a) - \tau_1(a)\right|\right).
    \]
    Now, let \(B_0 = B_r(y_0)\). Since \(\tau_1(a)\in (\tau_1 - \delta_1 / 2, \tau_1 + \delta_1/2)\) for all \(a\in B_0\) and \(\mathbf{x}\) is continuous over this interval, it follows that \(\left|\tau^\delta_1(a) - \tau_1(a)\right|\) goes to 0 as \(\delta\rightarrow 0\) for each \(a\in B_0\). Furthermore, by definition of the metric \(\alpha_p\), eq. \eqref{eq: alph_smooth_unif_approx}, and the fact that \(y^{m, 0}_0(a)=a=y^0_0(a)\),  for each \(a\in B_0\), a similar argument as the one employed in the beginning of the proof then shows that \(|\tau^m_1(a) - \tau^{\delta}_1(a)|\rightarrow 0\) as \(\delta\rightarrow 0\) and, thus, \(\lim_{m\rightarrow \infty}\sup_{a\in B_0}|\tau^m_1(a)-\tau_1(a)|=0\). Finally, starting from the inequality
    \[
    \left|y^{m, 0}_{\tau^m_1(a)}(a) - y^0_{\tau_1(a)}(a)\right| \le \left|y^{m, 0}_{\tau^m_1(a)}(x) - y^{\delta, 0}_{\tau^\delta_1(a)}(a)\right| + \left|y^{\delta, 0}_{\tau^\delta_1(a)}(a) - y^0_{\tau_1(a)}(a)\right|
    \]
    and taking the limit as \(\delta\rightarrow 0\) and then the supremum over \(x\in B_0\) on both sides, we can argue in exactly the same way to show that part \ref{lem: event_continuity_unif_apprx} holds for \(n=1\). We can then argue by induction, just as in the first part of the proof, to show that it holds for all subsequent event times as well. Thus, the set \(B_0\) satisfies all the stated requirements.
\end{proof}

\begin{lemma}\label{lem: approx_cont_int}
    Let Assumption \ref{ass: der_event_jump_align} hold and \(x^m\) be as in Assumption \ref{ass: der_wong_zakai}. Then, for all \(n\in [N]\) and \(p'>p\),
    \[
    \lim_{m\rightarrow \infty}d_{p', [s, t]}\left(x^m, \mathbf{x}\right)=0, \quad \text{for any } \  \tau_n - \delta_n/2 \le s < t \le \tau_n + \delta_n / 2.
    \]
\end{lemma}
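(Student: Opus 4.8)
The plan is to upgrade the hypothesis, namely $\alpha_p$-convergence of $x^m$ to $\mathbf{x}$ on the large interval $[s_n,t_n]$, into convergence in the genuine homogeneous $p'$-variation metric on the sub-interval $[s,t]$, by exploiting that $\mathbf{x}$ is continuous there. The backbone is a standard rough-path interpolation estimate (see, e.g., \cite{friz2010multidimensional}): for continuous rough paths, a sequence that is bounded in $p$-variation and converges uniformly also converges in $p'$-variation for every $p'>p$. Thus the task reduces to extracting, from the $\alpha_p$-hypothesis, (i) a uniform bound $\sup_m \|x^m\|_{p\text{-var};[s,t]}<\infty$ and (ii) genuine uniform convergence $\|x^m-\mathbf{x}\|_{\infty;[s,t]}\to 0$, after which the interpolation estimate closes the argument.

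First I would record the relevant geometry. Since $\tau_n-\delta_n/2\le s<t\le \tau_n+\delta_n/2$, the interval $[s,t]$ is compactly contained in $I_n=(\tau_n-\delta_n,\tau_n+\delta_n)$, on which $\mathbf{x}$ is continuous by Assumption \ref{ass: der_event_jump_align}; in particular $[s,t]\subset[s_n,t_n]$ and there is a positive buffer between $[s,t]$ and $\partial I_n$. Because $\mathbf{x}$ has no jumps in a neighbourhood of $[s,t]$ and each $x^m$ is piecewise linear (hence continuous), the Marcus interpolations entering the definition of $\alpha_p$ act trivially there, so both $x^m|_{[s,t]}$ and $\mathbf{x}|_{[s,t]}$ are honest continuous rough paths and $d_{p',[s,t]}$ is the ordinary continuous homogeneous metric.

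Next I would produce the uniform convergence in (ii). From $\alpha_{p,[s_n,t_n]}(x^m,\mathbf{x})\to 0$ one obtains (by the comparison between $\alpha_p$ and the Skorokhod sup metric in \cite{chevyrev2019canonical}) reparameterisations $\lambda_m\in\Lambda$ with $|\lambda_m|\to 0$ and $\sup_u d\big((x^m\circ\lambda_m)_u,\mathbf{x}_u\big)\to 0$. Splitting through $\mathbf{x}\circ\lambda_m^{-1}$ gives $\|x^m-\mathbf{x}\|_{\infty;[s,t]}\le \|x^m-\mathbf{x}\circ\lambda_m^{-1}\|_{\infty;[s,t]}+\|\mathbf{x}\circ\lambda_m^{-1}-\mathbf{x}\|_{\infty;[s,t]}$, where the first term is at most $\|x^m\circ\lambda_m-\mathbf{x}\|_{\infty}\to 0$ after the substitution $u=\lambda_m(v)$, while the second vanishes by uniform continuity of $\mathbf{x}$ on the compact $\overline{I_n}$ together with $|\lambda_m^{-1}|=|\lambda_m|\to 0$; the buffer guarantees $\lambda_m([s,t])\subset I_n$ for large $m$, so that all evaluations land in the continuity region. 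The uniform $p$-variation bound in (i) follows likewise: $\alpha_p$-convergence forces boundedness in the underlying variation metric, whence $\sup_m\|x^m\|_{p\text{-var};[s,t]}<\infty$, and $\|\mathbf{x}\|_{p\text{-var};[s,t]}<\infty$ since $\mathbf{x}\in\Omega^D_p(\mathbb{R}^d)$.

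Having secured (i) and (ii) on $[s,t]$, I would invoke the interpolation estimate to conclude $d_{p',[s,t]}(x^m,\mathbf{x})\to 0$ for every $p'>p$, which is the claim. The main obstacle is the step producing (ii): localising a Skorokhod-type $\alpha_p$-convergence on the large interval $[s_n,t_n]$ — where $\mathbf{x}$ may well jump outside $[s,t]$ — to honest uniform convergence on the sub-interval, since neither the Skorokhod metric nor the Marcus interpolation restricts to subintervals in an obvious way. It is precisely the continuity of $\mathbf{x}$ on $I_n$ (Assumption \ref{ass: der_event_jump_align}), the vanishing displacements $|\lambda_m|\to 0$, and the buffer separating $[s,t]$ from $\partial I_n$ that render this localisation legitimate.
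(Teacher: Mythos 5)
Your overall strategy --- localise to $[s,t]$, establish uniform convergence plus uniform $p$-variation bounds there, and close with the interpolation estimate (Proposition 8.15 and Lemma 8.16 in \cite{friz2010multidimensional}) --- is exactly the skeleton of the paper's proof. The genuine gap is in how you produce the uniform convergence (ii). You claim that $\alpha_{p,[s_n,t_n]}(x^m,\mathbf{x})\to 0$ yields reparameterisations $\lambda_m\in\Lambda$ with $|\lambda_m|\to 0$ and $\sup_u d\bigl((x^m\circ\lambda_m)_u,\mathbf{x}_u\bigr)\to 0$, i.e.\ Skorokhod-type ($\sigma_\infty$) convergence of $x^m$ to $\mathbf{x}$ on the whole of $[s_n,t_n]$. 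This implication is false precisely in the situation the lemma is designed for: Assumption \ref{ass: der_event_jump_align} only makes $\mathbf{x}$ continuous on the small windows $I_n$ around the event times, and $\mathbf{x}$ may well jump elsewhere in $[s_n,t_n]$ (e.g.\ when $\mathbf{x}$ is the Marcus lift of a finite activity L\'evy process). In the Skorokhod topology, jumps of the limit must be matched by jumps of the approximants, so a sequence of continuous paths $x^m$ can never converge in $\sigma_\infty$ to a discontinuous $\mathbf{x}$; this is exactly why \cite{chevyrev2019canonical} introduce $\alpha_p$ via Marcus interpolations in the first place, and why the paper's Appendix \ref{app: metrics} remarks that the topology of $\sigma_p$ is ``too strong''. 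Concretely, in your splitting the bound $\|x^m-\mathbf{x}\circ\lambda_m^{-1}\|_{\infty;[s,t]}\le\|x^m\circ\lambda_m-\mathbf{x}\|_{\infty}$ passes through a global supremum over $[s_n,t_n]$ that need not tend to zero.

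The repair is to never compare $x^m$ with $\mathbf{x}$ directly, but to interpose the Marcus interpolation $\hat{\mathbf{x}}^\delta$ of $\mathbf{x}$ over $[s_n,t_n]$: what $\alpha_p$ actually controls is the reparameterised distance between $x^m$ and $\hat{\mathbf{x}}^\delta$ in the limit $\delta\to 0$, and that term vanishes by Assumption \ref{ass: der_wong_zakai} alone. The continuity of $\mathbf{x}$ on $I_n$ (Assumption \ref{ass: der_event_jump_align}) is then used only for the remaining piece $d_{p',[s,t]}(\hat{\mathbf{x}}^\delta,\mathbf{x})$: on $[s,t]$ the interpolation $\hat{\mathbf{x}}^\delta$ converges uniformly to $\mathbf{x}$ as $\delta\to 0$ (the inserted fictitious time vanishes), and your interpolation argument (i)+(ii) applies verbatim to this pair. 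This is precisely the paper's three-term triangle inequality
\[
d_{p',[s,t]}(x^m,\mathbf{x}) \le d_{p',[s,t]}(x^m,x^m\circ\lambda) + d_{p',[s_n,t_n]}(x^m\circ\lambda,\hat{\mathbf{x}}^\delta) + d_{p',[s,t]}(\hat{\mathbf{x}}^\delta,\mathbf{x}),
\]
followed by taking the infimum over $\lambda$ and the limit $\delta\to 0$; your localisation/buffer discussion and the final appeal to Proposition 8.15 and Lemma 8.16 of \cite{friz2010multidimensional} then survive unchanged.
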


\begin{proof}
    Fix some \(n\in [N]\), \(p' > p\) and \(\tau_n - \delta_n/2 \le s < t \le \tau_n + \delta_n/2\). Note that, for any continuous reparameterization \(\lambda\in \Lambda\), \(m\in \mathbb{N}\), and \(\delta >0\), it holds that
    \[
    d_{p', [s, t]}(x^m, \mathbf{x})\le d_{p', [s, t]}(x^m, x^m\circ \lambda) + d_{p', [s_n, t_n]}(x^m\circ \lambda, \hat{\mathbf{x}}^\delta) + d_{p', [s, t]}(\hat{\mathbf{x}}^\delta, \mathbf{x}),
    \]
    where \(\hat{\mathbf{x}}^\delta\) is the Marcus interpolation of \(\mathbf{x}\) over the interval \([s_n, t_n]\). Taking the infimum over \(\lambda\in \Lambda\) and the limit as \(\delta\rightarrow 0\) on both sides, we obtain
    \[
    d_{p', [s, t]}(x^m, \mathbf{x})\le\alpha_{p', [s_n, t_n]}(x^m, \mathbf{x}) + \lim_{\delta\rightarrow 0}d_{p', [s, t]}(\hat{\mathbf{x}}^{\delta}, \mathbf{x}).
    \]
    The first term on the right hand side goes to 0 as \(m\rightarrow \infty\) by Assumption \ref{ass: der_wong_zakai}. Furthermore, since, by Assumption \ref{ass: der_event_jump_align}, \(\mathbf{x}\) is continuous on \((\tau_n - \delta_n, \tau_n + \delta_n)\), it follows that \(d_{\infty, [s, t]}(\hat{\mathbf{x}}^\delta, \mathbf{x})\) goes to 0 for \(\delta\rightarrow \infty\). But the result then follows from Proposition 8.15 and Lemma 8.16 in \cite{friz2010multidimensional}.
\end{proof}

\begin{proof}[Proof of Theorem \ref{thm: auto_diff}]

    \emph{Step 1:} Assume that \(x\in C_1([0, T], \mathbb{R}^{d-1})\). By \citep[Theorem 4.4]{friz2010multidimensional},  the Jacobian \(\partial y^0_t\) exists and satisfies \eqref{eq: der_succ_et} for all \(t\in [0, \tau_1)\). We shall prove that relations \eqref{eq: der_et} and \eqref{eq: der_succ_et} hold for all \(n\in [N]\) by induction. Thus, assume that \(\partial y^k_t\) and \(\partial\tau_k\) exist for all \(t\in[\tau_k, \tau_{k+1})\) and \(k\le n-1\) and satisfy the stated relations.  To emphasise the dependence on the initial condition, we will sometimes use the notation $y^n = y^{n}(y_0)$ and $\tau_n = \tau_n(y_0)$ for the solution of the Event RDE started at $y_0$. We want to show that, for arbitrary $h\in\mathbb R^e$, the following limits
    \begin{align*}
        \lim_{\epsilon \to 0} \frac{\tau^{\epsilon}_n - \tau_n}{\epsilon} \quad \text{and} \quad \lim_{\epsilon \to 0} \frac{y^{n, \epsilon}_t - y^{n}_t}{\epsilon} \quad \text{for } t \in [\tau_n, \tau_{n+1})
    \end{align*}
    exist and satisfy the stated expressions, where $\tau^{\epsilon}_n = \tau_n(y_0 + h\epsilon)$ and $y^{n, \epsilon} = y^{n}(y_0 + h\epsilon)$.

    For any $\epsilon > 0$, because $\mathcal{E}$ is continuously differentiable, the Mean Value Theorem implies that there exists \(c_\epsilon \in \mathbb R^e\) on the line connecting \(y^{n-1}_{\tau_n}\) to \(y^{n-1}_{\tau^{\epsilon}_n}\) and another \(c_{\epsilon}' \in \mathbb R^e\) on the line connecting \(y^{n-1, \epsilon}_{\tau^{\epsilon}_n}\) to \(y^{n-1}_{\tau^{\epsilon}_n}\) such that
    \begin{align*}
        \mathcal{E}\left(y^{n-1}_{\tau_n}\right) &= \mathcal{E}\left(y^{n-1}_{\tau^{\epsilon}_n}\right) + \nabla \mathcal{E}(c_\epsilon)\left(y^{n-1}_{\tau_n} - y^{n-1}_{\tau^{\epsilon}_n}\right) \\
        &= \mathcal{E}\left(y^{n-1}_{\tau^{\epsilon}_n}\right) + \nabla \mathcal{E}\left(c_\epsilon\right)\left(\mu(y^{n-1}_{\tau_n})(\tau_n - \tau_n^\epsilon) + \sigma\left(y^{n-1}_{\tau_n}\right)(x_{\tau_n} - x_{\tau_n^\epsilon}) + o(|\tau_n - \tau^\epsilon_n|)\right), \\
        \mathcal{E}\left(y^{n-1, \epsilon}_{\tau^{\epsilon}_n}\right) & = \mathcal{E}\left(y^{n-1}_{\tau^{\epsilon}_n}\right) + \nabla \mathcal{E}(c_\epsilon')\left(y^{n-1, \epsilon}_{\tau^{\epsilon}_n} - y^{n-1}_{\tau^{\epsilon}_n}\right) \\
        &= \mathcal{E}\left(y^{n-1}_{\tau^{\epsilon}_n}\right) + \nabla \mathcal{E}(c_\epsilon')\left(\epsilon \left(\partial y^{n-1}_{\tau_n}\right)h + o(\epsilon)\right),
    \end{align*}
    where the last equality follows from the induction hypothesis. We have \(\mathcal{E}(y^{n-1}_{\tau_n}) = 0 = \mathcal{E}(y^{n-1,\epsilon}_{\tau_n^\epsilon})\). Thus, by rearranging, we find that
    \begin{align*}
        \frac{\tau^{\epsilon}_n - \tau_n}{\epsilon} &= -\frac{\nabla \mathcal{E}(y^{n-1}_{\tau_n})\partial y^{n-1}_{\tau_n}h}{\nabla \mathcal{E}(y^{n-1}_{\tau_n})\left(\mu(y^{n-1}_{\tau_n}) + \sigma(y^{n-1}_{\tau_n})\frac{x_{\tau_n} - x_{\tau_n^\epsilon}}{\tau_n - \tau_n^\epsilon}\right)} + o(1) \\
        &= -\frac{\nabla \mathcal{E}(y^{n-1}_{\tau_n})\partial y^{n-1}_{\tau_n}h}{\nabla \mathcal{E}(y^{n-1}_{\tau_n})\mu(y^{n-1}_{\tau_n})} + o(1)
    \end{align*}
    where the second equality follows from Assumptions  \ref{ass: der_orth} and \ref{ass: der_cross}. 

    \medskip
    
    Assume for now that \(\tau^{\epsilon}_n < \tau_n\). By another application of the Mean Value Theorem, there exists \(c_\epsilon \in \mathbb R^e\) on the line connecting \(y^{n-1}_{\tau_n}\) to \(y^{n-1}_{\tau^{\epsilon}_n}\) such that
    \begin{align*}
        y^{n, \epsilon}_{\tau_n} - y^n_{\tau_n} 
        &= y^{n, \epsilon}_{\tau_n} - \mathcal{T}\left(y^{n-1}_{\tau_n}\right) \\
        &= y^{n, \epsilon}_{\tau_n} - \mathcal{T}\left(y^{n-1}_{\tau_n^\epsilon}\right) - \nabla\mathcal{T}(c_\epsilon)(y^{n-1}_{\tau_n} - y^{n-1}_{\tau^{\epsilon}_n})\\
        &= y^{n,\epsilon}_{\tau_n^\epsilon} + \mu(y^{n, \epsilon}_{\tau_n})(\tau_n - \tau_n^\epsilon) + \sigma\left(y^{n, \epsilon}_{\tau_n}\right)(x_{\tau_n} - x_{\tau_n^\epsilon})  - \mathcal{T}\left(y^{n-1}_{\tau_n^\epsilon}\right) \\
        & \quad - \nabla \mathcal{T}(c_\epsilon) \left(\mu(y^{n-1}_{\tau_n})(\tau_n - \tau_n^\epsilon) + \sigma\left(y^{n-1}_{\tau_n}\right)(x_{\tau_n} - x_{\tau_n^\epsilon}) + o(|\tau_n - \tau^\epsilon_n|)\right)
        \\
        &= \mathcal{T}\left(y^{n-1,\epsilon}_{\tau_n^\epsilon}\right) - \mathcal{T}\left(y^{n-1}_{\tau^\epsilon_n}\right) + \left(\mu(y_{\tau_n}^{n, \epsilon}) - \nabla \mathcal{T}(c_\epsilon)\mu(y^{n-1}_{\tau_n})\right)(\tau_n - \tau_n^\epsilon) \\
        & \quad + \left(\sigma(y^n_{\tau_n}) - \nabla \mathcal{T}(c_\epsilon)\sigma(y^{n-1}_{\tau_n})\right)(x_{\tau_n} - x_{\tau_n^\epsilon}) + o(|\tau_n - \tau^\epsilon_n|)
    \end{align*}
    Therefore
    \begin{align*}
        \frac{y^{n, \epsilon}_{\tau_n} - y^n_{\tau_n} }{\epsilon} = 
        \nabla \mathcal{T}(y^{n-1}_{\tau_n})\partial y^{n-1}_{\tau_n}h + \left(\mu(y_{\tau_n}^{n}) - \nabla \mathcal{T}(y^{n-1}_{\tau_n})\mu(y^{n-1}_{\tau_n})\right)\partial \tau_nh + o(1)
    \end{align*}
    where we used Assumption \ref{ass: der_commute}, the chain rule and the existence of $\partial \tau_n$. Finally, for any \(t\in (\tau_n, \tau_{n+1}]\), equation \eqref{eq: der_succ_et} follows from the fact that we can write \(y^n_t = \Phi(y^n_s, s, t, x)\) for all \(\tau_n \le s < t\). In particular, by the chain rule, we find that
    \[
    \partial y^n_t = \left[\partial_{y^n_s}\Phi(y^n_s, s, t)\partial y^n_s\right]_{s=\tau_n} = \partial_{y^n_{\tau_n}}y^n_t\left[\partial y^n_s\right]_{s=\tau_n}.
    \]
    
    \emph{Step 2:} Consider now the general case of \(\mathbf{x}\in \Omega_p(\mathbb{R}^{e})\) and let \((y^m, (\tau^m_{n_m})_{n_m}^{N_m} )\) denote the solution to the Event RDE where \(\mathbf{x}\) is replaced by the piece-wise linear approximation \(x^m\). With \(\partial y^{n, m}_t\) and \(\partial\tau^m_n\) denoting the corresponding derivatives, we saw in the previous step that both exist and satisfy \eqref{eq: der_et}-\eqref{eq: der_succ_et}. We let \(R^n_t\) and \(\rho_n\) denote the right hand side of \eqref{eq: der_succ_et} and \eqref{eq: der_et} respectively. This step consists of proving that, for \(n\in [N]\) and \(t\in (\tau_n, t_n)\),
    \begin{equation}\label{eq: eq0}
        \lim_{m\rightarrow \infty}\left\{\left|\tau^m_n - \tau_n\right| + \left|y^{m, n}_t - y^n_t\right|\right\} = 0
    \end{equation}
    and for some open ball $B_{0}$ around $y_0$
    \begin{equation} \label{eq: der_unif_conv}
        \lim_{m\rightarrow\infty}\sup_{a\in B_{0}} \left\{\left|\partial\tau^m_n(a) - \rho_n(a)\right| + \left\|\partial y^{m, n}_t(a) - R^n_t(a)\right\|\right\} =0.
    \end{equation}
    By Lemma \ref{lem: event_continuity} and continuity of $\mathcal{T}$ we have that $\mathcal{T}(y^{m, n-1}_{\tau^m_n})$ converges to $\mathcal{T}(y^{n-1}_{\tau_n})$ and $\tau^m_n$ converges to $\tau_n$ as $m \to +\infty$. Then, because \(y^n_t = \Phi(\mathcal{T}(y^{n-1}_{\tau_n}), \tau_n, t; \mathbf{x})\) and \(y^{m, n}_t = \Phi(\mathcal{T}(y^{m, n-1}_{\tau^m_n}), \tau^m_n, t; x^m)\), equation (\ref{eq: eq0}) follows from, Lemma \ref{lem: approx_cont_int} and Corollary 11.16 in \cite{friz2010multidimensional}. In fact, since \(B_0\) was constructed in Lemma \ref{lem: event_continuity} in such a way that \(\tau_n(a) < t_n\) for all \(a\in B_0\) we also get that
    \[
    \lim_{m\rightarrow\infty}\sup_{a\in B_0}\left\|\partial_{y^n_{\tau_n(a)}(a)}\Phi\left(y^n_{\tau_n(a)}(a), \tau_n(a), t; \mathbf{x}\right) - \partial_{y^{m, n}_{\tau^m_n(a)}(a)}\Phi\left(y^{m, n}_{\tau^m_n(a)}(a), \tau^m_n(a), t;x^m\right)\right\| = 0.
    \]
    by the same corollary in \cite{friz2010multidimensional}.
    Thus, to prove \eqref{eq: der_unif_conv}, it suffices to show that, for all \(n\in \{1,...,N\}\),
    \[
    \lim_{m\rightarrow\infty}\sup_{a\in B_{0}}\left\|\partial y^{m, n-1}_{\tau^m_n(a)}(a) - R^{n-1}_{\tau_n(a)}(a)\right\| = 0.
    \]
    We shall prove it using another inductive argument starting with \(n=1\). In this case it suffices to show that
    \[
    \lim_{m\rightarrow\infty}\sup_{a\in B_0}\left\|\partial_a\Phi\left(a, 0, \tau_1(a);\mathbf{x}\right) - \partial_a\Phi\left(a, 0, \tau^m_1(a); x^m\right)\right\| = 0.
    \]
    By \cite[Theorem 3.3]{chevyrev2019canonical} we know that the above holds if \(\tau^m_1(a)\) and \(\tau_1(a)\) are replaced by \(\tau_1 + \delta_1 / 2\). Now let \(\Phi^{-1}\) be the reverse of the flow map \(\Phi\), that is, 
    \[
    \Phi^{-1}(a_1, s, t; \mathbf{x})=a_0\Leftrightarrow \Phi(a_0, s, t; \mathbf{x})=a_1.
    \]
    From Lemma \ref{lem: event_continuity} it follows that \(y^0_{\tau_1(a)}(a)=\Phi^{-1}(y^0_{t_0}(a), \tau_1(a), t_0; \mathbf{x})\) and, for \(m\) large enough, \(y^{m, 0}_{\tau^m_1(a)}(a)=\Phi^{-1}(y^{m, 0}_{t_0}(a), \tau^m_1(a), t_0; x^m)\). But the result then follows from Lemma \ref{lem: approx_cont_int} and  \cite[Corollary 11.16]{friz2010multidimensional}. To prove the inductive step, assume that \eqref{eq: der_unif_conv} holds for all \(i\le n-1\). Again, by inspecting \eqref{eq: der_et} and \eqref{eq: der_succ_et} and using the inductive assumption, one finds that it is enough to show that
    \[
    \lim_{m\rightarrow\infty}\sup_{a\in B_0}\left\|\partial_{y^{n-1}_{\tau_{n-1}}}\Phi\left(y^{n-1}_{\tau_{n-1}}, \tau_{n-1}, \tau_n;\mathbf{x}\right) - \partial_{y^{m, n-1}_{\tau^m_{n-1}}}\Phi\left(y^{m, n-1}_{\tau^{m}_{n-1}}, \tau^{m}_{n-1}, \tau^{m}_n; x^m\right)\right\| = 0,
    \]
    where we suppressed the dependence on \(a\) for notational simplicity. This is done exactly as for $y^0$ and completes the proof of Step 2.~\\

    \emph{Step 3:} The third and final step is to combine Step 1 and 2 to finish the proof. So far we have proven that 1) the theorem holds for continuous paths of bounded variation and 2) \((\tau^m_n, y^{m, n}_t)\) converges to \((\tau_n, y^n_t)\) and \((\partial \tau^m_n(a), \partial y^{m, n}_t(a))\) converges uniformly to \((\rho_n(a), R^n_t(a))\) over \(a\in B_0\) for all \(t\in (\tau_n, t_n)\) and \(n\in[N]\). From these results it immediately follows that \((\tau_n(a), y^n_t(a))\) is differentiable at \(a=y_0\) with derivatives given by \((\rho_n(y_0), R_t(y_0))\) for all \(t\in (\tau_n, t_n)\). What is left to show then, is that this also holds for all other \(t\). But this follows immediately from the chain rule upon realizing that, for any \(\tau_n < s < \tau_n + \delta_n/2 < t < \tau_{n+1}\),
    \[
    y^n_t = \Phi (y^n_s, s, t; \mathbf{x}) \Rightarrow \partial y^n_t = \partial_{y^n_s}\Phi(y^n_s, s, t; \mathbf{x})R^n_s(y_0) = R^n_t(y_0).
    \]
\end{proof}

\section{Kernel methods }\label{app: kernels}

We give here a brief outline of some of the most central concepts related to kernel methods. For a more in-depth introduction we refer the reader to \cite{muandet2017kernel, scholkopf2002learning, berlinet2011reproducing}. Let \(\mathcal{X}\) be a topological space. We shall in this paper only be concerned with positive definite kernels, that is, symmetric functions \(k:\mathcal{X}\times \mathcal{X}\rightarrow \mathbb{R}\) for which the Gram matrix is positive definite. To such a kernel one may associate a feature map \(\mathcal{X}\rightarrow \mathbb{R}^{\mathcal{X}}\) such that \(x\mapsto k_x = k(x, \cdot)\). A reproducing kernel Hilbert space (RKHS) is a Hilbert space \(\mathcal{H}\subset\mathbb{R}^{\mathcal{X}}\) such that the evaluation functionals, \(ev_{x}: f\mapsto f(x)\), are bounded for each \(x\in\mathcal{X}\). For all positive definite kernels there is a unique RKHS \(\mathcal{H}\subset\mathbb{R}^{\mathcal{X}}\) such that \(f(x) = \langle k_x, f\rangle_{\mathcal{H}}\) for all \(f\in \mathcal{H}\) and \(x\in\mathcal{X}\). This is also known as the \emph{reproducing property}. Furthermore, with \(H\) denoting the linear span of \(\{k_x\; |\; x\in\mathcal{X}\}\), it holds that \(\bar{H} = \mathcal{H}\), i.e., \(H\) is dense in \(\mathcal{H}\). Two important properties of kernels are \emph{characteristicness} and \emph{universality}.

\begin{definition} \label{def: ker_char_uni}
    Let \(k:\mathcal{X}\times\mathcal{X}\rightarrow \mathbb{R}\) be a positive definite kernel. Denote by \(H\) the linear span of \(\{k_x\; |\; x\in\mathcal{X}\}\) and let \(\mathcal{F}\subset\mathbb{R}^{\mathcal{X}}\) be a topological vector space containing \(H\) and such that the inclusion map \(\iota: H \rightarrow \mathcal{F}\) is continuous.
    \begin{itemize}
        \item We say that \(k\) is universal to \(\mathcal{F}\) if the embedding of \(\iota: H\rightarrow \mathcal{F}\) is dense.
        \item We say that \(k\) is characteristic to \(\mathcal{F}'\) if the embedding \(\mu: \mathcal{F}'\rightarrow H'\), \(D\mapsto D|_H\) is injective
    \end{itemize}
\end{definition}

\begin{remark}
    This definition is the one used in \cite{chevyrev2022signature} and is more general then the one usually encountered. Note that in many cases (all the cases considered here, in fact) \(\mathcal{F}'\) will contain the set of probability measures on \(\mathcal{X}\) in which case \(k\) being characteristic implies that the \emph{kernel mean embedding} \(\mu\mapsto \mathbb{E}_{X\sim \mu} k_X(\cdot)\) is injective.
\end{remark}

\begin{remark}
    Often times, instead of starting with the kernel function \(k\) and then obtaining the RKHS, one starts with a feature map \(F:\mathcal{X}\rightarrow \mathcal{H}\) into a RKHS and then defines the kernel as the inner product in that Hilbert space, i.e., \(k(x, y)=\langle F(x), F(y)\rangle_{\mathcal{H}}\). In such cases, it makes sense to ask whether there are equivalent notions of \(F\) being universal and characteristic. This is indeed the case and the definition is almost the same as above. We refer to Definition 6 in \cite{chevyrev2022signature} for a precise statement.
\end{remark}

\subsection{Marcus signature kernel}

The definition of the signature kernel requires an initial algebraic setup.
Let $\langle \cdot, \cdot \rangle_1$ be the Euclidean inner product on $\mathbb R^d$.
Denote by $\otimes$ the standard outer product of vector spaces.
For any $n \in \mathbb N$, we denote by $\langle \cdot, \cdot \rangle_n$ on $(\mathbb R^d)^{\otimes n}$ the canonical Hilbert-Schmidt inner product defined for any $\mathbf a=(a_1,\ldots, a_n)$ and $\mathbf b=(b_1,\ldots ,b_n)$ in $(\mathbb R^d)^{\otimes n}$ as $\left\langle \mathbf a,\mathbf b \right\rangle_n = \prod_{i=1}^n \left\langle a_i,b_i \right\rangle_1$.
The inner product $\langle \cdot, \cdot \rangle_n$ on $(\mathbb R^d)^{\otimes n}$ can then be extended by linearity to an inner product $\langle \cdot, \cdot \rangle$ on $\tilde T((\mathbb R^d))$ defined for any $\mathbf a=(1, a_1,\ldots)$ and $\mathbf b=(1,b_1, \ldots)$ in $\tilde T((\mathbb R^d))$ as $\langle \mathbf a, \mathbf b \rangle = 1 + \sum_{n=1}^\infty \left\langle a_n, b_n \right\rangle_n$. 

To begin with, let \(\mathcal{X} = D_1([0, T], \mathbb{R}^d)\). If \(x\in\mathcal{X}\) is càdlàg path, we can define the \emph{Marcus signature} in the spirit of Marcus SDEs \citep{marcus1978modeling, marcus1981modeling} as the signature of the \emph{Marcus interpolation} of \(x\). This interpolation, denoted by \(\hat{x}\), is the continuous path on \([0, T]\) obtained from \(x\) by linearly traversing the jumps of \(x\) over added fictitious time \(r>0\) and then reparameterising so that the path runs over \([0, T]\) instead of \([0, T + r]\). The general construction is given in Appendix \ref{app: rough_paths}. If  \(x\) is continuous, \(x\) and \(\hat{x}\) coincide; thus, without any ambiguity, we can define the Marcus signature \(S(x)\) of a general bounded variation càdlàg path as the tensor series described above, but replacing \(x\) with \(\hat{x}\) (see also the definition in \ref{app: signature}).


Since the signature is invariant to certain reparameterisations (Propisition \ref{prop: sig_equi}), it is not an injective map. Injectivity is a crucial property required to ensure characteristicness of the resulting signature kernel that we will introduce next. One way of overcome this issue is to to augment a path $x$ with a time coordinate resulting in the path \(\tilde{x} = (x, t)\)\footnote{If \(\mathbf{x}\) is a càdlàg rough path, this is done via a \emph{Young pairing} which results in a càdlàg \(p\)-rough path, \(\tilde{\mathbf{x}}\), where the first level is given by \((x_t, t)\). For more information, we refer to Appendix \ref{app: young_pairing}.}. The Marcus signature kernel is then naturally defined as the map $k : \mathcal{X} \times \mathcal{X} \to \mathbb R$ such that \(k(x, y) = \langle S(\tilde{x}), S(\tilde{y})\rangle\) for any \( x,y \in \mathcal{X}\). As stated in \Cref{thm: kernel}, this kernel is universal on compact subsets \(K\subset \mathcal{X}\) and, equivalently, characteristic to the space of regular Borel measures on \(K\). However, these properties do not generalize to the whole space \(C_b(\mathcal{X}, \mathbb{R})\) of bounded continuous functions from $\mathcal{X}$ to $\mathbb{R}$. 

In \cite{chevyrev2022signature} the authors address this issue in the case of continuous paths by introducing the so-called \emph{robust signature}. They define a \emph{tensor normalization} as a continuous injective map 
$$\Lambda: \tilde T\left(\left(\mathbb{R}^d\right)\right)\rightarrow \left\{\mathbf{a}\in \tilde T\left(\left(\mathbb{R}^d\right)\right)\; |\; \|\mathbf{a}\| \le R\right\}$$
for some \(R > 0\) and such that \(\Lambda(\mathbf{a})=(\mathbf{a}^0, \lambda(\mathbf{a})a_1, \lambda(\mathbf{a})^2a_2, \dots)\) for some \(\lambda:\tilde T\left(\left(\mathbb{R}^d\right)\right)\rightarrow (0, \infty)\). 

Now, let \(p\in [1, 3)\) and take \(C_0^1(\mathbb{R}^d)\) to be the space of absolutely continuous functions on \(\mathbb{R}^d\). Recall that \(\Omega_{0, p}^D(\mathbb{R}^d)\) is the closure of \(C_0^1(\mathbb{R}^d)\) in \(\Omega_p^D(\mathbb{R}^d)\) under the metric \(\alpha_p\). Throughout we let \(\mathcal{X}=\Omega_{0, p}^D(\mathbb{R}^d)\) be a metric space equipped with \(\alpha_p\). Naturally, we can then define the signature kernel on \(\mathcal{X}\) by \(k(\mathbf{x}, \mathbf{y}) = \langle S(\tilde{\mathbf{x}}), S(\tilde{\mathbf{y}})\rangle\) and, similarly, the robust signature kernel \(k_{\Lambda}(\mathbf{x}, \mathbf{y}) = \langle \Lambda(S(\tilde{\mathbf{x}})), \Lambda(S(\tilde{\mathbf{y}}))\rangle\) where \(\Lambda\) is a tensor normalisation. 

\begin{theorem} \label{thm: kernel}
    Let \(p\ge 1\), \(\Lambda\) a tensor normalization, and \(K\subset \mathcal{X}\) compact under \(\alpha_p\). Then,
    \begin{enumerate}[label=(\roman*)]
        \item The signature kernel \(k\) is universal to \(\mathcal{F}=C(K, \mathbb{R})\) equipped with the uniform topology and characteristic to the dual \(\mathcal{F}'\), the space of  regular Borel measures on \(K\).
        \item The robust signature kernel \(k_{\Lambda}\) is universal to \(\mathcal{F}=C_b(\mathcal{X}, \mathbb{R})\) equipped with the strict topology and charactersitic to the dual \(\mathcal{F}'\), the space of all finite Borel measures on \(\mathcal{X}\).
    \end{enumerate}
\end{theorem}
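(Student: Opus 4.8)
The plan is to reduce both statements to the corresponding results for continuous paths in \cite{chevyrev2022signature}, exploiting the fact that the Marcus signature of a càdlàg rough path is by definition the ordinary signature of its (continuous) Marcus interpolation. Two ingredients from the càdlàg theory carry the whole argument: the \emph{injectivity} of the time-augmented Marcus signature map $\mathbf{x}\mapsto S(\tilde{\mathbf{x}})$, and its \emph{continuity} on $(\mathcal{X},\alpha_p)$. For injectivity, I would combine Proposition~\ref{prop: sig_equi}, which gives $S(\tilde{\mathbf{x}})=S(\tilde{\mathbf{y}})$ iff $\tilde{\mathbf{x}}\sim_t\tilde{\mathbf{y}}$, with the observation that the strictly increasing time coordinate of the augmented paths rules out any nontrivial tree-like equivalence (interpolation over fictitious time preserves monotonicity of the time component), so that $\tilde{\mathbf{x}}\sim_t\tilde{\mathbf{y}}$ forces $\tilde{\mathbf{x}}=\tilde{\mathbf{y}}$; injectivity of the augmentation (Proposition~\ref{prop: ta_cont}) then yields $\mathbf{x}=\mathbf{y}$. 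For continuity, since each truncation $S^N(\cdot)$ solves the linear Marcus RDE of Definition~\ref{def: signature}, its solution map is $\alpha_p$-continuous by Theorem~\ref{thm: jump_rde_cont}, and precomposing with the $\alpha_p$-continuous augmentation of Proposition~\ref{prop: ta_cont} shows that every coordinate functional $\mathbf{x}\mapsto\langle\ell,S(\tilde{\mathbf{x}})\rangle$ is continuous.

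For part (i), I would show that the linear functionals of the signature, $\mathcal{A}=\{\mathbf{x}\mapsto\langle\ell,S(\tilde{\mathbf{x}})\rangle\}$ with $\ell$ ranging over the dual of the tensor algebra, form a point-separating subalgebra of $C(K,\mathbb{R})$ containing the constants. The algebra property is the shuffle identity for signatures of geometric rough paths, which applies because $S(\tilde{\mathbf{x}})$ is the signature of the continuous geometric rough path $\hat{\tilde{\mathbf{x}}}$; the constants come from the level-zero term; point separation is exactly the injectivity above; and membership in $C(K)$ is the continuity above. The Stone--Weierstrass theorem then gives $\overline{\mathcal{A}}=C(K,\mathbb{R})$ in the uniform topology, i.e. universality, and characteristicness to the dual $\mathcal{F}'$ of regular Borel measures on $K$ follows from the duality between universality and characteristicness recorded in \cite{chevyrev2022signature} (cf. Definition~\ref{def: ker_char_uni}).

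For part (ii), the obstruction on the non-compact space $\mathcal{X}$ is that the un-normalized features are unbounded. The robust features $\mathbf{x}\mapsto\langle\ell,\Lambda(S(\tilde{\mathbf{x}}))\rangle$ are bounded, since $\Lambda$ has range in a ball, and continuous, since $\Lambda$ is continuous and the signature features are continuous, so they lie in $C_b(\mathcal{X},\mathbb{R})$. Here I would equip $C_b(\mathcal{X})$ with the \emph{strict topology}, whose continuous dual is precisely the space of finite regular Borel measures on $\mathcal{X}$, which is available because $(\mathcal{X},\alpha_p)$ is a completely regular, separable metric space. The crucial device, following \cite{chevyrev2022signature}, is that the tensor normalization corresponds to a \emph{dilation} of the path: since scaling a path by a scalar $c$ multiplies its level-$n$ signature by $c^n$, one has $\Lambda(S(\tilde{\mathbf{x}}))=S(\lambda(\mathbf{x})\,\tilde{\mathbf{x}})$, whose image lies in a fixed bounded subset of the tensor algebra. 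Pushing a finite signed measure $\mu$ forward along the continuous dilation map $\mathbf{x}\mapsto\lambda(\mathbf{x})\,\tilde{\mathbf{x}}$ and using that, on this bounded set, the associated moment problem is determinate, I would conclude that any measure annihilating all robust features must vanish; combined with a strict-topology Stone--Weierstrass argument through the point-separating algebra, this yields both universality to $(C_b(\mathcal{X}),\beta)$ and characteristicness to the finite Borel measures.

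I expect the main obstacle to be part (ii): reconciling the nonlinear normalization $\Lambda$ with the algebraic (shuffle) structure required by Stone--Weierstrass, which is exactly what the dilation identity is designed to circumvent, together with the topological bookkeeping for the strict topology — verifying that $(\mathcal{X},\alpha_p)$ is regular and separable enough for its strict-topology dual to be the finite Radon measures, and that the dilation map is $\alpha_p$-continuous so that the pushforward argument survives the passage from continuous to càdlàg rough paths. By contrast, part (i) is essentially a direct transcription of the continuous Stone--Weierstrass argument once injectivity and continuity of the Marcus signature are in hand.
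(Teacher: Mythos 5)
Your proposal is correct and takes essentially the same route as the paper: reduce to the continuous theory through the Marcus interpolation, get point separation from Proposition \ref{prop: sig_equi} together with time augmentation (Proposition \ref{prop: ta_cont}), get continuity from the rough-path stability results, observe that (robust) signatures remain group-like so their linear functionals form a shuffle algebra, and conclude by Stone--Weierstrass plus duality. The only substantive difference is economy: the paper cites rather than re-derives the analytic machinery --- part (i) is Proposition 3.6 of \cite{cuchiero2022universal}, and part (ii) verifies the hypotheses of Theorems 7 and 9 and Proposition 29 of \cite{chevyrev2022signature}, working on the quotient \(\mathcal{P}=\mathcal{X}/\sim_t\) with the signature-induced topology --- so the strict-topology Stone--Weierstrass, dilation identity, and dual-identification steps you plan to carry out by hand are exactly the content of those cited results (note also that after Marcus interpolation the time coordinate is only non-decreasing, being constant across jump segments, so your injectivity claim requires the slightly more careful argument of \cite{cuchiero2022universal} rather than strict monotonicity alone).
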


\begin{proof}[Proof of Theorem \ref{thm: kernel}]
    Part \((i)\) follows directly from the proof of Proposition 3.6 in \cite{cuchiero2022universal}. For part \((ii)\) we shall proof that the feature map \(F=\Lambda\circ S\) is universal and characteristic. The result then follows from Proposition 29 in \cite{chevyrev2022signature}. We start by defining \(\mathcal{P} = \mathcal{X}/\sim_t\) where the equivalence relation \(\sim_t\) is defined in Appendix \ref{app: signature}. We equip \(\mathcal{P}\) with the topology induced by the embedding \(S:\mathcal{P}\rightarrow \tilde{T}((\mathbb{R}^{d+1}))\). By Proposition \ref{prop: sig_equi}, \(F\) is a continuous and injective map from \(\mathcal{P}\) into a bounded subset of \(\tilde{T}((\mathbb{R}^{d+1}))\). Thus, \(\mathcal{H}=\{\langle \ell, F\rangle\; |\; \ell\in T((\mathbb{R}^{d+1}))'\}\) is a subset of \(\mathcal{F}\) that seperates points. Furthermore, since \(F\) takes values in the set of group-like elements, \(\mathcal{H}\) is a subalgebra of \(\mathcal{F}\) (under the shuffle product). It then follows from Theorem 7 and Theorem 9 in \cite{chevyrev2022signature} that \(F\) is universal and charecteristic. The fact that \(\mathcal{F}'\) is the space of all finite Borel measures on \(\mathcal{X}\) is part (iii) of Theorem 9 in the same paper. Finally, as per Appendix \ref{app: young_pairing}, the map \(\mathbf{x}\mapsto \tilde{\mathbf{x}}\) is a continuous and injective embedding of \(\mathcal{X}\) into \(\mathcal{P}\) from which the result then follows.
\end{proof}

With \(d_k\) denoting the MMD for a given kernel \(k:\mathcal{X}\times \mathcal{X}\rightarrow \mathbb{R}\), the following is a direct consequence of Theorem \ref{thm: kernel}.

\begin{corollary} \label{cor: sig_mmd}
Let \(p\ge 1\), \(\Lambda\) a tensor normalization, and \(K\subset \mathcal{X}\) compact under \(\alpha_p\). Then, \(d_k\) is a metric on \(\mathcal{M}(K)\) and \(d_{k_\Lambda}\) is a metric on \(\mathcal{M}(\mathcal{X})\).
\end{corollary}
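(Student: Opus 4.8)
The plan is to recognise that the claim is precisely the statement that the kernel mean embedding separates points, and hence to deduce it directly from the characteristicness established in \Cref{thm: kernel}. Writing $F$ for the (robust or non-robust) feature map and $\mathcal{H}$ for the associated RKHS, the first step is to record the standard identity
\[
d_k(\mu,\nu)^2 = \left\| \, \mathbb{E}_{X\sim\mu}F(X) - \mathbb{E}_{Y\sim\nu}F(Y)\, \right\|_{\mathcal{H}}^2 = \left\| m(\mu) - m(\nu) \right\|_{\mathcal{H}}^2,
\]
obtained by expanding the three expectations in the definition of $d_k$ and using $k(x,y)=\langle F(x),F(y)\rangle_{\mathcal{H}}$, where $m(\mu)=\mathbb{E}_{X\sim\mu}F(X)$ is the kernel mean embedding. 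One must first check that $m$ is well defined on the relevant measure space: for $k$ this is immediate because compactness of $K$ together with continuity of $F$ makes $F$ bounded on $K$, so $m(\mu)$ exists for every finite, and in particular every probability, regular Borel measure $\mu$ on $K$; for $k_\Lambda$ the robust feature map $\Lambda\circ S$ has range bounded by $R$ by construction, so $m$ is defined on all of $\mathcal{M}(\mathcal{X})$.

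Given the displayed identity, the first three metric axioms are automatic: $d_k\ge 0$, symmetry, and the triangle inequality all follow from the corresponding properties of the Hilbert-space norm $\|\cdot\|_{\mathcal{H}}$ applied to $m(\mu)-m(\nu)$. The content of the statement therefore lies entirely in the identity-of-indiscernibles axiom, namely that $d_k(\mu,\nu)=0$ forces $\mu=\nu$. By the identity above this is equivalent to injectivity of the mean embedding $m$ on the measure space, which is exactly the characteristicness of the kernel in the sense of \Cref{def: ker_char_uni} (cf. the remark identifying characteristicness with injectivity of $\mu\mapsto\mathbb{E}_{X\sim\mu}k_X$).

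It then remains only to invoke the two parts of \Cref{thm: kernel}. Part $(i)$ asserts that $k$ is characteristic to the space of regular Borel measures on the compact set $K$, so $m$ is injective there and $d_k$ separates points of $\mathcal{M}(K)$; combined with the pseudometric properties this makes $d_k$ a genuine metric on $\mathcal{M}(K)$. Part $(ii)$ asserts that $k_\Lambda$ is characteristic to the space of all finite Borel measures on $\mathcal{X}$, giving injectivity of the corresponding $m$ and hence that $d_{k_\Lambda}$ is a metric on $\mathcal{M}(\mathcal{X})$. Since the deep analytic work — universality of the (robust) signature feature map and the resulting characteristicness — is already carried out in \Cref{thm: kernel}, there is no substantial obstacle here; the only point requiring genuine care is the well-definedness and finiteness of the mean embedding on the stated measure spaces, which for the non-robust kernel is precisely why the statement must be restricted to a compact $K$, whereas the boundedness of $\Lambda\circ S$ removes this restriction for $d_{k_\Lambda}$.
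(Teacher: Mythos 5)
Your proposal is correct and follows essentially the same route as the paper, which simply states the corollary as a direct consequence of \Cref{thm: kernel}; you flesh out the standard argument that the MMD is the RKHS norm of the difference of mean embeddings, so that the metric axioms reduce to characteristicness. Your added care about well-definedness of the mean embedding (boundedness of the feature map on compact $K$, respectively boundedness of $\Lambda\circ S$) is a reasonable elaboration of what the paper leaves implicit.
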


\section{Forward sensitivities for SLIF network} \label{app: sensitivity}

In the general SSNN model, Theorem \ref{thm: auto_diff} gives the following result.

\begin{proposition} \label{prop: online}
    Fix some weight \(w_{ij}\in w\), a neuron \(k\in [K]\) and let \(\mathcal{G}^k_t\) denote the gradient of \((v^k, i^k)\) wrt. \(w_{ij}\) at time \(t\). Furthermore, define \(\gamma : \{0, 1\}\rightarrow \mathbb{R}^2\) such that \(\gamma_0 = (\mu_1, -\mu_2)w_{lk}\), \(\gamma_1 = (\mu_1, 0)v_{reset}\), and let \(\Gamma\in\mathbb{R}^{2\times 2}\) be the drift matrix in the inter-spike SDE of \((v^k, i^k)\). Then,
    \begin{equation} \label{eq: online_state}
        \mathcal{G}^k_t = e^{\Gamma(t-s)}\left(\mathcal{G}^k_s - \gamma_{\delta_{lk}}\partial_{w_{ij}} s + \delta_{il}\delta_{jk}e_2\right),
    \end{equation}
    where \(e_n\in\mathbb{R}_2\) is the \(n\)'th unit vector, \(l\) is the neuron in \(\textnormal{Pa}_k \cup \{k\}\) with the most recent spike time before \(t\), and we denote this spike time by \(s\). If \(t\) is a spike time of neuron \(k\) it therefore follows that
    \begin{equation} \label{eq: online_time}
        \partial_{w_{ij}}t = \frac{\lambda(v^k_{t_{prev}})\partial_{w_{ij}}t_{prev}-\int_{t_{prev}}^t\nabla \lambda(v^k_r)e_1^T\mathcal{G}^k_r dr}{\lambda(v^k_t)},
    \end{equation}
    where \(t_{prev}\) is the previous spike time of neuron \(k\). In the case of a deterministic SNN, formula \eqref{eq: online_time} is replaced by
    \begin{equation} \label{eq: online_det}
    \partial_{w_{ij}}t = -\frac{e_1^T\mathcal{G}^k_t}{\mu_1(i^k_t - v^k_t)}.
    \end{equation}
\end{proposition}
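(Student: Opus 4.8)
The plan is to specialise Theorem \ref{thm: auto_diff} to the SSNN model, exploiting the block structure of the drift, diffusion, event and transition functions identified at the end of Section \ref{sec: ERDE_backprop}. The central observation is that, for each neuron $k$, the pair $(v^k, i^k)$ evolves according to an \emph{autonomous} linear system: its drift $(\mu_1(i^k - v^k), -\mu_2 i^k)$ does not involve $s^k$, and the diffusion $\sigma = (\sigma_1, \sigma_2)$ is constant. Hence the linearised flow equation governing the Jacobian $\partial_{y^k_s} y^k_t$ has vanishing diffusion term (as $\nabla\sigma = 0$), so its $(v^k, i^k)$ block reduces to the deterministic linear ODE $\dot J_t = \Gamma J_t$, $J_s = I$, yielding $e^{\Gamma(t-s)}$ with $\Gamma$ the stated drift matrix. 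This is what replaces the generic flow Jacobian $\partial_{y^n_{\tau_n}} y^n_t$ of \eqref{eq: der_succ_et}.

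Next I would derive \eqref{eq: online_state} by reading off \eqref{eq: der_succ_et} on the $(v^k, i^k)$ block, after moving $w_{ij}$ into the initial condition via the augmentation $d\theta_t = 0$ described in Section \ref{sec: ERDE_backprop}. Two ingredients are needed. First, restricted to $(v^k, i^k)$ every transition is a constant shift — a reset $v^k \mapsto v^k - v_{reset}$ when $l = k$, or an increment $i^k \mapsto i^k + w_{lk}$ when $l \neq k$ — so the state-Jacobian $\nabla\mathcal{T}$ equals the identity on this block and is block-diagonal across neurons, which is precisely the locality enabling online updates. Consequently the bracketed drift-mismatch term of \eqref{eq: der_succ_et} collapses to $\mu(y^n_s) - \mu(y^{n-1}_s)$, and a direct computation gives $(\mu_1, -\mu_2)w_{lk} = \gamma_0$ when $l \neq k$ and $(\mu_1, 0)v_{reset} = \gamma_1$ when $l = k$, i.e. $\gamma_{\delta_{lk}}$. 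Second, the \emph{explicit} dependence of the transition on the parameter contributes $\partial_{w_{ij}} w_{lk} = \delta_{il}\delta_{kj}$ in the $i^k$-coordinate, producing the term $\delta_{il}\delta_{jk}e_2$. Since only a self-spike or a spike of a parent $l \in \textnormal{Pa}_k$ touches $(v^k, i^k)$, between such events $\mathcal{G}^k$ propagates by $e^{\Gamma(t-s)}$, which gives \eqref{eq: online_state}.

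For the spike-time sensitivity \eqref{eq: online_time} I would apply \eqref{eq: der_et} with the event function $\mathcal{E}_k(y) = s^k$, so that $\nabla\mathcal{E}_k$ picks out the $s^k$-coordinate; then $\nabla\mathcal{E}_k\,\mu = \lambda(v^k_t)$ and $\nabla\mathcal{E}_k\,\partial y^{n-1} = \partial_{w_{ij}} s^k_t$, giving $\partial_{w_{ij}} t = -\partial_{w_{ij}} s^k_t / \lambda(v^k_t)$. It remains to integrate the $s^k$-sensitivity across the current inter-spike interval of neuron $k$. Because $s^k$ is pure drift ($ds^k = \lambda(v^k) dt$, continuous across spikes of \emph{other} neurons) its sensitivity obeys $\tfrac{d}{dr}\partial_{w_{ij}} s^k_r = \nabla\lambda(v^k_r) e_1^T \mathcal{G}^k_r$, while at the previous self-spike $t_{prev}$ the $s^k$-row of $\nabla\mathcal{T}$ vanishes (the reset $s^k \mapsto \log u - \alpha$ is constant), so \eqref{eq: der_succ_et} collapses to $\partial_{w_{ij}} s^k_{t_{prev}+} = -\lambda(v^k_{t_{prev}})\partial_{w_{ij}} t_{prev}$, reading $v^k_{t_{prev}}$ with the càdlàg (post-reset) convention. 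Integrating from $t_{prev}$ to $t$ and dividing by $\lambda(v^k_t)$ yields \eqref{eq: online_time}. The deterministic formula \eqref{eq: online_det} follows identically, now with $\mathcal{E}_k(y) = v^k - \psi$, so that $\nabla\mathcal{E}_k = e_1^T$ on the $(v^k, i^k)$ block, $\nabla\mathcal{E}_k\,\mu = \mu_1(i^k_t - v^k_t)$, and \eqref{eq: der_et} returns $-e_1^T \mathcal{G}^k_t / (\mu_1(i^k_t - v^k_t))$.

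The main obstacle is the careful bookkeeping around the event times: correctly separating the explicit parameter dependence of $\mathcal{T}$ from its state-Jacobian contribution, tracking which transition ($l = k$ versus $l \in \textnormal{Pa}_k$) is active, and — most delicately — handling the reset of $s^k$ together with the pre/post-spike conventions ($v^k_{t_{prev}}$ versus $v^k_{(t_{prev})-}$) that determine whether $\lambda$ is evaluated at the reset potential. Verifying that the $s^k$-row of $\nabla\mathcal{T}$ annihilates the accumulated gradient, leaving only the event-time correction, is precisely what produces the clean recursion in \eqref{eq: online_time} and must be checked component-by-component.
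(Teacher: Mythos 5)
Your proposal is correct and follows essentially the same route as the paper's proof: specialise \eqref{eq: der_et}--\eqref{eq: der_succ_et} to neuron \(k\)'s block, use block-diagonality of \(\nabla\mathcal{T}\) and the flow property to reduce to the last spike of a neuron in \(\textnormal{Pa}_k\cup\{k\}\), compute the drift mismatches \(\gamma_0,\gamma_1\) in the two cases \(l\in\textnormal{Pa}_k\) and \(l=k\) together with the explicit parameter contribution \(\delta_{il}\delta_{jk}e_2\), and identify the flow Jacobian \(e^{\Gamma(t-s)}\) from the linear (Ornstein--Uhlenbeck) inter-spike dynamics. If anything, your derivation of \eqref{eq: online_time} (integrating the \(s^k\)-sensitivity using the vanishing \(s^k\)-row of \(\nabla\mathcal{T}\) and the post-reset convention for \(v^k_{t_{prev}}\)) and of the deterministic formula \eqref{eq: online_det} is more explicit than the paper's, which simply asserts that these follow from \eqref{eq: der_et} and \(\mathcal{E}_k(y)=s^k\).
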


\begin{proof}
    Throughout we fix some \(t > 0\) and let \(s < t\) denote most recent event time preceding \(t\) with \(l\) the index of the neuron firing at time \(s\). We define the process \(dw_t = 0dt\) with \(w_0=w_{ij}\) and with a slight abuse of notation we shall write \(y^k_t = (v^k_t, i^k_t, s^k_t, w_t)\). We will leave out the event index \(n\) for notational simplicity. Since \(y^k_t\) depends on \(y_s\) only through \(y^k_s\) and \(\nabla \mathcal{T}_l\) is block diagonal, a direct consequence of eq. \eqref{eq: der_succ_et} is
    \[
    \mathcal{G}^k_t = \begin{pmatrix}I & 0\end{pmatrix} \partial_{y^k_s}y^k_t\left(\nabla\mathcal{T}_j^l(y^k_{s-})\partial_{w_{ij}}y^k_{s-} - \left(\mu(y^k_s) - \nabla\mathcal{T}^k_l(y^k_{s-})\mu(y^k_{s-})\right)\partial_{w_{ij}}s\right)
    \]
    where \(\mu(v, i, s, w) = (\mu_1(i - v), -\mu_2i, \lambda(v), 0)\). If \(l\in \textnormal{Pa}_k \cup \{k\}\), then \(\mathcal{T}^k_l = \text{id}\) and therefore \(\mathcal{G}^k_t = \begin{pmatrix} I & 0 \end{pmatrix}\partial_{y^k_s}y^k_t\partial_{w_{ij}}y^k_s\). One can then reapply the formula above until \(l \in \text{Pa}_k \cup \{k\}\). By the flow property, it follows that we may assume without loss of generality that \(l\in \text{Pa}_k \cup \{k\}\). This leaves us with two cases. We define \(z^k_t = (v^k_t, i^k_t)\) so that \(\begin{pmatrix} I & 0 \end{pmatrix}\partial_{y^k_s}y^k_t = \partial_{z^k_s}z^k_t\) and \(\partial_{w_{ij}}z^k_t = \mathcal{G}^k_t\). Furthermore, let \(a = \delta_{il}\delta_{jk}\).

    \emph{Case 1, \(l\in \textnormal{Pa}_k\):} In this case \(\mathcal{T}^k_l(v, i, s, w)=(v, i+ aw + (1-a)c, s, w)\) where \(c\) is a constant. As a result
    \begin{align*}
        \partial_{z^k_s}z^k_t \nabla\mathcal{T}^k_l(y^k_{s-})\partial y^k_{s-} &= \partial_{z^k_s}z^k_t\mathcal{G}^k_t + a\partial_{i^k_s}z^k_t, \\
        \partial_{z^k_s}z^k_t \left(\mu(y^k_s) - \nabla\mathcal{T}^k_l(y^k_{s-})\mu(y^k_{s-})\right) &= \partial_{z^k_s}z^k_t \gamma_0,
    \end{align*}
    In total,
    \[
    \mathcal{G}^k_t = \partial_{z^k_s}z^k_t\left(\mathcal{G}^k_t -  \gamma_0\partial_{w_{ij}}s + a e_2\right).
    \]

    \emph{Case 2, \(l = k\):} In this case \(\mathcal{T}^k_l(v, i, s, w)=(v - v_{reset}, i, \log u - \alpha, w)\) so that
    \begin{align*}
        \partial_{z^k_s}z^k_t \nabla\mathcal{T}^k_l(y^k_{s-})\partial y^k_{s-} &= \partial_{z^k_s}z^k_t\mathcal{G}^k_t, \\
        \partial_{z^k_s}z^k_t \left(\mu(y^k_s) - \nabla\mathcal{T}^k_l(y^k_{s-})\mu(y^k_{s-})\right) &= \partial_{z^k_s}z^k_t \gamma_1,
    \end{align*}
    and, thus,
    \[
    \mathcal{G}^k_t = \partial_{z^k_s}z^k_t\mathcal{G}^k_t - \partial_{z^k_s}z^k_t \gamma_0\partial_{w_{ij}}s.
    \]
    
    Note that \(z^k_t\) is an Ornstein-Uhlenbeck process initialized at \(z^k_s\) and with drift and diffusion matrices
    \[
    \Gamma = \begin{pmatrix}
        -\mu_1 & \mu_1 \\
        0 & -\mu_2
    \end{pmatrix}, \quad \Sigma = \begin{pmatrix}
        \sigma_1 & 0 \\
        0 & \sigma_2
    \end{pmatrix}.
    \]
    As a result, we can directly compute \(\partial_{z^k_s}z^k_t = e^{(t-s)\Gamma}\). This proves that eq. \eqref{eq: online_state} holds. Eq. \eqref{eq: online_time} then follows directly from \eqref{eq: der_et} and the fact that \(\mathcal{E}_k(y) = s^k\).
\end{proof}

From this the results of Section \ref{sec: online} follow since the terms \(\partial_{w_{ij}}s\) vanish whenever \(s\) is the spike time of a neuron \(l\) that is not a descendant of neuron \(j\). Thus, equation \eqref{eq: online_state} only includes terms depending on the activity of the pre and post-synaptic neuron. In particular, there is no need to store the gradient path \(\mathcal{G}^k_t\) for each combination of neuron \(k\) and synapse \(ij\), but each neuron only needs to keep track of the paths for its incoming synapses. This reduces the memory requirements from the order of \(K^3\) to only \(K^2\) (which is needed anyway to store the weight matrix). In general, the gradient paths can be approximated by simply omitting the terms \(\partial_{w_{ij}}s\).

\section{Experiments} \label{app: experiments}

\subsection{Input current estimation}

\footnote{For the exact details on the implementation, we refer to \url{https://github.com/cholberg/snnax}}For each combination of sample size and \(\sigma\) we sample a data set of spike trains using Algorithm \ref{alg: autodiff_erde} with \(N=3\), i.e., up until the first three spikes are generated. We use \texttt{diffrax} to solve the inter-Event SDE with a step size of \(0.01\) and the numerical solver is the simple Euler-Maruyama method. We then sample an initial guess \(c\sim \text{Unif}([0.5, 2.5])\) and run stochastic gradient descent using the approach described in \ref{sec: sig_kernel_mmd}. That is, for each step, we generate a batch of the same size as the sample size and use \(d_k\) to compare the generated batch to the data. For each step we also compare the absolute error between the average spike time of the first three spikes of the generated sample to a hold a out test set of the same size as the sample.  We use the RMSProp algorithm with a decay rate of 0.7 and a momentum of 0.3 which we found to work well in practice. The learning rate is 0.001. The experiment was run locally on CPU with an Apple M1 Pro chip with 8 cores and 32 GB of ram. The entire experiment took approximately 3-6 hours to run.

\subsection{Synaptic weight estimation}

\begin{figure}
    \centering
    \includegraphics[scale=.65]{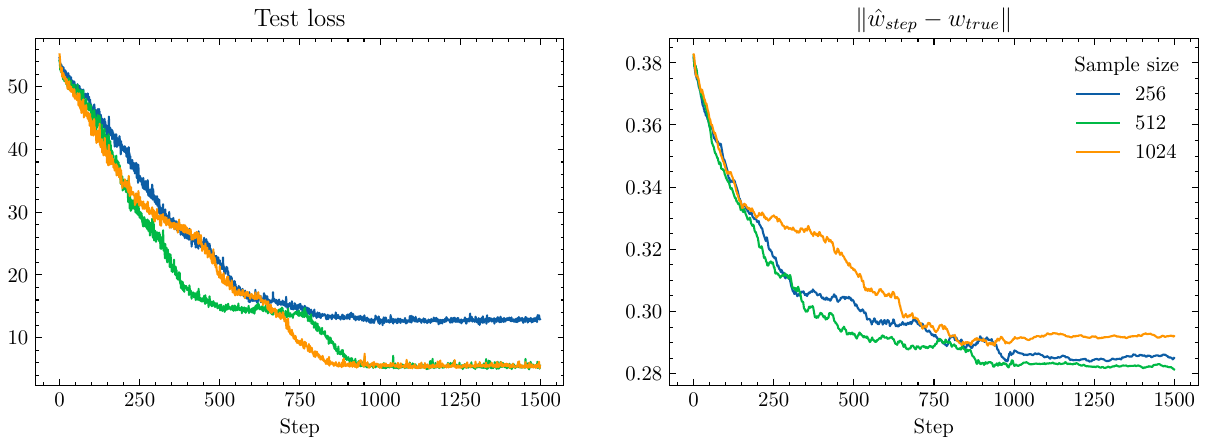}
    \caption{\small We estimate the synpatic weights \(w\) across three different sample sizes using the signature kernel MMD truncated at depth 3 and stochastic gradient descent with a batch size of 128. On the left we report the loss on a hold out test set. On the right is the mean absolute error between the entries of the currently estimated weight matrix \(\hat{w}_{step}\) and the true weight matrix \(w_{true}\).}
    \label{fig: network_neuron_results}
\end{figure}

As above, for each sample size \(D\in\{256, 512, 1024\}\) we sample a data set of spike trains using Algorithm \ref{alg: autodiff_erde} with \(T=1\) and with the same differential equation solver setup as above. Thus, in this case, the number of spikes varies across each sample path. The parameters are chosen as follows:
\begin{itemize}
    \item \(v_{reset} = 1.2\)
    \item \(\lambda(v) = \exp(5(v - 1)\)
    \item \(\mu = (6, 5)\)
    \item \(\sigma = I_2 / 4\)
\end{itemize}
For each sample size the data was generated using the same randomly sampled weight matrix \(w\) which represents a feed-forward network of the dimensions described in Section \ref{sec: training} and which was constructed as follows: for the weight matrix from layer \(l\) to layer \(l + 1\), say \(w^l\), we sample each entry from \(\text{Unif}([0.5, 1.5])\) and then normalize by \(3 / K_l\) where \(K_l\) is the number of neurons in layer \(l\). The normalisation makes sure that the spike rate for the neurons in each layer is appropriate.

For each data set (each sample size) we then train a spiking neural net of the same network structure to match the observed spike trains. This is done using stochastic gradient descent with a batch size of \(B=128\) and by computing \(d_k\) on a generated batch and a batch sampled from the data set at each step. In order to avoid local minimums\footnote{Note that the loss landscape is inherently discontinuous since whenever the parameters are altered in such a way that an additional spike appears, the expected signature will jump.} we match the number of spikes between the generated spike trains and the ones sampled from the data set. Also, we sample from the data set without replacement so that we loop through the whole data set every \(D / B\) steps. We run RMSProp for 1500 steps with a momentum of 0.3 and a learning rate of 0.003 for the first 1000 steps and 0.001 for the last 500 steps.

This experiment was run in the cloud using Azure AI Machine Learning Studio on a NVIDIA Tesla V100 GPU with 6 cores and 112 GB of RAM. The entire experiment took around 12-16 hours to run.

\end{document}